\documentclass{article}
\usepackage{iclr2027_conference,times}
\usepackage[utf8]{inputenc}
\usepackage[T1]{fontenc}
\usepackage{hyperref}
\usepackage{url}
\usepackage{booktabs}
\usepackage{amsfonts}
\usepackage{nicefrac}
\usepackage{microtype}
\usepackage{graphicx}
\usepackage{amsmath,amssymb,amsthm}
\usepackage{multirow}
\usepackage{float}
\usepackage{subcaption}
\usepackage{algorithm}
\usepackage{algorithmic}
\newcommand{\mab}[1]{\mathbb{#1}}
\newcommand{\mac}[1]{\mathcal{#1}}
\newtheorem{theorem}{Theorem}

\newtheorem{lemma}[theorem]{Lemma}
\newtheorem{assumption}{Assumption}[section]
\newtheorem{definition}{Definition}[section]

\theoremstyle{remark}
\newtheorem{remark}[theorem]{Remark}
\title{Joint Contrastive-Adversarial Representation Learning for
Unsupervised Domain Adaptation}
\hypersetup{
  pdftitle={Joint Contrastive-Adversarial Representation Learning for Unsupervised Domain Adaptation},
  pdfauthor={Yuling Jiao, Lican Kang, Jerry Zhijian Yang, Zhou Yu, Fusheng Zhou}
}
\author{
  Yuling Jiao \\
  \texttt{yulingjiaomath@whu.edu.cn}
  \And
  Lican Kang \\
  \texttt{kanglican@whu.edu.cn}
  \AND
  Jerry Zhijian Yang \\
  \texttt{zjyang.math@whu.edu.cn}
  \And
  Zhou Yu \\
  \texttt{zyu@stat.ecnu.edu.cn}
  \And
  Fusheng Zhou \\
  \texttt{zfs1720@whu.edu.cn}
}
\iclrfinalcopy
\begin{document}
\maketitle
\lhead{Preprint. Under review at ICLR 2027}
\begin{abstract}
Effective unsupervised domain adaptation (UDA) requires a representation that captures transferable structure 
from both the labeled source domain and the unlabeled target domain. We introduce Joint Contrastive-Adversarial 
Representation Learning (JCAR), a two-stage framework that learns a common representation from the union of 
unlabeled source and target samples by combining contrastive learning with adversarial domain alignment, 
and then freezes the representation to fit a source-supervised predictor. We establish a non-asymptotic 
excess-risk bound for this procedure that couples representation and domain-discriminator estimation while separating 
downstream prediction, with explicit neural-network approximation errors. Under the stated assumptions and common H\"older smoothness $\beta\geq1$,
the bound identifies a sufficiently large unlabeled-sample regime in which the downstream rate 
is $\widetilde{\mathcal{O}}(m^{-\beta/(2\beta+d)})$, where $m$ counts labeled source samples and logarithmic factors are suppressed. 
The labeled-sample exponent is governed by the latent dimension $d$ rather than the 
ambient dimension $p$, which enters the pretraining cost. Experiments on Office-31, Office-Home, and DomainNet demonstrate clear transfer 
gains across diverse domain shifts, with particularly strong performance when representations are 
learned from scratch.
\end{abstract}
\section{Introduction}
Domain adaptation (DA) \citep{pan2010} aims to transfer a predictor from a labeled source domain to an unlabeled target domain. 
UDA methods use target samples in different ways: for example, statistical matching aligns feature distributions \citep{gretton2012optimal}, 
reweighting corrects distribution shift \citep{kato2021non}, and adversarial methods train a feature extractor against a domain 
discriminator \citep{long2018conditional}. In many such methods, however, task-discriminative supervision is available only 
in the source domain, while target samples enter primarily through an alignment objective. 
This asymmetry motivates us to ask whether both domains can participate directly in unsupervised representation 
construction before the source-supervised predictor is fitted. Figure \ref{fig_rep1} contrasts the conventional 
source-supervised adaptation pipeline with the joint-domain representation-learning pipeline in Figure \ref{fig_rep2}.
\begin{figure}[ht]
\vspace{-0.1em}
    \centering
    \begin{subfigure}[t]{0.45 \linewidth}
        \centering
        \includegraphics[width=\textwidth]{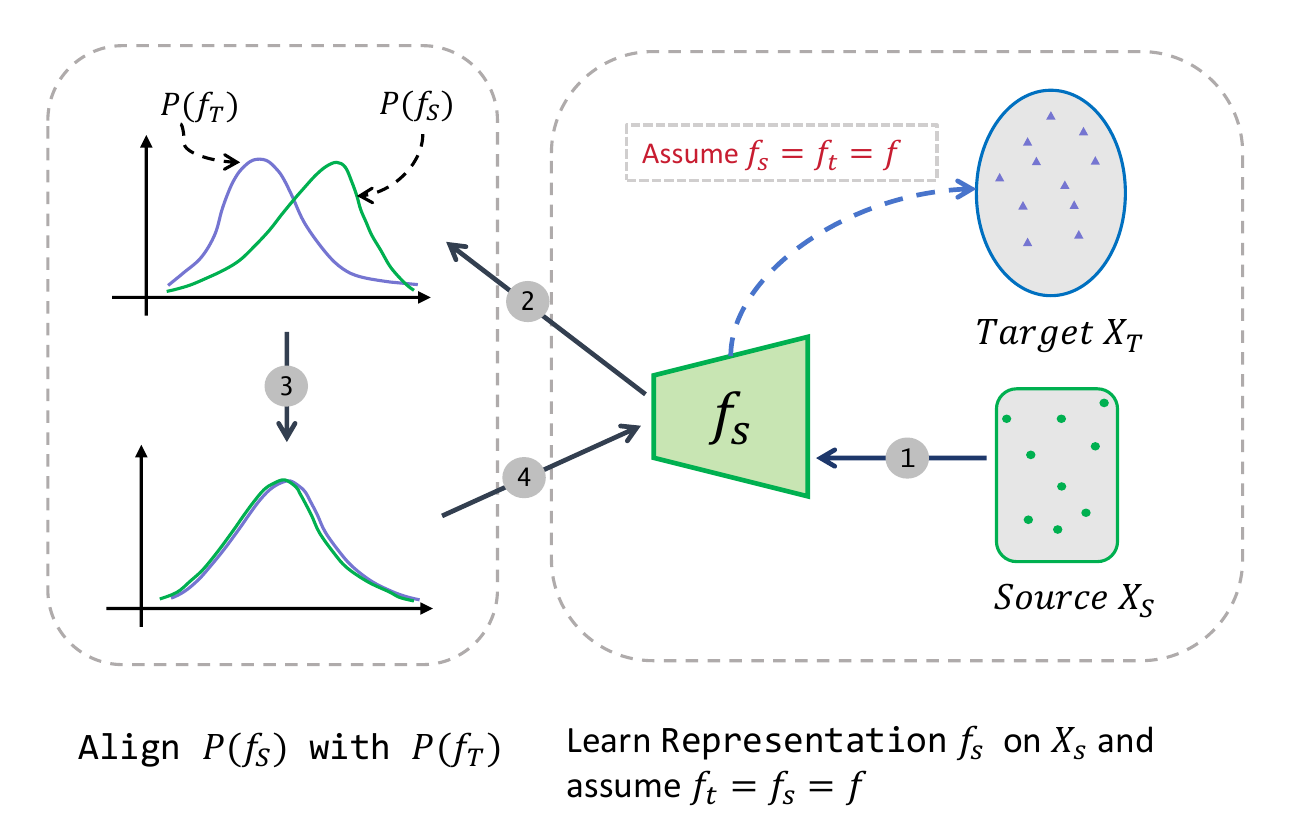}
        \caption{A conventional UDA representation-learning pipeline.}
        \label{fig_rep1}
    \end{subfigure}
    \hfill
    \begin{subfigure}[t]{0.45 \linewidth}
        \centering
        \includegraphics[width=\textwidth]{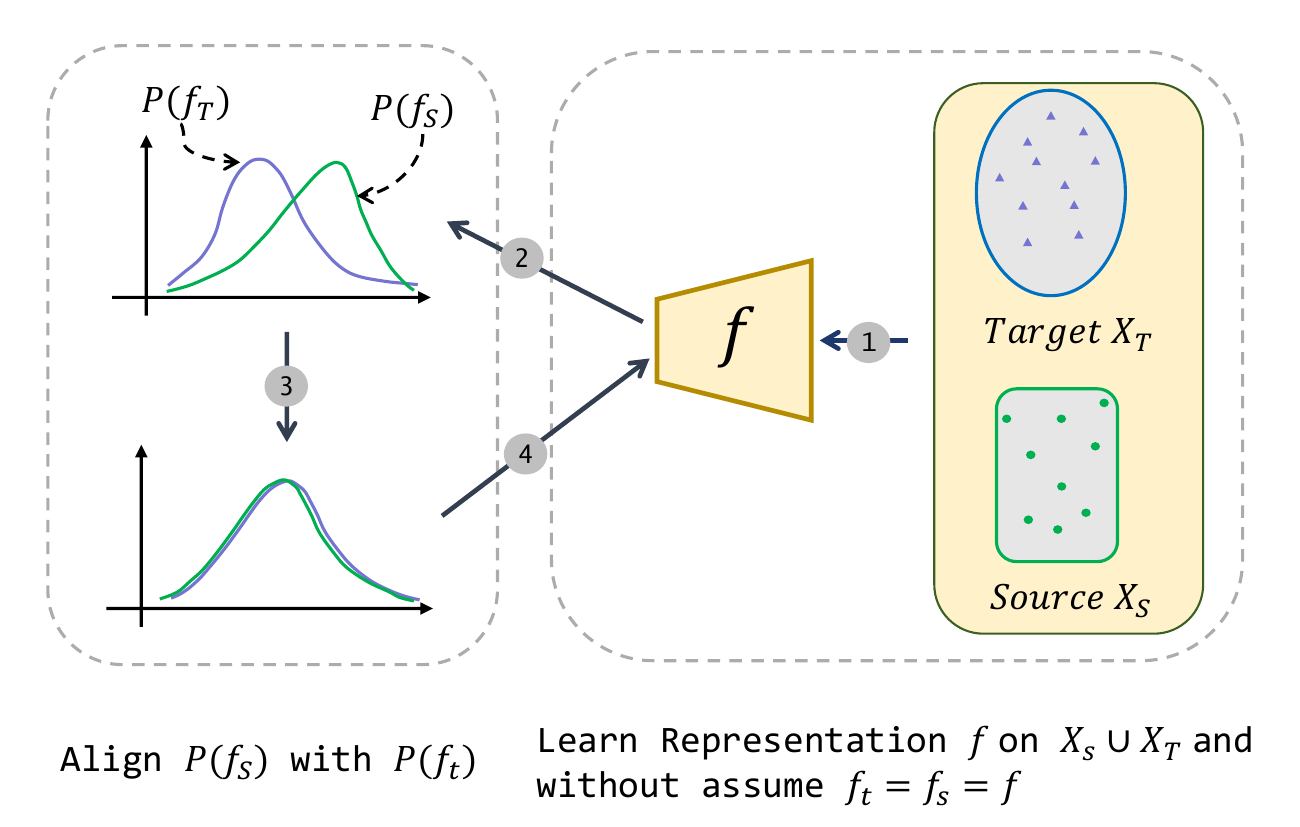}
        \caption{The proposed joint-domain representation-learning pipeline.}
        \label{fig_rep2}
    \end{subfigure}
\end{figure}
To address this question, we introduce Joint Contrastive-Adversarial Representation Learning (JCAR). 
In Stage 1, contrastive learning \citep{chen2020simple} allows unlabeled source and target samples to jointly shape the representation, 
while an adversarial domain objective explicitly reduces the distinguishability of their feature distributions. 
In Stage 2, the learned feature extractor is frozen and a predictor is trained on labeled source samples. 
This design exploits target-domain structure without target labels and cleanly separates task-agnostic representation learning from downstream prediction.
Our non-asymptotic analysis jointly controls representation learning and domain discrimination while separating their cost from downstream prediction. 
It explicitly balances neural-network approximation and statistical errors, 
linking the coupled complexity of contrastive--adversarial pretraining to downstream label efficiency. 
Under the theorem's assumptions, sufficiently many unlabeled samples yield a downstream polynomial rate governed by the representation dimension $d$, 
while the ambient dimension $p$ enters the pretraining cost.
Our contributions are summarized as follows:
\begin{itemize}
\item We formulate a two-stage UDA procedure in which unlabeled source and target samples jointly drive contrastive representation learning and explicit adversarial alignment. Freezing the learned extractor in the second stage decouples this task-agnostic representation objective from source-supervised prediction.
\item We derive a coupled pretraining rate by jointly balancing the statistical and approximation costs of the feature extractor and domain discriminator. Under the assumptions of Theorem \ref{theorem_main}, common H\"older smoothness $\beta\geq1$, and sufficiently many unlabeled samples, the excess-risk bound reduces to $\widetilde{\mathcal O}(m^{-\beta/(2\beta+d)})$, where $m$ counts labeled source samples and the tilde suppresses logarithmic factors. This quantifies the distinct roles of unlabeled data in high-dimensional feature learning and labels in low-dimensional prediction.
\item We evaluate JCAR under both random initialization and supervised pretraining on Office-31, Office-Home, and DomainNet. In the from-scratch setting, JCAR leads the ResNet group on four of six Office-31 transfers, and its ViT variant improves over ViT-B on four of six Office-31 and five of six Office-Home transfers; feature visualizations further illustrate the cross-domain alignment learned by the method.
\end{itemize}
\subsection{Organization}
Section~\ref{Perli} fixes the notation and basic concepts before JCAR is developed in Section~\ref{sec_Pre}. Its guarantees and empirical behavior are examined in Sections~\ref{sec_Error} and~\ref{sec_Numer}, respectively, while Section~\ref{sec_Conclu} closes with the main conclusions.
\section{Preliminaries}\label{Perli}
\subsection{Notations}
Let $\mac{X}$ and $\mac{Y}$ be the input and output spaces, respectively, where $\mac{X} \subset \mab{R}^p$ and $\mac{Y} \subset \mab{R}$.
Without loss of generality, we constrain $\mac{X}= [0,1]^p$ in this work. 
Let $\mac{S}$ and $\mac{T}$ be subsets of $\mac{X} \times \mac{Y}$, where $\mac{S}$ is the source domain and $\mac{T}$ is the target domain. Denote 
the source covariate--response pair by $(X^{\mac{S}},Y^{\mac{S}})\in\mac{X}\times\mac{Y}$ and its distribution by $P_S(X,Y)$. Similarly, $(X^{\mac{T}},Y^{\mac{T}})$ has target distribution $P_T(X,Y)$. The corresponding covariate marginals are $P_S(X)$ and $P_T(X)$. In the domain-adaptation setting, we assume $P_S(X)\neq P_T(X)$ and $P_S(Y\mid X)=P_T(Y\mid X)$.
 Let $\mac{M}_{f} :=\{f: \mab{R}^p \to \mab{R}^d\}$,  $\mac{M}_h :=\{h : \mab{R}^d \to \mab{R}\}$ and $\mac{M}_g :=\{g : \mab{R}^d \to \mab{R}\}$ be the
 H\"older classes  where $d < p$. Let
 $\mac{F}:=\{f: \mab{R}^p \to \mab{R}^d\}$, $\mac{H}:=\{h:\mab{R}^d \to \mab{R}\}$,
 and $\mac{G}:=\{g:\mab{R}^d \to \mab{R}\}$ denote the hypothesis function classes. In this work,  these functions are implemented as DNNs. Let $\|f\|_{L_q(\mu)}$ denote the $L_q$ norm of $f$ with respect to the probability measure $\mu$. 
For $x \in \mathbb{R}^p$, we write $\|x\|_q:=(\sum_{j=1}^p|x_j|^q)^{1/q}$ for $q \in [1,\infty)$.
Let $a, b >0$.
 We write 
 $a \lesssim b$ if there exists a constant $c>0$ such  that $a \leq cb$. For any $y\in \mab{R}$, we denote $\lfloor y \rfloor:= \max\{n \in \mab{Z}|n \leq y\}$, where $\mab{Z}$ is the integer set.
\begin{definition}[Contrastive loss]\label{cl_loss}
    Let $\{X_i\}_{i=1}^n$ be i.i.d. samples from $P_{\mac X}$ on $\mathcal X$. For each anchor $X_i$, let $X_i^+$ be an independently augmented view of the same underlying sample, and let $X_i^-$ be a view associated with a different sample drawn independently from $P_{\mac X}$.
    Let $f: \mab{R}^p \to \mab{R}^d$ be a representation function, where $d<p$. For the theoretical analysis, we use the following pairwise logistic contrastive loss with one sampled negative:
\begin{align*}
\ell_{cl}(X, X^+, X^-, f):= \log \left(1 + \exp\{f(X)^{\top} \big(f(X^-)-f(X^+)\big)\} \right).   
\end{align*}
To simplify notation, we write $\ell_{cl}(X,f)$ when the sampling rule for $(X^+,X^-)$ is clear. 
The experimental mini-batch objective extends this pairwise comparison by using other mini-batch samples as negatives (Appendix \ref{sup_expe}), while the expression above enables a tractable statistical analysis of its core contrastive mechanism. The empirical risk of $f$ is 
    \begin{align*}
    \hat{\mac{R}}_{cl}
        (f):= \frac{1}{n} \sum_{i = 1}^n 
        \log \left(1 + \exp\{f(X_i)^{\top} \big(f(X_i^-)-f(X_i^+)\big)\} \right).
    \end{align*}
\end{definition}
\begin{definition}[Domain discrimination loss]\label{da_loss}
Let $\{(X_k, Z_k)\}_{k = 1}^n$ be i.i.d. samples from a joint distribution $P_{\mu}(X,Z)$, where $Z_k$ is a domain label taking values in $\{-1,1\}$. Let $f: \mab{R}^p \to \mab{R}^d$ be a representation function, and $h: \mab{R}^d  \to \mab{R}$ be a domain discriminator.     
We define the domain-classification utility as the negative logistic loss:
   \begin{equation*}
        \ell_{d}(X, Z, f, h):= -\log(1 + \exp(-Z \cdot h\circ f(X))).
   \end{equation*}
The empirical risk of the pair $(f,h)$ is defined by
    \begin{equation*}
       \hat{\mac{R}}_{d}(f, h):= -\frac{1}{n}\sum_{k =1}^n \log(1 + \exp(-Z_k\cdot h\circ f(X_{k}))).
    \end{equation*}
Under the minimax objective \eqref{eq_f}, the discriminator $h$ maximizes $\hat{\mac R}_d$, 
equivalently minimizing the usual positive logistic loss, whereas the feature extractor $f$ minimizes the same utility 
to make the two domains harder to distinguish. This sign convention therefore implements adversarial feature alignment.
\end{definition}
\begin{definition}[Rademacher complexity]
Let $X_{1:n}:=\{X_i\}_{i =1}^n$ be $n$  i.i.d. random variables on $\mathcal X$, 
and let $\mathcal F$ be a set of measurable functions defined  on $\mathcal X$.   Let $
\zeta_{1:n}:=\{\zeta_i\}_{i =1}^n$ be $n$ i.i.d.  Rademacher random variables, each uniformly taking  values $-1$ or $1$ with probability $1/2$.
Then, the empirical Rademacher complexity of $\mathcal F$ is defined as
$$
\hat{\mathfrak{R}}_n(\mathcal{F})
= \mathbb{E}_{\zeta_{1:n}}\left[\sup_{f \in \mathcal{F}} \frac{1}{n}\sum_{i = 1}^n \zeta_i  f(X_i)  \right].
$$    
The Rademacher complexity of $\mac{F}$ is the expectation of $\hat{\mathfrak{R}}_{n}(\mac{F})$:
\begin{equation*}
   \mathfrak{R}_n(\mathcal{F}) = \mab{E}_{X_{1:n}}[\hat{\mathfrak{R}}_n(\mathcal{F})].
\end{equation*}
\end{definition}
\section{Problem Formulation}\label{sec_Pre}
JCAR has two stages: joint-domain representation learning (Section \ref{sec:rp}) and source-supervised predictor fitting on frozen features (Section \ref{sec:ds}).
The resulting predictor is then applied to the target domain.
\subsection{Representation learning}\label{sec:rp}
Our central objective is to learn representations that not only capture rich information from both the source and target domains but also exhibit domain invariance. 
To this end, we propose an unsupervised representation learning framework that integrates the principles of contrastive learning and domain discrepancy minimization, and we accordingly formulate a contrastive 
domain-adaptive loss to jointly enforce semantic consistency and 
domain alignment.
Unlike conventional domain-adaptation pipelines (Figure~\ref{fig_rep1}), which use source supervision to shape the representation and then align it with the target domain, our approach uses source and target samples jointly during representation learning (Figure~\ref{fig_rep2}). This directly incorporates target-domain structure into the learned representation.
We have a source domain dataset $D_{\mac{S}}:=\{X_i^{\mathcal{S}}\}_{i=1}^{n_s}$ i.i.d. drawn from $P_{S}(X)$, 
and a target domain dataset $D_{\mac{T}}:=\{X_j^{\mathcal{T}}\}_{j=1}^{n_t}$ i.i.d. drawn from $P_{T}(X)$. 
Let $n:=n_s+n_t$, and denote $\alpha_n:=\frac{n_s}{n}$.
Without loss of generality, we assume that $\alpha_n \rightarrow \alpha \in (0,1)$ as $n$ tends to $\infty$.
Accordingly, we introduce $X$ as a mixture of source and target covariates, and let $Z$ denote its pseudo label, with $Z = 1$ for $X = X^{\mac{S}}\sim P_S$ and $Z = -1$ for $X = X^\mac{T}\sim P_T$.
Precisely, the mixture $X$ can be formulated as
\begin{equation*}
X=\frac{1+Z}{2}X^\mac{S}+\frac{1-Z}{2} X^\mac{T},    
\end{equation*}
where $Z$ is a Rademacher-type domain indicator with $P(Z=1)=\alpha$ and $P(Z=-1)=1-\alpha$,
and is independent of  $X^\mac{S}$ and  $X^\mac{T}$.
In this case, we denote the joint distribution of $(X,Z)$ by
$P_{\mu}(X,Z)$,
whose marginal distribution of $X$ is $P_C(X)=\sum_{z\in \{-1,1\}}P(X\mid Z=z)P(Z=z)=\alpha P_S(X)+(1-\alpha)P_T(X)$.
Conditionally on an anchor $X=x$, a positive view is sampled from an augmentation kernel $P_A(\cdot\mid x)$, 
whereas a negative $X^-$ is associated with an independently drawn sample from $P_C$. 
Thus the joint sampling law used by Definition \ref{cl_loss} is $P_C(dx)P_A(dx^+\mid x)P_C(dx^-)$, 
rather than a product law that makes the positive view independent of its anchor.
Let $f: \mab{R}^p \to \mab{R}^d$ be a representation function,  and let $h: \mab{R}^d \to \mab{R}$ be a domain discriminator.
The contrastive domain-adaptive loss $\ell_{ad}(X, Z, f,h)$ is
defined as 
\begin{equation}\label{clda_loss}
        \ell_{ad}(X, Z, f, h):= \ell_{cl}(X, f) +  \ell_{d}(X, Z, f, h),
    \end{equation} 
where $\ell_{cl}$ is the contrastive loss (see Definition \ref{cl_loss}), and  $\ell_{d}$ is the domain discrimination loss (see Definition \ref{da_loss}). 
Let $D_{J}:=\{(X_k, Z_k)\}_{k = 1}^n$ be i.i.d. paired samples drawn from $P_{\mu}(X,Z)$. Let $D_{C}:=\{X_1,\ldots,X_n\}$ denote the covariates and let $D_{\bar{C}}:=\{(X_i,X_i^+,X_i^-)\}_{i=1}^n$.
In this work, we assume by convention that $D_{C}=D_{\mathcal S} \bigcup D_{\mathcal T}$.
The  empirical risk associated with the contrastive domain-adaptive loss is then given by 
\begin{equation}\label{eq_risk_adloss}
     \hat{\mac{R}}_{ad}(f,h)
     :=\hat{\mac{R}}_{cl}(f) +  \hat{\mac{R}}_{d}(f,h)
     = \frac{1}{n}\sum_{k = 1}^n \ell_{ad}(X_k, Z_k, f, h). 
 \end{equation}
Accordingly, the estimated representation function is defined as the solution to the following minimax problem:
\begin{equation}\label{eq_f}
       \hat{f} 
       \in \arg\min_{f \in \mac{F}} \max_{h \in \mac{H}} \hat{\mac{R}}_{ad}(f, h), 
\end{equation}
where $\mathcal F$ and $\mathcal H$ are the norm-constrained DNNs.
\subsection{Downstream tasks}\label{sec:ds}   
After obtaining the representation function $\hat{f}$  in \eqref{eq_f}, we proceed to learn a predictor for the downstream task. 
Owing to this 
domain-invariance property, the predictor can be effectively trained on the source domain, as   established in Lemma \ref{lem_risk4ST}.
In this work, we focus on binary classification as a representative example; however, the framework can be readily extended to other supervised learning tasks. 
Given $m$  labeled source samples
$\bar{D}_{\mac{S}} :=\{(X_j^{\mathcal{S}}, Y_j^{\mathcal{S}})\}_{j=1}^m$ with $Y_j^{\mathcal{S}} \in \{-1,1\}$,
the predictor $\hat{g}$ in \eqref{def_g} is trained under the empirical risk minimization (ERM) framework by combining the representation $\hat{f}$ with the margin-logistic loss:
\begin{equation}\label{def_g}
        \hat{g} \in \arg\min_{g \in \mac{G}} \frac{1}{m}\sum_{j =1 }^m 
         \ell(g\circ\hat{f}(X^\mac{S}_j), Y^\mac{S}_j),
\end{equation}   
where 
$\mac{G}$
is a class of DNNs mapping $\mab{R}^d$ to $\mab{R}$.
Here, $\ell(a,y):=\log(1+\exp(-ya))$ is the margin-logistic loss for a score $a\in\mathbb R$ and a signed label $y\in\{-1,1\}$. 
Since $|\partial \ell(a,y)/\partial a|=1/(1+\exp(ya))\leq 1$, the loss is $1$-Lipschitz in its score argument, uniformly over $y$. 
The complete procedure is the two-stage algorithm in Appendix \ref{sec:pro}: first estimate $\hat f$ using unlabeled source and target samples, 
then freeze $\hat f$ while fitting $\hat g$ on labeled source samples.
\section{Error Analysis}\label{sec_Error}
We bound the excess risk of the two-stage estimator $(\hat f,\hat g)$ through its representation and prediction errors. 
To connect source-supervised fitting with target prediction, we first consider the distributional conditions for 
transferring risk guarantees in the representation space.
\begin{lemma}\label{lem_risk4ST}
Let $f$ denote the common representation function for the source and target domains, and $\mac{M}_g$ be the H\"older class.
Let $U^\mac{S}:= f(X^\mac{S})$ and $U^\mac{T}:= f(X^\mac{T})$. Assume that $P_{S}(U^\mac{S})=P_{T}(U^\mac{T})$ and $P_{S}(Y^\mac{S}\mid U^\mac{S})=P_{T}(Y^\mac{T}\mid U^\mac{T})$.
Denote    
    \begin{equation*}
    \begin{split}
     g^{**} \in \arg\min_{g \in \mac{M}_g}\mab{E}_{P_{T}(U^\mac{T}, Y^\mac{T})}[\ell(g(U^\mac{T}), Y^\mac{T})],\quad
      g^{*} \in \arg\min_{g \in \mac{M}_g}\mab{E}_{P_{S}(U^\mac{S}, Y^\mac{S})}[\ell(g(U^\mac{S}), Y^\mac{S})],     
    \end{split}
    \end{equation*}
    then $g^{**} = g^*$.
\end{lemma}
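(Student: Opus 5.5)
The plan is to show that the two population objectives defining $g^{**}$ and $g^{*}$ are the same functional on $\mac{M}_g$. Then their argmin sets coincide, and the claimed equality $g^{**}=g^*$ follows, in the sense that any minimizer of one is a minimizer of the other.

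\textbf{Step 1: equality of joint laws.} Disintegrate each joint law of $(U,Y)$ into a marginal of $U$ and a conditional of $Y$ given $U$:
\begin{equation*}
P_S(U^\mac{S},Y^\mac{S}) = P_S(U^\mac{S})\,P_S(Y^\mac{S}\mid U^\mac{S}), \qquad
P_T(U^\mac{T},Y^\mac{T}) = P_T(U^\mac{T})\,P_T(Y^\mac{T}\mid U^\mac{T}).
\end{equation*}
Both assumptions of the lemma are used here. Marginal alignment, $P_S(U^\mac{S})=P_T(U^\mac{T})$, is the property the adversarial term $\ell_d$ in \eqref{eq_f} is meant to enforce. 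Conditional invariance gives $P_S(Y^\mac{S}\mid U^\mac{S})=P_T(Y^\mac{T}\mid U^\mac{T})$. Together they show that $(U^\mac{S},Y^\mac{S})$ and $(U^\mac{T},Y^\mac{T})$ have the same law on $\mab{R}^d\times\{-1,1\}$. Because $Y\in\{-1,1\}$, the conditional law at each $u$ is determined by $\eta(u)=P(Y=1\mid U=u)$, so the computation amounts to checking that, for every Borel set $B\subset\mab{R}^d$ and each $y\in\{-1,1\}$, the quantity $\int_B P(Y=y\mid U=u)\,dP(u)$ is the same for both domains.

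\textbf{Step 2: equality of risks.} For each $g\in\mac{M}_g$, the map $(u,y)\mapsto \ell(g(u),y)=\log(1+e^{-yg(u)})$ is measurable and nonnegative. Its expectation is therefore well defined, possibly $+\infty$, and it is determined by the joint law. Step 1 then gives
\begin{equation*}
\mab{E}_{P_T}[\ell(g(U^\mac{T}),Y^\mac{T})]=\mab{E}_{P_S}[\ell(g(U^\mac{S}),Y^\mac{S})]
\end{equation*}
for all $g\in\mac{M}_g$. On a bounded domain the Hölder functions are bounded, and then both sides are finite.

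\textbf{Step 3: the argmins and the expected obstacle.} Because the two risks agree pointwise on $\mac{M}_g$, their argmin sets are identical. I therefore expect the main obstacle to be interpreting the conclusion $g^{**}=g^*$ rather than any calculation. If the minimizer is unique, the equality is literal. In general, the logistic loss is strictly convex in the score, which forces minimizers to agree $P_S(U)$-almost everywhere, with $g=\log(\eta/(1-\eta))$ where that function lies in the class. I will therefore state the conclusion as equality of argmin sets, equivalently equality up to $P_S(U^\mac{S})$-null sets, and note that choosing the same selection from this common set yields literally $g^{**}=g^*$.
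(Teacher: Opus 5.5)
Your proposal is correct and follows essentially the same route as the paper. The paper factors the target joint law as $P_T(Y^{\mathcal T}\mid U^{\mathcal T})P_T(U^{\mathcal T})$ and substitutes the two assumed equalities, so the target risk equals the source risk for every $g$, and it then concludes $g^{**}=g^*$. Your extra care about measurability, finiteness, and reading the conclusion as equality of argmin sets only makes explicit what the paper's one-line conclusion leaves implicit; note that your a.e.-uniqueness remark via strict convexity of the logistic loss also uses the convexity of the H\"older ball $\mathcal M_g$.
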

When the source and target feature marginals and conditional label distributions coincide, 
their prediction risks agree for every predictor in the common representation space. 
This risk identity transfers source excess-risk guarantees to the target domain and provides the distributional 
basis for source-supervised downstream fitting.
We next introduce the notation used in the analysis. As defined in Section~\ref{sec:rp}, $P_{\mu}(X,Z)$ is the joint distribution of the mixture covariate $X\sim P_C$ and its domain label $Z$. Recalling the definitions of contrastive 
domain-adaptive loss $\ell_{ad}$ in  \eqref{clda_loss} and its empirical risk $\hat{\mac{R}}_{ad}(f, h)$ in \eqref{eq_risk_adloss}, we define the 
population-level 
contrastive domain-adaptive risk $\mathcal{R}_{ad}(f,h)$ as
\begin{equation*}
\begin{split}
\mac{R}_{ad}(f,h) := \mab{E}_{{P_{\mu},P_A }}[\ell_{ad}(X, Z, f, h)]= \mab{E}_{{P_{C}}\times P_A}[\ell_{cl}(X, f)] +  \mab{E}_{P_{\mu}}[\ell_{d}(X, Z, f, h)].    
\end{split}
\end{equation*}
Let $\mac{M}_f$ and $\mac{M}_h$ be the H\"older classes. We denote $\mac{R}_{cl}(f):=\mab{E}_{{P_{C}}\times P_A}[\ell_{cl}(X, f)]$ and $\mac{R}_{d}(f,h):=\mab{E}_{P_{\mu}}[\ell_{d}(X, Z, f, h)]$, and define the representation function $f^*$ at population level: 
\begin{equation*}
    f^* \in \arg\min_{f \in \mac{M}_f}\max_{h \in \mac{M}_h} \mac{R}_{ad}(f,h). 
\end{equation*}
To simplify the notations, we define
\begin{equation}\label{minmax_risk}
\begin{split}
\mac{R}_{\mac{M}_h}(f):= \sup_{h \in \mac{M}_h} \mac{R}_{ad}(f, h), \quad \mac{R}_{\mac{H}}(f):= \sup_{h \in \mac{H}} \mac{R}_{ad}(f, h), \quad
\hat{\mac{R}}_{\mac{H}}(f):=\sup_{h \in \mac{H}}\hat{\mac{R}}_{ad}(f,h).  
\end{split}
\end{equation}
Consequently, we have
\begin{equation*}
f^* \in \arg\min_{f \in \mac{M}_f} \mac{R}_{\mac{M}_h}(f), \quad \hat{f} \in \arg\min_{f \in \mac{F}} \hat{\mac{R}}_{\mac{H}}(f).
\end{equation*}
By Lemma \ref{lem_risk4ST}, we now consider the excess risk  on the source domain between $\mac{R}_{\hat{f}}(\hat{g})$ and $\mac{R}_{f^*}(g^*)$, i.e., 
\begin{align*}
\mac{E}_{\hat{f}}(\hat{g}):=\mac{R}_{\hat{f}}(\hat{g}) - \mac{R}_{f^*}(g^*) 
 =  \mab{E}_{P_{S}(X,Y)}[\ell(\hat{g}\circ \hat{f} (X), Y)] - \mab{E}_{P_{S}(X,Y)}[\ell(g^*\circ f^*(X), Y)].   
\end{align*}
To establish the error bound of $\mac{E}_{\hat{f}}(\hat{g})$, we require the following  regularity assumptions:
\begin{assumption}[Smoothness and bounded representations]\label{assum_holder}
The ideal classes are the H\"older balls of Definition~\ref{def_holderclass}, with fixed radii $B_f,B_h,B_g$ 
and smoothness $\beta_f,\beta_h,\beta_g>0$. The class $\mathcal M_f$ has $d$ outputs; $\mathcal M_h,\mathcal M_g$ are scalar-valued on $[0,1]^d$. Inputs and augmented views lie in $[0,1]^p$. Both $f^*$ and all $f\in\mathcal F$ take values in $[0,1]^d$, so $\|f(x)\|_2\leq R:=\sqrt d$, independently of norm budgets.
Write $\beta_j=\bar\beta_j+\tilde r_j$ with $\bar\beta_j=\lceil\beta_j\rceil-1$, $\tilde r_j\in(0,1]$, and put $\rho_h=\min\{\beta_h,1\}$.
\end{assumption}
The range restriction specifies the theoretical network class. Clipping gives bounded ReLU approximants 
with constant-factor norm overhead and one extra hidden layer (Lemma~\ref{lem_norm_constrain_appro}); 
this does not assert that experimental backbones implement clipping.
\begin{assumption}[Calibration]\label{assum_rep}
For the selected estimator $\hat f$, almost surely,
\[
\mathbb E_{P_C}\|\hat f(X)-f^*(X)\|_2^{\tilde r_g}
\leq C_{\rm cal}\{\mathcal R_{\mathcal M_h}(\hat f)-\mathcal R_{\mathcal M_h}(f^*)\},
\]
where $C_{\rm cal}$ is independent of sample sizes and network budgets and $\tilde r_g$ is as in Assumption~\ref{assum_holder}. 
The comparison uses a fixed representation-coordinate convention.
\end{assumption}
\begin{remark}
Assumption \ref{assum_holder} specifies H\"older regularity, a standard framework for nonparametric approximation \citep{gyorfi2002distribution, wasserman2006all}. 
The ideal functions need not belong to the chosen neural-network classes. Our analysis retains the resulting approximation errors and balances them against statistical complexity, making network capacity an explicit part of the convergence guarantee.
Assumption \ref{assum_rep} converts excess pretraining risk into representation-error control for the selected estimator, 
linking the contrastive--adversarial objective to the downstream prediction bound.
More generally, a calibration condition can be expressed as 
$\mathbb{E}_X G(f(X) - f^*(X)) \lesssim \mac{R}_{\mac{M}_h}(f) - \mac{R}_{\mac{M}_h}(f^*)$, where $G:\mathbb{R}^d \longrightarrow \mathbb{R}^+$.
This inequality asserts that the estimation error is controlled by the excess risk. In our setting, we take
$G(t)=\|t\|_2^{\tilde{r}_g}$, $t \in \mathbb{R}^d$.
Calibration conditions  are widely used in the theoretical analysis of quantile regression 
\citep{belloni2011,madrid2022risk} and classification \citep{bartlett2006convexity}. For example, 
in quantile regression one often considers
$
G(t)=\min\{|t|,|t|^2\}, t \in \mathbb{R}
$, 
while in classification, surrogate losses such as the logistic loss correspond to  $G(t)=|t|^2$,
and the hinge loss corresponds to $G(t)=|t|$, $t \in \mathbb{R}$.
\end{remark}
\subsection{Non-asymptotic error bounds }\label{sec_consis}
In this section, we present the 
non-asymptotic error bounds for the excess risk.
We define the hypothesis spaces $\mac{F}$, $\mac{H} $,
and $\mac{G}$ with the following characteristics:
\begin{enumerate}
\item[(a).] Let $\mac{F} := \mac{NN}(\mac{W}_f, \mac{D}_f, \mac{K}_f)$ be the norm-constrained DNNs with width
$\mathcal W_f\gtrsim d\mathcal K_f^{(2p+\beta_f)/(2p+2)}$, depth
$\mathcal D_f\geq2\lceil\log_2(p+\bar\beta_f)\rceil+3$, and norm constraint $\mac{K}_f \asymp n^{\beta_h(p+1)/(2\beta_{f}\beta_h +2\beta_f(d+1) +2\beta_h(p+1))}$,
\item[(b).] Let $\mac{H}:= \mac{NN}(\mac{W}_h, \mac{D}_h, \mac{K}_h)$
    be the norm-constrained DNNs with width
$\mathcal W_h\gtrsim\mathcal K_h^{(2d+\beta_h)/(2d+2)}$, depth
$\mathcal D_h\geq2\lceil\log_2(d+\bar\beta_h)\rceil+2$, and norm constraint $\mac{K}_h \asymp n^{\beta_f(d+1)/(2\beta_{f}\beta_h +2\beta_f(d+1) +2\beta_h(p+1))}$
\item[(c).] Let $\mac{G}:= \mac{NN}(\mac{W}_g, \mac{D}_g, \mac{S}_g)$ be ReLU DNNs with width $\mac{W}_g \geq m^{d/4(2\beta_{g} +d)}\log(m)$, 
depth $\mac{D}_g \geq m^{d/4(2\beta_{g} +d)}\log(m)$, and size  $\mac{S}_g \geq m^{3d/4(2\beta_{g} +d)}\log^3(m)$.
\end{enumerate}
\begin{theorem}\label{theorem_main}
Under Assumptions~\ref{assum_holder}--\ref{assum_rep} and network choices (a)--(c), 
the expected source and target excess risks coincide and satisfy
\begin{align}\label{eqeb}
    \begin{split}
    \mab{E}_{{D_{\bar{C}}},D_{J}, \bar{D}_{\mac{S}}}\big[\mac{E}_{\hat{f}}(\hat{g}) \big] \leq \mac{O}\left(
      {n}^{-\frac{\beta_f \beta_h}{2\beta_f\beta_h +2(p+1)\beta_h +2(d+1)\beta_f}}\log(n)\right) 
      +\mac{O}\left(m^{-\frac{\beta_g}{2\beta_{g} + d}}\right).    
    \end{split}
    \end{align}
Moreover, suppose $\beta_f = \beta_h = \beta_g = \beta$
and let
$n\geq m^{2(\beta+p+d+2)/(2\beta+d)}$, then 
\begin{equation*}
\begin{split}
  \mab{E}_{{D_{\bar{C}}},D_{J}, \bar{D}_{\mac{S}}}\big[\mac{E}_{\hat{f}}(\hat{g})\big] &\leq  \mac{O}\left(m^{-\frac{\beta}{2\beta + d}}\right).
\end{split}
\end{equation*}
\end{theorem}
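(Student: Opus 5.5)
We bound $\mathcal E_{\hat f}(\hat g)$ by splitting it into a downstream prediction error at frozen $\hat f$ and a representation error. First, Lemma~\ref{lem_risk4ST} gives that the source and target prediction risks coincide for every predictor on the common representation space. Hence the two expected excess risks agree, and it is enough to work on the source domain. We then write
\[
\mathcal E_{\hat f}(\hat g)=\big[\mathcal R_{\hat f}(\hat g)-\inf_{g\in\mathcal M_g}\mathcal R_{\hat f}(g)\big]+\big[\inf_{g\in\mathcal M_g}\mathcal R_{\hat f}(g)-\mathcal R_{f^*}(g^*)\big].
\]

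\textbf{Representation term.} We bound the second bracket by $\mathcal R_{\hat f}(g^*)-\mathcal R_{f^*}(g^*)$. The loss $\ell$ is $1$-Lipschitz in the score, and $g^*\in\mathcal M_g$ is H\"older of order $\tilde r_g$ on $[0,1]^d$. So this difference is at most $B_g\,\mathbb E_{P_S}\|\hat f(X)-f^*(X)\|_2^{\tilde r_g}$. Since $P_S\le \alpha^{-1}P_C$, this is at most $\alpha^{-1}B_g\,\mathbb E_{P_C}\|\hat f-f^*\|_2^{\tilde r_g}$. By Assumption~\ref{assum_rep} it is then controlled by $\mathcal R_{\mathcal M_h}(\hat f)-\mathcal R_{\mathcal M_h}(f^*)$.

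We decompose this minimax excess risk in the usual way:
\[
\mathcal R_{\mathcal M_h}(\hat f)-\mathcal R_{\mathcal M_h}(f^*)\le \underbrace{\sup_{f}|\mathcal R_{\mathcal M_h}(f)-\mathcal R_{\mathcal H}(f)|}_{\text{disc. approx.}}+2\underbrace{\sup_{f\in\mathcal F,h\in\mathcal H}|\hat{\mathcal R}_{ad}-\mathcal R_{ad}|}_{\text{statistical}}+\underbrace{\mathcal R_{\mathcal H}(\bar f)-\mathcal R_{\mathcal M_h}(f^*)}_{\text{extractor approx.}},
\]
where $\bar f\in\mathcal F$ is a network approximant of $f^*$.

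\begin{itemize}
\item \emph{Extractor approximation.} Because $\ell_{cl}$ and $\ell_d$ are Lipschitz in $f$ on the bounded range $[0,1]^d$, and $h$ is $\rho_h$-H\"older, this term is of order $\|\bar f-f^*\|_\infty^{\rho_h}$.
\item \emph{Discriminator approximation.} This term is of order the sup-norm error of approximating $\mathcal M_h$ by $\mathcal H$.
\end{itemize}
The norm-constrained approximation results (Lemma~\ref{lem_norm_constrain_appro} and its companions) give errors $\mathcal K_f^{-\beta_f/(p+1)}$ and $\mathcal K_h^{-\beta_h/(d+1)}$ under the stated widths and depths.

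\textbf{Statistical term.} Symmetrization and vector-contraction for the Lipschitz losses give a bound by Rademacher complexities of norm-constrained networks. These scale as $\mathcal K\sqrt{\log n/n}$ up to depth constants, roughly $(\mathcal K_f\,\mathcal K_h)\sqrt{\log n}/\sqrt n$, because the composite $h\circ f$ has norm product structure.

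The contrastive term needs care. The summands involve triples $(X_i,X_i^+,X_i^-)$ that are i.i.d. across $i$ by the sampling law $P_C P_A P_C$. So symmetrization applies to the triples, and the bilinear form $f(X)^\top(f(X^-)-f(X^+))$ is Lipschitz in $f$ thanks to the bound $R=\sqrt d$.

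\textbf{Balancing (main obstacle).} Balancing $\mathcal K_f^{-\beta_f/(p+1)}$, $\mathcal K_h^{-\beta_h/(d+1)}$ and $\mathcal K_f\mathcal K_h n^{-1/2}$ with the choices in (a)--(b) yields the exponent $\beta_f\beta_h/(2\beta_f\beta_h+2(p+1)\beta_h+2(d+1)\beta_f)$ with a $\log n$ factor. I expect this step to be the delicate one: getting the coupled complexity bound exactly in product form, and the H\"older exponent of $h$ composed with approximate $f$, consistent with this exponent. I would first try to verify that the equal-error choice reproduces the stated $\mathcal K_f,\mathcal K_h$.

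\textbf{Downstream term.} With $\hat f$ frozen and independent of $\bar D_{\mathcal S}$, we argue conditionally on $\hat f$ and apply a standard ERM bound for ReLU networks $\mathcal G$ of the given size on $[0,1]^d$:
\begin{itemize}
\item stochastic error $\lesssim\sqrt{\mathcal S_g\mathcal D_g\log \mathcal S_g/m}$, via a pseudo-dimension bound;
\item approximation error $\lesssim(\mathcal W_g\mathcal D_g)^{-2\beta_g/d}$, via the Lu--Shen--Yang--Zhang rates.
\end{itemize}
The choices in (c) balance these to $m^{-\beta_g/(2\beta_g+d)}$, with logs absorbed as stated. One subtlety is that the infimum over $\mathcal M_g$ at $\hat f$ must be approximated uniformly in $\hat f$. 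This holds because the input always lies in $[0,1]^d$.

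Adding the two terms gives \eqref{eqeb}. For the corollary, set $\beta_f=\beta_h=\beta_g=\beta$. The first exponent becomes $\beta/(2\beta+2(p+d+2))$. The condition $n\ge m^{2(\beta+p+d+2)/(2\beta+d)}$ makes $n^{-\beta/(2(\beta+p+d+2))}\le m^{-\beta/(2\beta+d)}$. The remaining $\log n$ factor is absorbed into the $\widetilde{\mathcal O}$ convention, or handled by slightly strengthening the condition on $n$.
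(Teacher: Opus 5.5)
Your outline follows the paper's proof closely. It uses the same split into predictor estimation error, predictor approximation error and representation error. It transfers the representation error to $\mathcal R_{\mathcal M_h}(\hat f)-\mathcal R_{\mathcal M_h}(f^*)$ the same way, via the Lipschitz loss, the H\"older modulus of $g^*$, $P_S\le\alpha^{-1}P_C$ and Assumption~\ref{assum_rep}. The statistical/approximation split of the pretraining excess risk and the balancing are also the same.

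The one real difference is the composite class. You treat $h\circ f$ as a single norm-constrained network of depth $\mathcal D_f+\mathcal D_h$ with budget $\lesssim\mathcal K_f\mathcal K_h$ and apply the Golowich-type bound directly. The paper instead composes covers of $\mathcal F$ and $\mathcal H$ through the Lipschitz constant of $\mathcal H$, then uses Sudakov minoration and a Gaussian--Rademacher comparison. Your route is shorter, provided you check that merging the last linear layer of $f$ into the first layer of $h$ (and the clipping layer) respects the $\max\{\cdot,1\}$ factors in Definition~\ref{def_norm_network}.

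The balancing you flagged does close. With $D=2\beta_f\beta_h+2\beta_f(d+1)+2\beta_h(p+1)$, $\mathcal K_f=n^{\beta_h(p+1)/D}$ and $\mathcal K_h=n^{\beta_f(d+1)/D}$, each of $\mathcal K_f^{-\beta_f/(p+1)}$, $\mathcal K_h^{-\beta_h/(d+1)}$ and $\mathcal K_f\mathcal K_h n^{-1/2}$ equals $n^{-\beta_f\beta_h/D}$. You are also right that the $\log n$ in the corollary forces an $\widetilde{\mathcal O}$ reading, or a slightly stronger condition on $n$. The same holds for the polylogarithmic factors in $m$ produced by choice (c); the paper drops both.

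Two steps do not close as written, and the paper's proof passes over the same two points.

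\emph{The $\rho_h$ exponent.} Your extractor bound $\|\bar f-f^*\|_\infty^{\rho_h}$ gives $\mathcal K_f^{-\rho_h\beta_f/(p+1)}=n^{-\rho_h\beta_f\beta_h/D}$. Rebalancing therefore yields the stated exponent only with $\beta_f$ replaced by $\rho_h\beta_f$. As stated, the theorem needs $\beta_h\ge1$, which the paper assumes tacitly by calling $h$ Lipschitz.

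\emph{The discriminator comparison (more serious).} In $\mathcal R_{\mathcal H}(\bar f)-\mathcal R_{\mathcal M_h}(f^*)$, the supremum at $\bar f$ is over network discriminators and the one at $f^*$ is over the H\"older ball. So the H\"older modulus of elements of $\mathcal M_h$ does not apply. You first need $\mathcal R_{\mathcal H}(\bar f)\le\mathcal R_{\mathcal M_h}(\bar f)$ up to a term of the target order, i.e.\ that no network in $\mathcal H$ discriminates much better than the best element of $\mathcal M_h$.
\begin{itemize}
\item Lemma~\ref{lem_norm_constrain_appro} approximates $\mathcal M_h$ by $\mathcal H$, which is the opposite direction.
\item Working inside $\mathcal H$ instead costs its Lipschitz constant $\lesssim\mathcal K_h$, i.e.\ $\mathcal K_h\mathcal K_f^{-\beta_f/(p+1)}=n^{\beta_f(d+1-\beta_h)/D}$. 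This does not even vanish unless $\beta_h>d+1$, and it still leaves $\sup_{\mathcal H}$ and $\sup_{\mathcal M_h}$ to be compared at $f^*$.
\end{itemize}
A class with growing budget targets the unrestricted optimal discriminator, while $f^*$ is defined against a fixed H\"older ball. So this step needs an extra hypothesis, e.g.\ $\mathcal H\subseteq\mathcal M_h$, or that the Bayes log-odds discriminator of the relevant representations lies in $\mathcal M_h$. Either gives $\mathcal R_{\mathcal H}(f)\le\mathcal R_{\mathcal M_h}(f)$. The paper's bound on $2\sup_f|\mathcal R_{\mathcal H}(f)-\mathcal R_{\mathcal M_h}(f)|$ only controls $\mathcal R_{\mathcal M_h}(f)-\mathcal R_{\mathcal H}(f)$, so the same hole sits in its $\mathcal E_h$ step.

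Finally, the claim that source and target excess risks coincide invokes Lemma~\ref{lem_risk4ST} for the learned $\hat f$ and for $f^*$. Its hypotheses are not implied by Assumptions~\ref{assum_holder}--\ref{assum_rep} and should be stated as an added assumption.
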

\begin{remark}
Theorem~\ref{theorem_main} separates the joint cost of learning representations and domain alignment from the labeled-sample 
cost of downstream prediction. The coupled pretraining rate reflects the combined smoothness and capacity of 
the feature extractor and discriminator: domain discrimination acts on learned features, 
so its statistical complexity depends on both components. This quantifies their interaction within pretraining, 
while the frozen representation enables a separate analysis of predictor fitting.
The resulting bound assigns the ambient-dimensional learning cost to the unlabeled stage and the low-dimensional 
prediction cost to the labeled stage. In the theorem's large-unlabeled-sample regime with common smoothness $\beta\geq1$, 
the bound reduces to $\widetilde{\mathcal O}(m^{-\beta/(2\beta+d)})$, where the tilde suppresses logarithmic factors. 
Thus sufficiently accurate pretraining preserves a downstream label-sample exponent governed by $d$ rather than $p$, 
explicitly linking unlabeled representation learning to low-dimensional prediction efficiency.
\end{remark}
\begin{remark}
The analysis explicitly controls neural approximation errors, allowing the ideal representation, discriminator, 
and predictor to lie outside the chosen network classes. Increasing norm budgets improves the approximation bounds but enlarges 
the statistical bound; the coupled rate results from balancing these effects jointly for the feature extractor and discriminator. 
Under the maximum-row-sum product constraint of Definition~\ref{def_norm_network}, 
the pretraining complexity bound has no explicit width penalty at fixed norm budgets and depths. 
This provides a quantitative basis for coordinating approximation capacity and statistical complexity in contrastive--adversarial learning.
\end{remark}
\paragraph{Discussion.}
JCAR integrates joint-domain contrastive feature learning with explicit adversarial alignment, 
then freezes the representation for source-supervised prediction. Whereas \citet{shen2022connect} 
explain transfer through augmentation-graph connectivity for spectral contrastive learning and a linear head, 
our theory quantifies the estimation and neural-approximation costs of the contrastive--adversarial objective. 
Its distinctive contribution is to connect the coupled complexity of feature learning and domain discrimination 
to downstream label efficiency, with separate roles for unlabeled and labeled sample sizes. 
Detailed comparisons are provided in Appendix~\ref{sec:pro}.
\section{Numerical Experiments}\label{sec_Numer}
To verify the effectiveness of our proposed method, we conduct a series of experiments in this section.
Section \ref{setting} describes the datasets, model architectures and baselines.
Section \ref{evalue} reports from-scratch comparisons with ERM and established domain-adaptation baselines in 
Tables \ref{tab_compare_off31_noweight}--\ref{tab_compare_offHom_noweight}, together with qualitative t-SNE visualizations 
in Figure \ref{fig:four_tsn}. Pretrained comparisons are reported separately in Appendix \ref{add results}.
\subsection{Experimental setup}\label{setting}
We evaluate JCAR on three domain-adaptation benchmarks: Office-31 \citep{saenko2010adapting},
Office-Home \citep{venkateswara2017deep}, and DomainNet \citep{peng2019moment}, using ResNet \citep{he2016deep}
and ViT \citep{dosovitskiy2020image} backbones.
Baselines include DANN \citep{ganin2015unsupervised} and CDAN \citep{long2018conditional}.
Appendix \ref{sup_expe} provides details of the data augmentation, model architectures, and baselines.
\begin{table*}[ht]
\caption{Top-1 accuracy (\%) of the proposed method and baselines trained from scratch on Office-31.}
\label{tab_compare_off31_noweight}
\vskip 0.15in
\centering
\begin{small}
\begin{sc}
\begin{tabular}{lcccccc}
    \toprule
       Methods               &A $\to$ W      &D $\to$ W   &W $\to$ D  & A $\to$ D  &D $\to$ A  &W $\to$ A \\ 
       \midrule
       ERM       &10.1       &28.9      &31.3   & 9.2  & 6.1  & 6.0 \\
         CDAN       & 12.1      & 29.1     & 28.3  & 7.8  & 5.9  & 4.7\\
        DANN       & 16.4      & 63.4       & 67.5  & 10.2  &7.2   &9.6 \\ 
        Ours (ResNet)     & \textbf{16.7}     & 44.6     &51.4   &\textbf{12.1}  &\textbf{10.5}   & \textbf{13.0} \\ \midrule
        ViT-B      & 14.8      & 37.5     & 47.8  &12.5  & 5.9  &9.9 \\
        Ours (ViT)      &\textbf{17.9}      & \textbf{39.2}     &42.9     &11.0   & \textbf{7.1}    &\textbf{11.2} \\
    \bottomrule    
    \end{tabular}
\end{sc}
\end{small}
\vskip -0.1in
\end{table*}
\begin{table*}[h]
\caption{Top-1 accuracy (\%) of the proposed method and baselines trained from scratch on Office-Home.}
\label{tab_compare_offHom_noweight}
\begin{center}
\begin{small}
\begin{tabular}{lcccccc}
    \toprule
      Methods               &Ar $\to$ Rw      &Pr $\to$ Rw   &Cl $\to$ Pr  & Ar $\to$ Cl  &Pr $\to$ Ar  &Cl $\to$ Rw \\ \midrule
      ERM       &10.0       &16.4      &10.9   &4.4   &7.1   &9.5 \\
         CDAN      &9.4       & 16.0     & 12.5  & 3.9  & 5.0  &8.4 \\
        DANN       &9.6       & 16.4     & 10.3  & 4.9  &4.4   &10.2 \\ 
        Ours (ResNet)    & 7.6      &10.4      &10.0  &4.3   & 5.0  & 7.2\\ \midrule
        ViT-B    & 9.2      & 14.3     &3.1   & 5.8  & 7.0  & 9.1 \\
         Ours (ViT)      & \textbf{9.4}     &\textbf{14.5}      &\textbf{8.2}   &4.5  & \textbf{7.8}  & \textbf{10.5} \\
    \bottomrule    
    \end{tabular}
\end{small}
\end{center}
\vskip -0.1in
\end{table*}
\subsection{Evaluation}\label{evalue}
Tables \ref{tab_compare_off31_noweight} and \ref{tab_compare_offHom_noweight} compare randomly initialized models, 
with ResNet-based and ViT-based methods grouped separately. JCAR delivers clear gains in several challenging from-scratch transfers. 
On Office-31, our ResNet variant obtains the best result in its architecture group on four of six transfers, including gains of $1.9$, $3.3$, 
and $3.4$ percentage points over the strongest competing ResNet result on A$\to$D, D$\to$A, and W$\to$A, respectively. 
Our ViT variant improves over ViT-B on four of six Office-31 transfers. On Office-Home, 
it improves over ViT-B on five of six transfers, with a $5.1$-point gain on Cl$\to$Pr. 
These results demonstrate the value of joint-domain contrastive-adversarial pretraining, 
especially for the ViT backbone. The ResNet variant is less effective on Office-Home and on the D$\to$W and W$\to$D Office-31 transfers, 
showing that the gains depend on the backbone and domain pair. 
Tables \ref{tab_compare_off31}--\ref{tab_compare_domain} in the Appendix provide the complementary pretrained evaluation.
Figure \ref{fig:four_tsn} shows t-SNE \citep{maaten2008visualizing} projections generated with scikit-learn \citep{pedregosa2011scikit}; 
green and purple denote source and target samples. For the displayed Office-31 transfer, 
JCAR produces substantial source--target overlap together with a broadly distributed feature geometry, 
providing qualitative evidence that the learned representation combines domain alignment with non-collapsed feature structure. 
Since t-SNE is a nonlinear projection, we use it as supporting visualization rather than a standalone quantitative measure; 
additional transfers are shown in Appendix \ref{add results}.
\begin{figure}[H]
    \centering
    \begin{subfigure}[b]{0.235\textwidth}
        \centering
        \includegraphics[width=\linewidth]{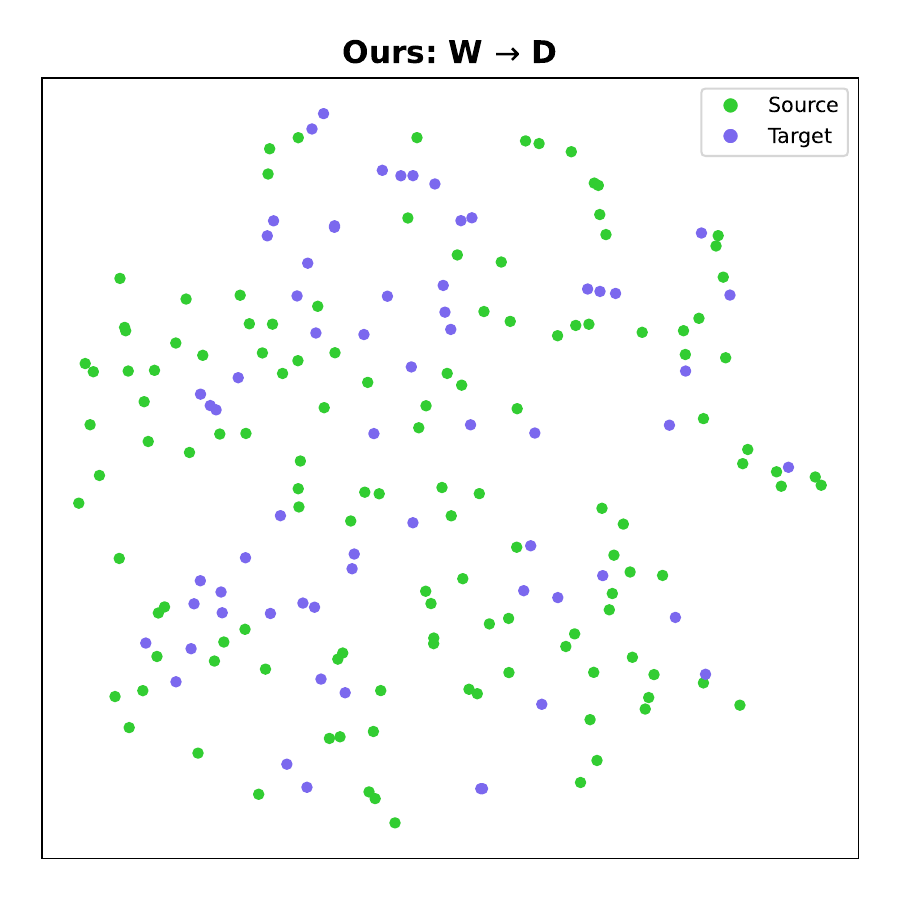}
        \caption{Our method}
        \label{fig:sub1}
    \end{subfigure}
    \hfill
    \begin{subfigure}[b]{0.235\textwidth}
        \centering
        \includegraphics[width=\linewidth]{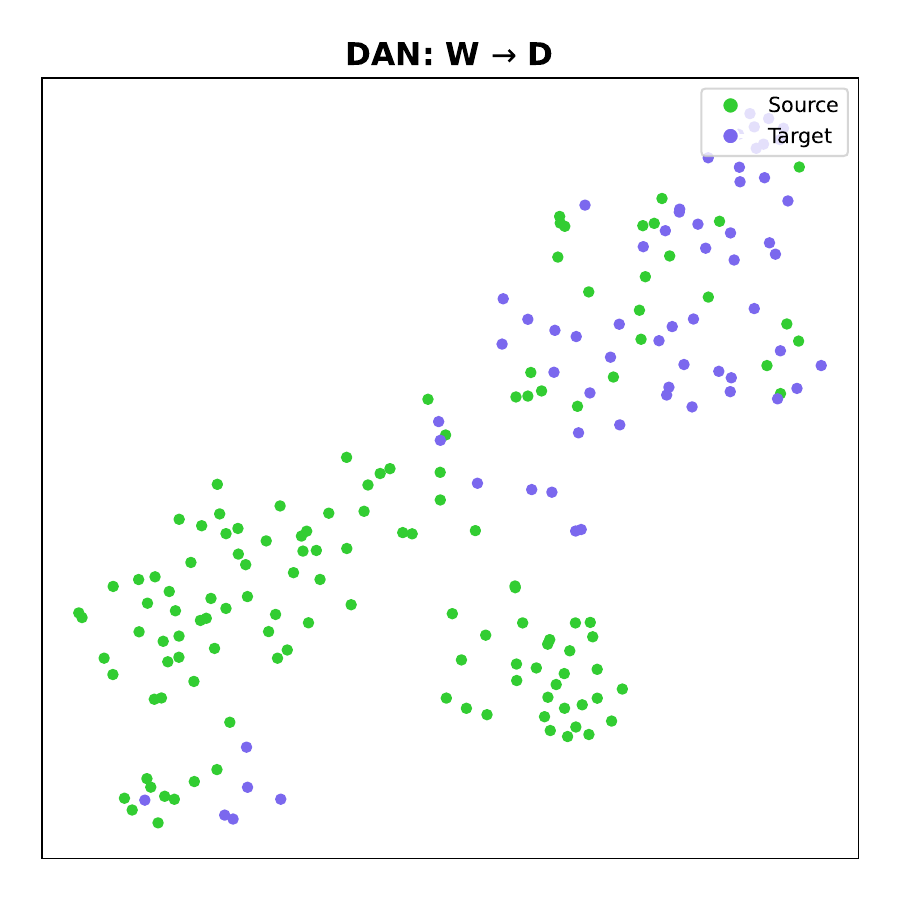}
        \caption{DAN}
        \label{fig:sub2}
    \end{subfigure}
    \hfill
    \begin{subfigure}[b]{0.235\textwidth}
        \centering
        \includegraphics[width=\linewidth]{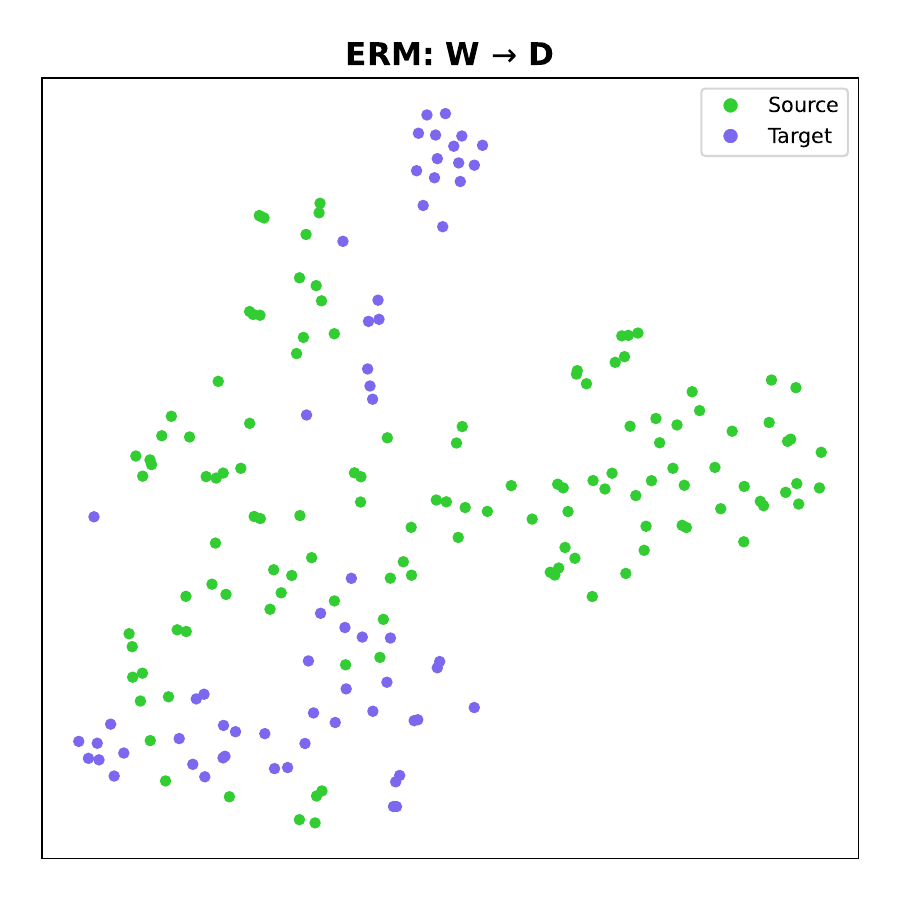}
        \caption{ERM}
        \label{fig:sub3}
    \end{subfigure}
    \hfill
    \begin{subfigure}[b]{0.235\textwidth}
        \centering
        \includegraphics[width=\linewidth]{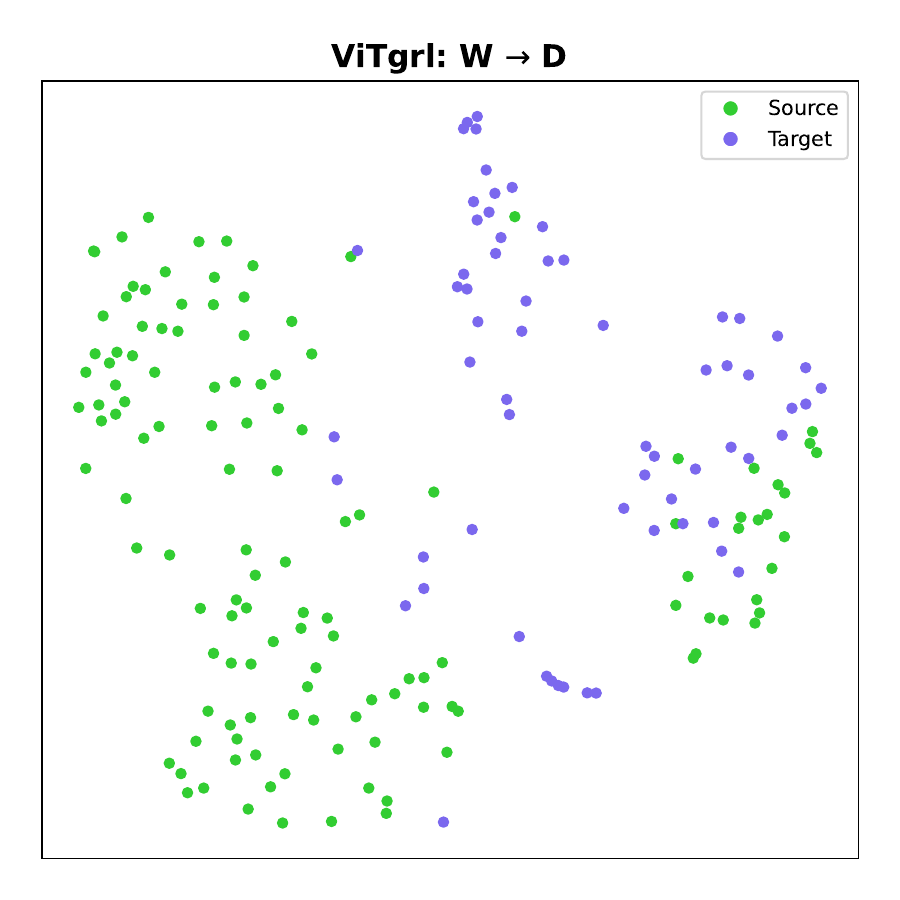}
        \caption{ViT}
        \label{fig:sub4}
    \end{subfigure}
    \caption{Two-dimensional t-SNE projections for the Office-31 W$\to$D task. 
    Source and target samples are shown in green and purple, respectively. 
    Compared with DAN (\ref{fig:sub2}), ERM (\ref{fig:sub3}), and ViT (\ref{fig:sub4}), JCAR (\ref{fig:sub1}) 
    exhibits substantial cross-domain overlap while maintaining a broadly distributed feature geometry.}
    \label{fig:four_tsn}
\end{figure}
\section{Conclusion}\label{sec_Conclu}
We introduced JCAR, a two-stage UDA framework combining joint-domain contrastive learning with adversarial alignment, 
followed by source-supervised prediction on frozen features. Our non-asymptotic analysis balances neural approximation and statistical errors, 
captures the coupled cost of feature learning and domain discrimination, and separates it from downstream prediction. 
Under the stated assumptions, sufficiently accurate unlabeled pretraining yields a downstream polynomial 
rate governed by the latent dimension rather than the ambient dimension. 
This establishes a quantitative link between contrastive--adversarial pretraining and label-sample efficiency. 
The experiments substantiate the practical value of this design, with strong from-scratch 
improvements across multiple Office-31 and Office-Home transfers, particularly for the ViT backbone. 
The current results also identify productive directions for extension: domain-specific augmentation design, 
stronger compatibility with supervised initialization, and component-wise ablations of contrastive learning, 
adversarial alignment, and stage-wise freezing.
\subsection*{AI use statement}
Generative AI tools assisted with \LaTeX{} migration and formatting, language
editing, and checking and refining mathematical derivations and proofs. The
authors independently reviewed and verified all AI-assisted material,
including equations, proofs, citations, and experimental descriptions. The
authors take responsibility for the final content of this work, including
text, claims, and artifacts produced with the aid of generative AI.
\bibliographystyle{iclr2027_conference}
\bibliography{refer}

@article{jiao2021deep,
  title={Deep nonparametric regression on approximate manifolds: Nonasymptotic error bounds with polynomial prefactors},
  author={Jiao, Yuling and Shen, Guohao and Lin, Yuanyuan and Huang, Jian},
  journal={The Annals of Statistics},
  volume={51},
  number={2},
  pages={691--716},
  year={2023},
  publisher={Institute of Mathematical Statistics}
}

@book{anthony2009neural,
  title={Neural network learning: Theoretical foundations},
  author={Anthony, Martin and Bartlett, Peter L},
  year={2009},
  publisher={cambridge university press}
}

@book{shalev2014understanding,
  title={Understanding machine learning: From theory to algorithms},
  author={Shalev-Shwartz, Shai and Ben-David, Shai},
  year={2014},
  publisher={Cambridge university press}
}

@article{dudley1967sizes,
  title={The sizes of compact subsets of Hilbert space and continuity of Gaussian processes},
  author={Dudley, Richard M},
  journal={Journal of Functional Analysis},
  volume={1},
  number={3},
  pages={290--330},
  year={1967},
  publisher={Elsevier}
}

@article{bartlett2019nearly,
  title={Nearly-tight VC-dimension and pseudodimension bounds for piecewise linear neural networks},
  author={Bartlett, Peter L and Harvey, Nick and Liaw, Christopher and Mehrabian, Abbas},
  journal={The Journal of Machine Learning Research},
  volume={20},
  number={1},
  pages={2285--2301},
  year={2019},
  publisher={JMLR. org}
}

@article{jiao2023approximation,
  title={Approximation bounds for norm constrained neural networks with applications to regression and GANs},
  author={Jiao, Yuling and Wang, Yang and Yang, Yunfei},
  journal={Applied and Computational Harmonic Analysis},
  volume={65},
  pages={249--278},
  year={2023},
  publisher={Elsevier}
}

@article{yosinski2014transferable,
  title={How transferable are features in deep neural networks?},
  author={Yosinski, Jason and Clune, Jeff and Bengio, Yoshua and Lipson, Hod},
  journal={Advances in neural information processing systems},
  volume={27},
  year={2014}
}

@inproceedings{glorot2011domain,
  title={Domain adaptation for large-scale sentiment classification: A deep learning approach},
  author={Glorot, Xavier and Bordes, Antoine and Bengio, Yoshua},
  booktitle={Proceedings of the 28th international conference on machine learning (ICML-11)},
  pages={513--520},
  year={2011}
}

@article{mansour2009domain,
  title={Domain adaptation: Learning bounds and algorithms},
  author={Mansour, Yishay and Mohri, Mehryar and Rostamizadeh, Afshin},
  journal={arXiv preprint arXiv:0902.3430},
  year={2009}
}

@article{ajakan2014domain,
  title={Domain-adversarial neural networks},
  author={Ajakan, Hana and Germain, Pascal and Larochelle, Hugo and Laviolette, Fran{\c{c}}ois and Marchand, Mario},
  journal={arXiv preprint arXiv:1412.4446},
  year={2014}
}

@inproceedings{ghifary2014domain,
  title={Domain adaptive neural networks for object recognition},
  author={Ghifary, Muhammad and Kleijn, W Bastiaan and Zhang, Mengjie},
  booktitle={PRICAI 2014: Trends in Artificial Intelligence: 13th Pacific Rim International Conference on Artificial Intelligence, Gold Coast, QLD, Australia, December 1-5, 2014. Proceedings 13},
  pages={898--904},
  year={2014},
  organization={Springer}
}

@article{tzeng2014deep,
  title={Deep domain confusion: Maximizing for domain invariance},
  author={Tzeng, Eric and Hoffman, Judy and Zhang, Ning and Saenko, Kate and Darrell, Trevor},
  journal={arXiv preprint arXiv:1412.3474},
  year={2014}
}

@article{ben2010theory,
  title={A theory of learning from different domains},
  author={Ben-David, Shai and Blitzer, John and Crammer, Koby and Kulesza, Alex and Pereira, Fernando and Vaughan, Jennifer Wortman},
  journal={Machine learning},
  volume={79},
  pages={151--175},
  year={2010},
  publisher={Springer}
}

@article{pan2010domain,
  title={Domain adaptation via transfer component analysis},
  author={Pan, Sinno Jialin and Tsang, Ivor W and Kwok, James T and Yang, Qiang},
  journal={IEEE transactions on neural networks},
  volume={22},
  number={2},
  pages={199--210},
  year={2010},
  publisher={IEEE}
}

@inproceedings{zhang2013domain,
  title={Domain adaptation under target and conditional shift},
  author={Zhang, Kun and Sch{\"o}lkopf, Bernhard and Muandet, Krikamol and Wang, Zhikun},
  booktitle={International conference on machine learning},
  pages={819--827},
  year={2013},
  organization={Pmlr}
}

@article{wang2014flexible,
  title={Flexible transfer learning under support and model shift},
  author={Wang, Xuezhi and Schneider, Jeff},
  journal={Advances in Neural Information Processing Systems},
  volume={27},
  year={2014}
}

@inproceedings{ganin2015unsupervised,
  title={Unsupervised domain adaptation by backpropagation},
  author={Ganin, Yaroslav and Lempitsky, Victor},
  booktitle={International conference on machine learning},
  pages={1180--1189},
  year={2015},
  organization={PMLR}
}

@article{ganin2016domain,
  title={Domain-adversarial training of neural networks},
  author={Ganin, Yaroslav and Ustinova, Evgeniya and Ajakan, Hana and Germain, Pascal and Larochelle, Hugo and Laviolette, Fran{\c{c}}ois and March, Mario and Lempitsky, Victor},
  journal={Journal of machine learning research},
  volume={17},
  number={59},
  pages={1--35},
  year={2016}
}

@inproceedings{tzeng2015simultaneous,
  title={Simultaneous deep transfer across domains and tasks},
  author={Tzeng, Eric and Hoffman, Judy and Darrell, Trevor and Saenko, Kate},
  booktitle={Proceedings of the IEEE international conference on computer vision},
  pages={4068--4076},
  year={2015}
}

@inproceedings{tzeng2017adversarial,
  title={Adversarial discriminative domain adaptation},
  author={Tzeng, Eric and Hoffman, Judy and Saenko, Kate and Darrell, Trevor},
  booktitle={Proceedings of the IEEE conference on computer vision and pattern recognition},
  pages={7167--7176},
  year={2017}
}

@InProceedings{pmlr-v70-arora17a,
  title = 	 {Generalization and Equilibrium in Generative Adversarial Nets ({GAN}s)},
  author =       {Sanjeev Arora and Rong Ge and Yingyu Liang and Tengyu Ma and Yi Zhang},
  booktitle = 	 {Proceedings of the 34th International Conference on Machine Learning},
  pages = 	 {224--232},
  year = 	 {2017},
  editor = 	 {Precup, Doina and Teh, Yee Whye},
  volume = 	 {70},
  series = 	 {Proceedings of Machine Learning Research},
  month = 	 {06--11 Aug},
  publisher =    {PMLR},
  url = 	 {https://proceedings.mlr.press/v70/arora17a.html},
}

@article{HoffmanWYD16,
  author       = {Judy Hoffman and
                  Dequan Wang and
                  Fisher Yu and
                  Trevor Darrell},
  title        = {FCNs in the Wild: Pixel-level Adversarial and Constraint-based Adaptation},
  journal      = {CoRR},
  volume       = {abs/1612.02649},
  year         = {2016},
  url          = {http://arxiv.org/abs/1612.02649},
  eprinttype    = {arXiv},
  eprint       = {1612.02649},

}

@InProceedings{Chen_2017_ICCV,
author = {Chen, Yi-Hsin and Chen, Wei-Yu and Chen, Yu-Ting and Tsai, Bo-Cheng and Frank Wang, Yu-Chiang and Sun, Min},
title = {No More Discrimination: Cross City Adaptation of Road Scene Segmenters},
booktitle = {Proceedings of the IEEE International Conference on Computer Vision (ICCV)},
month = {Oct},
year = {2017}
}

@inproceedings{Song2009,
author = {Song, Le and Huang, Jonathan and Smola, Alex and Fukumizu, Kenji},
title = {Hilbert space embeddings of conditional distributions with applications to dynamical systems},
year = {2009},
isbn = {9781605585161},
publisher = {Association for Computing Machinery},
address = {New York, NY, USA},
url = {https://doi.org/10.1145/1553374.1553497},
doi = {10.1145/1553374.1553497},
booktitle = {Proceedings of the 26th Annual International Conference on Machine Learning},
pages = {961–968},
numpages = {8},
location = {Montreal, Quebec, Canada},
series = {ICML '09}
}

@inproceedings{zhao2019learning,
  title={On learning invariant representations for domain adaptation},
  author={Zhao, Han and Des Combes, Remi Tachet and Zhang, Kun and Gordon, Geoffrey},
  booktitle={International conference on machine learning},
  pages={7523--7532},
  year={2019},
  organization={PMLR}
}

@inproceedings{long2015learning,
  title={Learning transferable features with deep adaptation networks},
  author={Long, Mingsheng and Cao, Yue and Wang, Jianmin and Jordan, Michael},
  booktitle={International conference on machine learning},
  pages={97--105},
  year={2015},
  organization={PMLR}
}

@inproceedings{long2017deep,
  title={Deep transfer learning with joint adaptation networks},
  author={Long, Mingsheng and Zhu, Han and Wang, Jianmin and Jordan, Michael I},
  booktitle={International conference on machine learning},
  pages={2208--2217},
  year={2017},
  organization={PMLR}
}

@article{long2018conditional,
  title={Conditional adversarial domain adaptation},
  author={Long, Mingsheng and Cao, Zhangjie and Wang, Jianmin and Jordan, Michael I},
  journal={Advances in neural information processing systems},
  volume={31},
  year={2018}
}

@article{long2016unsupervised,
  title={Unsupervised domain adaptation with residual transfer networks},
  author={Long, Mingsheng and Zhu, Han and Wang, Jianmin and Jordan, Michael I},
  journal={Advances in neural information processing systems},
  volume={29},
  year={2016}
}

@ARTICLE{Long2016Regularization,
  author={Long, Mingsheng and Wang, Jianmin and Ding, Guiguang and Pan, Sinno Jialin and Yu, Philip S.},
  journal={IEEE Transactions on Knowledge and Data Engineering}, 
  title={Adaptation Regularization: A General Framework for Transfer Learning}, 
  year={2014},
  volume={26},
  number={5},
  pages={1076-1089},
  doi={10.1109/TKDE.2013.111}}

@inproceedings{chen2019transferability,
  title={Transferability vs. discriminability: Batch spectral penalization for adversarial domain adaptation},
  author={Chen, Xinyang and Wang, Sinan and Long, Mingsheng and Wang, Jianmin},
  booktitle={International conference on machine learning},
  pages={1081--1090},
  year={2019},
  organization={PMLR}
}

@inproceedings{cao2018partial,
  title={Partial transfer learning with selective adversarial networks},
  author={Cao, Zhangjie and Long, Mingsheng and Wang, Jianmin and Jordan, Michael I},
  booktitle={Proceedings of the IEEE conference on computer vision and pattern recognition},
  pages={2724--2732},
  year={2018}
}

@inproceedings{cao2019learning,
  title={Learning to transfer examples for partial domain adaptation},
  author={Cao, Zhangjie and You, Kaichao and Long, Mingsheng and Wang, Jianmin and Yang, Qiang},
  booktitle={Proceedings of the IEEE/CVF conference on computer vision and pattern recognition},
  pages={2985--2994},
  year={2019}
}

@inproceedings{liu2019transferable,
  title={Transferable adversarial training: A general approach to adapting deep classifiers},
  author={Liu, Hong and Long, Mingsheng and Wang, Jianmin and Jordan, Michael},
  booktitle={International conference on machine learning},
  pages={4013--4022},
  year={2019},
  organization={PMLR}
}

@article{long2016deep,
  title={Deep learning of transferable representation for scalable domain adaptation},
  author={Long, Mingsheng and Wang, Jianmin and Cao, Yue and Sun, Jiaguang and Yu, Philip S},
  journal={IEEE Transactions on Knowledge and Data Engineering},
  volume={28},
  number={8},
  pages={2027--2040},
  year={2016},
  publisher={IEEE}
}

@inproceedings{yao2020unsupervised,
  title={Unsupervised transfer learning for spatiotemporal predictive networks},
  author={Yao, Zhiyu and Wang, Yunbo and Long, Mingsheng and Wang, Jianmin},
  booktitle={International Conference on Machine Learning},
  pages={10778--10788},
  year={2020},
  organization={PMLR}
}

@book{wainwright2019high,
  title={High-dimensional statistics: A non-asymptotic viewpoint},
  author={Wainwright, Martin J},
  volume={48},
  year={2019},
  publisher={Cambridge university press}
}

@article{bengio2013representation,
  title={Representation learning: A review and new perspectives},
  author={Bengio, Yoshua and Courville, Aaron and Vincent, Pascal},
  journal={IEEE transactions on pattern analysis and machine intelligence},
  volume={35},
  number={8},
  pages={1798--1828},
  year={2013},
  publisher={IEEE}
}

@article{krizhevsky2012imagenet,
  title={Imagenet classification with deep convolutional neural networks},
  author={Krizhevsky, Alex and Sutskever, Ilya and Hinton, Geoffrey E},
  journal={Advances in neural information processing systems},
  volume={25},
  year={2012}
}

@inproceedings{he2016deep,
  title={Deep residual learning for image recognition},
  author={He, Kaiming and Zhang, Xiangyu and Ren, Shaoqing and Sun, Jian},
  booktitle={Proceedings of the IEEE conference on computer vision and pattern recognition},
  pages={770--778},
  year={2016}
}

@article{sugiyama2007direct,
  title={Direct importance estimation with model selection and its application to covariate shift adaptation},
  author={Sugiyama, Masashi and Nakajima, Shinichi and Kashima, Hisashi and Buenau, Paul and Kawanabe, Motoaki},
  journal={Advances in neural information processing systems},
  volume={20},
  year={2007}
}

@article{huang2006correcting,
  title={Correcting sample selection bias by unlabeled data},
  author={Huang, Jiayuan and Gretton, Arthur and Borgwardt, Karsten and Sch{\"o}lkopf, Bernhard and Smola, Alex},
  journal={Advances in neural information processing systems},
  volume={19},
  year={2006}
}

@ARTICLE{pan2010,
  author={Pan, Sinno Jialin and Yang, Qiang},
  journal={IEEE Transactions on Knowledge and Data Engineering}, 
  title={A Survey on Transfer Learning}, 
  year={2010},
  volume={22},
  number={10},
  pages={1345-1359},
  doi={10.1109/TKDE.2009.191}}

@article{sejdinovic2013equivalence,
  title={Equivalence of distance-based and RKHS-based statistics in hypothesis testing},
  author={Sejdinovic, Dino and Sriperumbudur, Bharath and Gretton, Arthur and Fukumizu, Kenji},
  journal={The annals of statistics},
  pages={2263--2291},
  year={2013},
  publisher={JSTOR}
}

@article{gretton2012optimal,
  title={Optimal kernel choice for large-scale two-sample tests},
  author={Gretton, Arthur and Sejdinovic, Dino and Strathmann, Heiko and Balakrishnan, Sivaraman and Pontil, Massimiliano and Fukumizu, Kenji and Sriperumbudur, Bharath K},
  journal={Advances in neural information processing systems},
  volume={25},
  year={2012}
}

@inproceedings{kato2021non,
  title={Non-negative bregman divergence minimization for deep direct density ratio estimation},
  author={Kato, Masahiro and Teshima, Takeshi},
  booktitle={International Conference on Machine Learning},
  pages={5320--5333},
  year={2021},
  organization={PMLR}
}

@inproceedings{chen2020simple,
  title={A simple framework for contrastive learning of visual representations},
  author={Chen, Ting and Kornblith, Simon and Norouzi, Mohammad and Hinton, Geoffrey},
  booktitle={International conference on machine learning},
  pages={1597--1607},
  year={2020},
  organization={PMLR}
}

@book{quinonero2022dataset,
  title={Dataset shift in machine learning},
  author={Qui{\~n}onero-Candela, Joaquin and Sugiyama, Masashi and Schwaighofer, Anton and Lawrence, Neil D},
  year={2022},
  publisher={Mit Press}
}

@inproceedings{yu2012analysis,
  title={Analysis of kernel mean matching under covariate shift},
  author={Yu, Yao-Liang and Szepesv{\'a}ri, Csaba},
  booktitle={Proceedings of the 29th International Coference on International Conference on Machine Learning},
  pages={1147--1154},
  year={2012}
}

@article{sugiyama2008direct,
  title={Direct importance estimation for covariate shift adaptation},
  author={Sugiyama, Masashi and Suzuki, Taiji and Nakajima, Shinichi and Kashima, Hisashi and Von B{\"u}nau, Paul and Kawanabe, Motoaki},
  journal={Annals of the Institute of Statistical Mathematics},
  volume={60},
  pages={699--746},
  year={2008},
  publisher={Springer}
}

@article{golowich2020size,
  title={Size-independent sample complexity of neural networks},
  author={Golowich, Noah and Rakhlin, Alexander and Shamir, Ohad},
  journal={Information and Inference: A Journal of the IMA},
  volume={9},
  number={2},
  pages={473--504},
  year={2020},
  publisher={Oxford University Press}
}

@inproceedings{maurer2016vector,
  title={A vector-contraction inequality for rademacher complexities},
  author={Maurer, Andreas},
  booktitle={Algorithmic Learning Theory: 27th International Conference, ALT 2016, Bari, Italy, October 19-21, 2016, Proceedings 27},
  pages={3--17},
  year={2016},
  organization={Springer}
}

@inproceedings{saenko2010adapting,
  title={Adapting visual category models to new domains},
  author={Saenko, Kate and Kulis, Brian and Fritz, Mario and Darrell, Trevor},
  booktitle={Computer vision--ECCV 2010: 11th European conference on computer vision, Heraklion, Crete, Greece, September 5-11, 2010, proceedings, part iV 11},
  pages={213--226},
  year={2010},
  organization={Springer}
}

@inproceedings{venkateswara2017deep,
  title={Deep hashing network for unsupervised domain adaptation},
  author={Venkateswara, Hemanth and Eusebio, Jose and Chakraborty, Shayok and Panchanathan, Sethuraman},
  booktitle={Proceedings of the IEEE conference on computer vision and pattern recognition},
  pages={5018--5027},
  year={2017}
}

@inproceedings{peng2019moment,
  title={Moment matching for multi-source domain adaptation},
  author={Peng, Xingchao and Bai, Qinxun and Xia, Xide and Huang, Zijun and Saenko, Kate and Wang, Bo},
  booktitle={Proceedings of the IEEE/CVF international conference on computer vision},
  pages={1406--1415},
  year={2019}
}

@article{dosovitskiy2020image,
  title={An image is worth 16x16 words: Transformers for image recognition at scale},
  author={Dosovitskiy, Alexey and Beyer, Lucas and Kolesnikov, Alexander and Weissenborn, Dirk and Zhai, Xiaohua and Unterthiner, Thomas and Dehghani, Mostafa and Minderer, Matthias and Heigold, Georg and Gelly, Sylvain and others},
  journal={arXiv preprint arXiv:2010.11929},
  year={2020}
}

@misc{tllib,
    author = {Junguang Jiang and Baixu Chen and Bo Fu and Mingsheng Long},
    title = {Transfer-Learning-library},
    year = {2020},
    publisher = {GitHub},
    journal = {GitHub repository},
    howpublished = {\url{https://github.com/thuml/Transfer-Learning-Library}},
}

@article{bartlett2002rademacher,
  title={Rademacher and gaussian complexities: Risk bounds and structural results},
  author={Bartlett, Peter L and Mendelson, Shahar},
  journal={Journal of Machine Learning Research},
  volume={3},
  number={Nov},
  pages={463--482},
  year={2002}
}

@book{wasserman2006all,
  title={All of nonparametric statistics},
  author={Wasserman, Larry},
  year={2006},
  publisher={Springer Science \& Business Media}
}

@book{gyorfi2002distribution,
  title={A distribution-free theory of nonparametric regression},
  author={Gy{\"o}rfi, L{\'a}szl{\'o} and Kohler, Michael and Krzyzak, Adam and Walk, Harro and others},
  volume={1},
  year={2002},
  publisher={Springer}
}

@inproceedings{shen2022connect,
  title={Connect, not collapse: Explaining contrastive learning for unsupervised domain adaptation},
  author={Shen, Kendrick and Jones, Robbie M and Kumar, Ananya and Xie, Sang Michael and HaoChen, Jeff Z and Ma, Tengyu and Liang, Percy},
  booktitle={International conference on machine learning},
  pages={19847--19878},
  year={2022},
  organization={PMLR}
}

@book{ledoux2013probability,
  title={Probability in Banach Spaces: isoperimetry and processes},
  author={Ledoux, Michel and Talagrand, Michel},
  year={2013},
  publisher={Springer Science \& Business Media}
}

@article{maaten2008visualizing,
  title={Visualizing data using t-SNE},
  author={Maaten, Laurens van der and Hinton, Geoffrey},
  journal={Journal of machine learning research},
  volume={9},
  number={Nov},
  pages={2579--2605},
  year={2008}
}

@article{pedregosa2011scikit,
  title={Scikit-learn: Machine learning in Python},
  author={Pedregosa, Fabian and Varoquaux, Ga{\"e}l and Gramfort, Alexandre and Michel, Vincent and Thirion, Bertrand and Grisel, Olivier and Blondel, Mathieu and Prettenhofer, Peter and Weiss, Ron and Dubourg, Vincent and others},
  journal={the Journal of machine Learning research},
  volume={12},
  pages={2825--2830},
  year={2011},
  publisher={JMLR. org}
}

@article{madrid2022risk,
  title={Risk bounds for quantile trend filtering},
  author={Madrid Padilla, Oscar Hernan and Chatterjee, Sabyasachi},
  journal={Biometrika},
  volume={109},
  number={3},
  pages={751--768},
  year={2022},
  publisher={Oxford University Press}
}

@article{belloni2011,
  title={$L_1$-penalized quantile regression in high-dimensional sparse models},
  author={Belloni, Alexandre and Chernozhukov, Victor},
  journal={The Annals of Statistics},
  pages={82--130},
  year={2011},
  publisher={JSTOR}
}

@article{bartlett2006convexity,
  title={Convexity, classification, and risk bounds},
  author={Bartlett, Peter L and Jordan, Michael I and McAuliffe, Jon D},
  journal={Journal of the American Statistical Association},
  volume={101},
  number={473},
  pages={138--156},
  year={2006},
  publisher={Taylor \& Francis}
}
\appendix
\section*{Appendix}
Implementation and methodological comparisons appear first in Appendix~\ref{sec:pro}, followed by broader context and technical definitions in Appendices~\ref{related_works} and~\ref{definitions}. Appendices~\ref{sec:ps}--\ref{prof_pat3} develop the proofs, Appendix~\ref{Au_results} collects auxiliary results, and Appendices~\ref{sup_expe}--\ref{add results} supply the experimental details and additional results.
\section{The proposed algorithm}\label{sec:pro}
\begin{algorithm}[H]
    \caption{Domain adaptation with common representation}\label{alg:al1}
    \begin{algorithmic}[1]
        \REQUIRE {
   Unlabeled  joint dataset from   source and target domains: $D_C=\{X_k\}_{k=1}^{n}$, the augmentation dataset: $D_{\bar{C}} = \{(X_k, X_k^+, X_k^-)\}_{k =1}^n$, and domain  tags $D_{Z}=\{Z_k\}_{k = 1}^n $.  Labeled source dataset $\bar{D}_{\mac{S}}=\{(X_{j}^\mac{S}, Y_{j}^\mac{S})\}_{j=1}^m$.}
        \STATE 
{\bf 
Obtain representation function  $\hat{f}$:
}
Learn the representation function by minimizing the contrastive domain-adaptive loss, i.e., 
\begin{equation*}
       \hat{f}
       \in \arg\min_{f \in \mac{F}} \max_{h \in \mac{H}} \hat{\mac{R}}_{ad}(f, h).
\end{equation*}
\STATE Freeze the learned representation $\hat f$; its parameters are not updated during downstream predictor training.
\STATE 
{\bf
Obtain predictor $\hat{g}$ for downstream task: 
}  
Learn the predictor  through  ERM using labeled source-domain data $\bar{D}_{\mac{S}}$, i.e.,
\begin{equation*}
    \hat{g} = \arg\min_{g \in \mac{G}} \frac{1}{m}\sum_{j =1 }^m \ell(g\circ\hat{f}(X^\mac{S}_j), Y^\mac{S}_j). 
\end{equation*}
\RETURN $\hat{g}$ and the frozen representation $\hat{f}$.
\end{algorithmic}    
\end{algorithm}
\subsection{Novelty and Key Distinctions from Prior Frameworks}
Subsection~\ref{math_for} places the learning objectives side by side. Subsections~\ref{dis_tra} and~\ref{dis_shen} then contrast JCAR with conventional UDA and the framework of \citet{shen2022connect}, respectively.
\subsubsection{Unified Loss Function Formulations}\label{math_for}
Let $G_f(\cdot;\theta_f)$ denote the feature extractor, $G_y(\cdot;\theta_y)$ the downstream predictor, and $G_d(\cdot;\theta_d)$ the domain discriminator; $G_f$ produces $D$-dimensional representations. Let $\{(x_i^s,y_i^s)\}_{i=1}^n$ be the labeled source data and $\{x_i^t\}_{i=1}^N$ the unlabeled target data. We use $\mathcal L_y$, $\mathcal L_d$, and $\mathcal L_{\mathrm{infoNCE}}$ for the supervised, domain, and contrastive losses, respectively.
This subsection presents the multi-negative mini-batch objective used in the experiments. 
It generalizes the pairwise logistic contrastive mechanism in Definition \ref{cl_loss}, 
whose one-negative form enables the non-asymptotic analysis developed in Section \ref{sec_Error}.
\begin{itemize}
    \item Traditional Adversarial UDA (e.g., DANN) [Single-Stage Coupled Objective]:
    \begin{align*}
        \mathcal{L}_{\text{Trad}}(\theta_f, \theta_y, \theta_d) &= \underbrace{\frac{1}{n} \sum_{i=1}^{n} \mathcal{L}_y\left(G_y(G_f(\mathbf{x}_i^s; \theta_f); \theta_y), y_i\right)}_{\text{Supervised Prediction Loss}}\\
        &- \lambda \underbrace{\left( \frac{1}{n} \sum_{i=1}^{n} \mathcal{L}_d^i(G_f; \theta_d) + \frac{1}{N} \sum_{i=n+1}^{N+n} \mathcal{L}_d^i(G_f; \theta_d) \right)}_{\text{Domain Discrimination Loss}}
    \end{align*}
    \item \cite{shen2022connect} [Two-Stage Decoupled Objective]:
    Stage 1 (Representation Pre-training):
    \begin{equation*}
       \mathcal{L}_{\text{Shen}}(\theta_f) = \frac{1}{n+N} \sum_{i=1}^{n+N} \mathcal{L}_{\text{infoNCE}}\left(G_f(\mathbf{x}_i; \theta_f), G_f(\mathbf{x}_i^+; \theta_f), G_f(\mathbf{x}_i^-; \theta_f)\right) 
    \end{equation*}
    Stage 2 (Downstream Task Training):
    \begin{equation*}
        \mathcal{L}_{\text{Downstream}}(\theta_y) = \frac{1}{n} \sum_{i=1}^{n} \mathcal{L}_y\left(G_y\left(\hat{G}_f(\mathbf{x}_i^s); \theta_y\right), y_i\right) \quad \text{where } \hat{G}_f \text{ is frozen.}
    \end{equation*}
    \item JCAR [Two-Stage Decoupled Objective with Joint Contrastive Alignment]:
    Stage 1 (Contrastive Domain-Adaptive Pre-training):
    \begin{align*}
        \mathcal{L}_{\text{JCAR}}(\theta_f, \theta_d) &= \underbrace{\frac{1}{n+N} \sum_{i=1}^{n+N} \mathcal{L}_{\text{infoNCE}}\left(G_f(\mathbf{x}_i; \theta_f), G_f(\mathbf{x}_i^+; \theta_f), G_f(\mathbf{x}_i^-; \theta_f)\right)}_{\text{Unsupervised Joint Contrastive Loss}}\\
        &- \lambda \underbrace{\left( \frac{1}{n} \sum_{i=1}^{n} \mathcal{L}_d^i(G_f; \theta_d) + \frac{1}{N} \sum_{i=n+1}^{N+n} \mathcal{L}_d^i(G_f; \theta_d) \right)}_{\text{Domain Discrimination Loss}}
    \end{align*}
    Stage 2 (Downstream Task Training):
    \begin{equation*}
        \mathcal{L}_{\text{Downstream}}(\theta_y) = \frac{1}{n} \sum_{i=1}^{n} \mathcal{L}_y\left(G_y\left(\hat{G}_f(\mathbf{x}_i^s); \theta_y\right), y_i\right) \quad \text{where } \hat{G}_f \text{ is frozen.}
    \end{equation*}
    \end{itemize}
    \subsubsection{{Distinctions from Traditional UDA Methods}}\label{dis_tra}
    The formulations above highlight two differences between JCAR and standard single-stage adversarial UDA:
    \begin{itemize}
    \item[(A).] Source-Supervised vs. Joint-Domain Contrastive Feature Learning.
    In standard DANN-style training, task-discriminative supervision for $G_f$ is supplied by labeled source examples, 
    while unlabeled target examples contribute through the domain objective. Consequently, 
    source labels remain the primary driver of feature construction even though target data participates in alignment.
    JCAR instead applies the same self-supervised contrastive objective to the combined unlabeled dataset ($D_S \cup D_T$), 
    so both domains contribute symmetrically and directly to feature construction. 
    The adversarial term complements this instance-level objective by aligning the two feature distributions, 
    yielding a representation shaped jointly by within-sample consistency and cross-domain agreement.
    \item[(B).] Separating Representation Learning from Downstream Fitting.
    Standard adversarial UDA jointly optimizes $\mathcal{L}_y$ and $\mathcal{L}_d$ in one minimax loop. 
    JCAR separates joint-domain representation pretraining from fitting the downstream label predictor, 
    isolating the task-agnostic representation objective from source-label supervision.
    JCAR completely decouples these two objectives:
    \textbf{Stage 1 (Task-agnostic representation learning):}
    We learn the representation using $\mathcal{L}_{\text{infoNCE}}$ and $\mathcal{L}_d$ without task labels. 
    The representation is therefore constructed from the geometry of both domains and their domain-level alignment, 
    independently of the source classification objective.
    \textbf{Stage 2 (Flexible task learning):}
    After pretraining, $\hat{G}_f$ is frozen and the downstream predictor $G_y$ is trained on top of it. 
    This decoupling prevents downstream fitting from overwriting the learned common representation and permits the representation 
    to be reused across downstream tasks.
    \end{itemize}
    \subsubsection{\texorpdfstring{Distinctions from \citet{shen2022connect}}{Distinctions from Shen et al. (2022)}}\label{dis_shen}
    Both JCAR and \citet{shen2022connect} use contrastive pretraining followed by downstream classification. 
    JCAR incorporates an explicit adversarial alignment objective and analyzes its joint statistical and approximation 
    costs with feature learning:
    \begin{itemize}
    \item[(A).]Representation Learning and Transfer Mechanisms. Under their augmentation-connectivity model, 
    \citet{shen2022connect} show that contrastive representations can support target prediction while retaining domain information.
    JCAR instead includes domain distinguishability in the pretraining objective, making feature alignment an 
    explicit optimization target alongside contrastive learning.
    \item[(B).] Mathematical Formulation and Explicit Alignment.
    The contrastive objective in \citet{shen2022connect} has no explicit adversarial alignment term. 
    JCAR couples instance-level contrastive learning with domain discrimination through a minimax objective; 
    its analysis therefore controls the composite class of discriminators acting on learned features.
    \item[(C).] Implementation of Joint Pretraining.
    At the implementation level, JCAR combines symmetric source--target feature processing with a progressive gradient-reversal schedule:
    \textbf{Symmetric Feature Processing:} The same feature extractor and contrastive objective are applied 
    to source and target mini-batch examples, allowing both domains to contribute uniformly to the representation 
    rather than assigning target samples only an alignment role.
    \textbf{Progressive Gradient Reversal (Warm-Start GRL):} We use a warm-start schedule that gradually increases the adaptation 
    factor $\lambda(t)$:
  $$\lambda(t) = \frac{2 \cdot (\text{hi} - \text{lo})}{1 + e^{-\alpha \cdot t / T}} - (\text{hi} - \text{lo}) + \text{lo}$$
  This schedule introduces the adversarial signal gradually while contrastive features are being formed, 
  avoiding an abrupt full-strength alignment objective at the start of training.
  \item[(D).] Generalization and Theoretical Guarantees.
   Whereas \citet{shen2022connect} derive target-error guarantees for spectral contrastive learning through augmentation-graph connectivity, 
   our analysis explicitly tracks neural-network estimation and approximation errors for a contrastive--adversarial objective. 
   Theorem \ref{theorem_main} jointly balances feature-extractor and discriminator complexity, 
   while treating downstream predictor fitting separately. Under its assumptions, common smoothness $\beta\geq1$, 
   and sufficiently many unlabeled samples, the bound reduces to $\widetilde{\mathcal O}(m^{-\beta/(2\beta+d)})$. 
   This quantifies how unlabeled data support high-dimensional pretraining while labeled data govern low-dimensional prediction.
\end{itemize}
Together, joint-domain contrastive learning, explicit adversarial alignment, and frozen-feature prediction define JCAR's methodological contribution. The non-asymptotic analysis connects this unlabeled representation-learning stage to downstream statistical efficiency.
\section{Related Work}\label{related_works}
Distribution shifts between training and test data can degrade predictive performance \citep{quinonero2022dataset}. Domain adaptation addresses such shifts to improve performance on downstream domains. \citet{huang2006correcting, sugiyama2007direct} used reweighting techniques that estimate density ratios between domains through nonparametric methods such as kernel estimators. These methods, however, can be sensitive to kernel selection and difficult to apply to high-dimensional unstructured data \citep{sugiyama2007direct, sugiyama2008direct, yu2012analysis, cao2019learning}.
Furthermore, 
Maximum mean discrepancy (MMD) \citep{gretton2012optimal,sejdinovic2013equivalence} is widely used to align distributions. \citet{long2015learning,long2016unsupervised,long2016deep} studied feature transferability in deep convolutional networks and used MMD in a reproducing kernel Hilbert space to align deeper-layer representations across unlabeled source and target data. \citet{long2017deep,long2018conditional} extended this approach to joint and conditional distribution matching, respectively. Motivated by spectral analyses of feature representations, \citet{chen2019transferability} penalized dominant singular values to strengthen less dominant directions and improve feature discriminability.
Domain-adversarial methods provide another approach to distribution alignment \citep{tzeng2017adversarial,long2018conditional}. Deep domain-adaptation methods embed adaptation modules in neural networks to learn transferable representations while matching feature distributions across domains \citep{Long2016Regularization,tzeng2015simultaneous,ganin2015unsupervised,ganin2016domain,long2016unsupervised,tzeng2017adversarial}. \citet{cao2018partial} relaxed the shared-label-space assumption by allowing the target label space to be a subset of the source label space; their method suppresses outlier source classes and aligns distributions within the shared label space. Adversarial alignment can nevertheless impair transfer when suppressing domain-specific variation distorts useful feature structure. \citet{liu2019transferable} instead generated examples that are adversarial to both the category classifier and the domain discriminator without altering the feature representations.
When data distributions embody complex multimodal structures, adversarial adaptation methods may fail to capture these structures for a discriminative alignment of distributions, potentially leading to model mismatch. This risk cannot be tackled by aligning distributions of features and classes via separate domain discriminators as \citet{HoffmanWYD16, Chen_2017_ICCV}, because there is no guarantee that two distributions are sufficiently similar \citep{song2009, pmlr-v70-arora17a} even when the discriminator is fully confused.
DNNs have achieved significant success, and they are widely believed to be able to learn more transferable features \citep{zhao2019learning} since they disentangle explanatory factors of variations. 
\citet{ben2010theory, pan2010domain, zhang2013domain, wang2014flexible} studied feature representations for transfer between source and target domains.
Several recent efforts have focused on learning domain-invariant features in the context of neural networks  \citep{ajakan2014domain, ghifary2014domain, tzeng2014deep}. 
\citet{yosinski2014transferable} showed that learned deep representations capture factors shared across populations and can transfer from the original tasks to related tasks. DNNs can also learn more transferable representations than traditional hand-crafted features \citep{bengio2013representation}.
DNNs can reduce domain discrepancy \citep{glorot2011domain}, but residual dataset shift can increase target-domain risk \citep{mansour2009domain, ben2010theory}.
\citet{yao2020unsupervised} noticed that DNNs may suffer from the problem of long-tail data distribution in the target domain. Furthermore, not all knowledge of the pretrained models can be directly applied to the target task due to the discrepancy of various domains.
\section{Definitions}\label{definitions}
\begin{definition}\label{def_holderclass}
The set $\mho^{\beta}([0,1]_d^p,B)$ represents the class of H\"older-continuous functions mapping $[0,1]^p$ to $\mathbb{R}^d$,
where $p,d \in \mathbb{N}^+$, 
$\beta > 0$, with $\beta = \bar{\beta} + \tilde{r}$, $\bar{\beta} \in \mathbb{N}_0$, $\tilde{r} \in (0,1]$, and 
$B>0$ is a constant.
Specifically,  this class is  defined as
\begin{align*}
    \mho^{\beta}([0,1]_d^p,B) := \bigg\{
    &f=\{f_1,\ldots, f_d\}; f_i:[0,1]^p \to \mathbb{R}, \\
    &\max_{\|s\|_{1}\leq \bar{\beta}}\|\partial^{s}f_i\|_{\infty}\leq B,
    \max_{\|s\|_{1} = \bar{\beta}}\sup_{x \neq y}\dfrac{|\partial^{s}f_i(x) - \partial^{s}f_i(y)|}{\|x-y\|_2^{\tilde{r}}} \leq B,~ i=1,\ldots, d  \bigg\},
\end{align*}
where 
$s = (s_1,\ldots,s_p) \in \mathbb{N}^p$ and $\|s\|_1 = \sum_{j=1}^p s_j$, $\partial^{s}f_i(x) = \partial^{s_1}\cdots \partial^{s_p}f_i(x_1,\ldots,x_p)$ for $i = 1,\ldots, d$.
For convenience, 
we use $\mho^{\beta}([0,1]^p,B)$ to denote the function class for which $d=1$. 
\end{definition}
\begin{definition}[ReLU DNNs]\label{df:rednns}
    Let $\mac{G} = \mac{NN}(\mac{W}, \mac{D}, \mac{S}, \mac{B})$ be the class of DNNs with parameters $\mac{W, D, S, B}$ and ReLU activation function. Any $g \in \mac{G}$ can be parameterized as 
    \begin{equation*}
        \begin{split}
            g_0(x) &= x, \\
            g_{i+1}(x) &= \varrho(W_ig_i(x) + b_i), \quad i = 0, \dots, \mac{D} -1, \\
            g(x) &= W_{\mac{D}}g_{\mac{D}}(x)+b_{\mac{D}}.
        \end{split}
    \end{equation*}
Here, $\varrho(x):=\max\{x,0\}$ is applied element-wise. For $i=0,\ldots,\mac D-1$, $W_i\in\mathbb R^{d_{i+1}\times d_i}$ 
and $b_i\in\mathbb R^{d_{i+1}}$ parameterize hidden layer $i+1$; $W_{\mac D}$ and $b_{\mac D}$ parameterize the final affine output map. 
The symbol $\mac D$ denotes the number of hidden layers, $\mac W$ the maximum hidden-layer width, 
$\mac S$ the total number of trainable scalar parameters, and $\mac B>0$ a uniform output bound, $\|g\|_\infty\leq\mac B$.
\end{definition}
\begin{definition}[Norm-constrained DNNs]\label{def_norm_network}
    Denote $
    \mac{NN}(\mac{W, D, K})$ as the norm-constrained DNNs with width $\mac{W}$, depth $\mac{D}$ and weight constraint $\mac{K}$. Specifically, for any $f \in \mac{NN}(\mac{W, D, K})$, the network can be parameterized as
    \begin{equation*}
        \begin{split}
            f_{0} &=x, \\
            f_{i+1}(x) &= \varrho(W_{i}f_{i}(x) + b_i), \\
            f(x) &= W_{\mac{D}}f_{\mac{D}}(x).
        \end{split}
    \end{equation*}
This definition is similar to the DNN class in Definition~\ref{df:rednns}, but imposes the following norm constraint on the weights of $f$:
\begin{equation*}
    \|W_{\mac{D}}\| \Pi_{i =0}^{\mac{D} -1}\max\{\|(W_i, b_i)\|, 1\} \leq \mac{K},
\end{equation*}
where $\|W\| := \sup_{\|x\|_{\infty}\leq 1}\|Wx\|_{\infty}$ and $\mac{K} > 0$.
\end{definition}
\begin{definition}[Pseudo Dimension]
    Let $\mac{H}$ be a set of real-valued functions defined on   $\mac{X}$. 
    The set $A = \{x_i\}_{i =1}^n$ is pseudo-shattered by $\mac{H}$ if there exists a fixed threshold vector $w \in \mathbb{R}^n$ such that, for every $b \in \{\pm 1\}^n$, there exists $h \in \mac{H}$ satisfying $b_i(h(x_i)-w_i)>0$ for all $i=1,\dots,n$. The pseudo dimension is defined by
    \begin{equation*}
        Pdim(\mac{H}):= \sup\{|A|: A\subseteq\mac{X} \text{ is finite and pseudo-shattered by }\mac{H}\},
    \end{equation*}
    where $|\cdot|$ denotes cardinality; the supremum may be infinite.
\end{definition}
\begin{definition}[$\epsilon$-covering and Covering number]\label{epsilon cover}
Let $(\mathcal{F}, q)$
be a metric space, where the metric $q: \mathcal{F} \times \mathcal{F} \to \mathbb{R}^+$. For any $\epsilon > 0 $, 
 we define $\mathcal{A}:=\{f_i \in \mathcal{F};  i = 1, \dots, |\mathcal{A}|\}$ as an $\epsilon$-cover of $\mathcal{F}$,
 if for each $f \in \mathcal{F}$, there exists a $f_i \in \mathcal{A}$ such that $q(f, f_i) \leq \epsilon.$
The $\epsilon$-covering number of $\mathcal{F}$ is the minimal cardinality of an $\epsilon$-cover:
    \begin{equation*}
        \mathcal{C}(\epsilon, \mathcal{F}, q) = \min \{|\mathcal{A}|: \text{ $\mathcal{A}$ is an $\epsilon$-cover of $\mathcal{F}$}\}.
    \end{equation*}   
\end{definition}
\begin{definition}[Uniform covering number]
 Given $x = (x_1,\dots,x_n) \in \mathcal{X}^n$, denote $\mathcal{F}_{x} = \{(f(x_1), \dots,f(x_n)) : f \in \mathcal{F} \}$
 as a subset of $\mathbb{R}^n$.
Let $\mathcal{C}(\epsilon, \mathcal{F}_x, \|\cdot\|_{\infty})$ be the covering number of $\mac{F}_{x}$
with radius $\epsilon >0$ and metric $\|\cdot\|_{\infty}$.
Then, the uniform covering number $\mathcal{C}_{n}(\epsilon, \mathcal{F}, \|\cdot\|_{\infty})$ is defined as
    \begin{equation*}
       \mathcal{C}_n(\epsilon, \mathcal{F}, \|\cdot\|_{\infty}):= \max\{{\mathcal{C}(\epsilon, \mathcal{F}_x, \|\cdot\|_{\infty})} :x \in \mathcal{X}^n\}. 
    \end{equation*}   
\end{definition}
\section{Proof sketch of Theorem \ref{theorem_main}}\label{sec:ps}
To bound $\mac{E}_{\hat{f}}(\hat{g})$, 
we first decompose it into three terms as follows:
\begin{equation*}
    \begin{split}
      \mac{R}_{\hat{f}}(\hat{g}) - \mac{R}_{f^*}(g^*)
       &\leq \underbrace{2\sup_{g \in \mac{G}}|\hat{\mac{R}}_{\hat{f}}(g) - \mac{R}_{\hat{f}}(g)|}_{\mbox{(I)}} + \underbrace{\inf_{g\in \mac{G}}|\mac{R}_{\hat{f}}(g) - \mac{R}_{\hat{f}}(g^*)|}_{\mbox{(II)}}
      +\underbrace{\mac{R}_{\hat{f}}(g^*) - \mac{R}_{f^*}(g^*)}_{\mbox{(III)}}.
    \end{split}
\end{equation*}
Appendix~\ref{err_decompos} derives this decomposition; the following lemmas bound its three terms.
~~\\
\noindent
\textbf{Bound (I).} 
Term (I) is the predictor's statistical error for the learned representation $\hat f$. Lemma~\ref{lem_sta_g} bounds it by symmetrization and contraction.
\begin{lemma}\label{lem_sta_g}
Suppose that Assumption \ref{assum_holder} holds.
Let $\mac{G}$  be ReLU DNNs $\mac{NN}(\mac{W}_g, \mac{D}_g, \mac{S}_g)$ bounded by a  positive constant $B_{\mac{G}}$.
If $m \geq Pdim(\mac{G})$, then
    \begin{equation*}
    \mab{E}_{\bar{D}_{\mac{S}}}\bigg[\sup_{g \in \mac{G}}|\hat{\mac{R}}_{\hat{f}}(g) - \mac{R}_{\hat{f}}(g)| \bigg]\leq
B_{\mac{G}}\sqrt{\frac{\mac{S}_g\mac{D}_g \log(\mac{S}_g)\log(2meB_{\mac{G}})}{m}}. 
    \end{equation*}
\end{lemma}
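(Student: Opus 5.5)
Because $\hat f$ depends only on $(D_{\bar C},D_J)$, which are independent of $\bar D_{\mathcal S}$, I will argue conditionally on $\hat f$ and treat it as a fixed measurable map into $[0,1]^d$ (Assumption~\ref{assum_holder}). The statement then reduces to a uniform deviation bound for the fixed class $\{(u,y)\mapsto \ell(g(u),y): g\in\mathcal G\}$ evaluated at i.i.d.\ pairs $(U_j,Y_j)=(\hat f(X_j^{\mathcal S}),Y_j^{\mathcal S})$. Since the bound holds for every fixed $\hat f$, it also holds after integrating over the pretraining data.

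\textbf{Step 1: symmetrization.} Introducing a ghost sample and Rademacher signs gives
\[
\mathbb E\sup_{g}|\hat{\mathcal R}_{\hat f}(g)-\mathcal R_{\hat f}(g)| \le 2\,\mathfrak R_m(\ell\circ\mathcal G\circ\hat f).
\]
\textbf{Step 2: contraction.} The loss $\ell(\cdot,y)$ is $1$-Lipschitz in its score argument, uniformly in $y$, as noted after \eqref{def_g}. Talagrand's contraction lemma \citep{ledoux2013probability}, applied coordinatewise with $y_j$ fixed, therefore removes the loss:
\[
\mathfrak R_m(\ell\circ\mathcal G\circ\hat f)\le \mathfrak R_m(\mathcal G\circ\hat f).
\]
Any constant offset $\ell(0,y)$ contributes nothing, because the Rademacher signs have mean zero. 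The constant $2$ from symmetrization may need to be absorbed, or tracked into the final constant.

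\textbf{Step 3: chaining on the point set.} I bound $\hat{\mathfrak R}_m(\mathcal G\circ\hat f)$ with the finite-class lemma, or with Dudley's entropy integral \citep{dudley1967sizes} truncated at a single scale. Because $|g|\le B_{\mathcal G}$, the class is bounded. Covering the vectors $(g(U_1),\dots,g(U_m))$ in $\|\cdot\|_\infty$ at scale $\epsilon$ and using the standard pseudo-dimension covering bound \citep{anthony2009neural} gives, for $m\ge Pdim(\mathcal G)$,
\[
\log\mathcal C_m(\epsilon,\mathcal G,\|\cdot\|_\infty)\le Pdim(\mathcal G)\log\!\big(2emB_{\mathcal G}/(\epsilon\,Pdim(\mathcal G))\big).
\]
Massart's lemma then yields
\[
\hat{\mathfrak R}_m\lesssim \epsilon + B_{\mathcal G}\sqrt{2\log\mathcal C_m/m}.
\]
Taking $\epsilon\asymp 1/m$, or simply using the crude bound $\log(2emB_{\mathcal G})$, gives
\[
\hat{\mathfrak R}_m\lesssim B_{\mathcal G}\sqrt{Pdim(\mathcal G)\log(2emB_{\mathcal G})/m}.
\]
\textbf{Step 4: pseudo-dimension of ReLU networks.} Finally I plug in the bound of \citet{bartlett2019nearly},
\[
Pdim(\mathcal G)\lesssim \mathcal S_g\mathcal D_g\log(\mathcal S_g),
\]
to obtain the stated rate.

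The main obstacle is the constant bookkeeping. Symmetrization contributes a factor $2$, and the discretization term $\epsilon$ is additive, so reproducing the display with leading constant exactly $B_{\mathcal G}$ requires care. I would first try to absorb both into the $\log(2emB_{\mathcal G})$ factor by choosing $\epsilon=B_{\mathcal G}/m$ and using $m\ge Pdim(\mathcal G)\ge1$. If exact constants cannot be matched this way, I would present the bound up to a universal constant, which is all that Theorem~\ref{theorem_main} needs.
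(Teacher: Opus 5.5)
Your proposal follows the same route as the paper's proof: ghost-sample symmetrization, then Talagrand contraction using the $1$-Lipschitz margin-logistic loss, then a pseudo-dimension covering bound for the bounded class $\mathcal G$ fed into an entropy bound, and finally the ReLU pseudo-dimension bound $Pdim(\mathcal G)\lesssim \mathcal S_g\mathcal D_g\log(\mathcal S_g)$; the paper uses Dudley's integral where you use a single-scale Massart step, which costs only constants and a logarithm. Your version is slightly cleaner, since you symmetrize directly over the labeled source sample conditionally on $\hat f$ rather than detouring through a target sample and Lemma~\ref{lem_risk4ST}, and your ``up to a universal constant'' caveat matches what the paper's argument actually delivers, since it too drops the factors from symmetrization, Dudley's integral and the pseudo-dimension bound.
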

This bound separates the effect of the labeled sample size $m$ from the complexity of the predictor class $\mathcal G$ and decreases as $m$ grows.
~~\\
\noindent
 \textbf{Bound (II).} 
Term (II) represents the approximation error of $g^*$ when using the DNN class $\mac{G}$ as the hypothesis class. More
precisely, we have
\begin{equation*}
\begin{split}
   |\mac{R}_{\hat{f}}(g) - \mac{R}_{\hat{f}}(g^*)| &= |\mab{E}_{P_{T}(X,Y)}[\ell(g\circ\hat{f}(X), Y) - \ell(g^*\circ\hat{f}(X), Y)]| \\
   & \leq \mab{E}_{P_{T}(X,Y)}[|\ell(g\circ\hat{f}(X), Y) - \ell(g^*\circ\hat{f}(X), Y)|]\\
   & \leq c_{\ell}\mab{E}_{P_{T}}[|g(\hat{f}(X)) - g^*(\hat{f}(X))|]\\
   & \leq c_{\ell} 
   \|g - g^*\|_{\infty},
\end{split}
\end{equation*}
where the second inequality follows from the 
$c_{\ell}$-Lipschitz continuity of $\ell(\cdot,\cdot)$. By taking infimum on both sides of the above inequality 
Taking the infimum and applying the approximation result of \citet{jiao2021deep}, stated in Lemma~\ref{lem_appro}, bounds term (II).
~~\\
\noindent
\textbf{Bound (III).}
Term (III) transfers the downstream representation error to the excess pretraining risk, which we then decompose into statistical and approximation terms.
By the loss's Lipschitz property and the compact-domain H\"older modulus $C_g$ of $g^*$, Assumption~\ref{assum_rep} gives
\[
\mathcal R_{\hat f}(g^*)-\mathcal R_{f^*}(g^*)
\leq\frac{C_gC_{\rm cal}}{\alpha}
\{\mathcal R_{\mathcal M_h}(\hat f)-\mathcal R_{\mathcal M_h}(f^*)\}.
\]
Here $\alpha$ is the source mixture proportion; the constants are fixed independently of the sample sizes.
Thus, it suffices to bound the excess pretraining risk $\mac{R}_{\mac{M}_h}(\hat{f})-\mac{R}_{\mac{M}_h}(f^*)$, which decomposes into the following three terms:
\begin{equation*}
 \begin{split}
&\mac{R}_{\mac{M}_h}(\hat{f}) - \mac{R}_{\mac{M}_h}(f^*)\\
&\leq \underbrace{2\sup_{f\in \mac{F}}|\mac{R}_{\mac{H}}(f) - \hat{\mac{R}}_{\mac{H}}(f)|}_{\text{statistical error:~}  \mac{E}_{sta}}
+ \underbrace{2\sup_{f\in \mac{F}}|\mac{R}_{\mac{H}}(f) - \mac{R}_{\mac{M}_h}(f)|}_{\text{approximation error:~}  \mac{E}_{h}}\\
&\quad + \underbrace{\inf_{f \in \mac{F}}\mac{R}_{\mac{M}_h}(f) - \mac{R}_{\mac{M}_h}(f^*)}_{\text{approximation error:~} \mac{E}_{f}}.
\end{split}   
\end{equation*}
Details regarding the error transformation and decomposition can be found in Appendix \ref{err_trans_deom_III}. 
For the statistical error  $\mac{E}_{sta}$,  it can be bounded by the sum  of the two parts  below: 
\begin{equation*}
\sup_{f\in \mac{F}}|\mac{R}_{\mac{H}}(f) - \hat{\mac{R}}_{\mac{H}}(f)| 
\leq \sup_{f\in \mac{F}}|\hat{\mac{R}}_{cl}(f) - \mac{R}_{cl}(f) | +  \sup_{f\in \mac{F}, h \in \mac{H}}|  \hat{\mac{R}}_{d}(f, h) - \mac{R}_{d}(f,h)|. 
\end{equation*}
Appendix~\ref{prof_pat3} derives this decomposition, and Lemmas~\ref{lem_sta_f}--\ref{lem_sta_h} bound its two terms.
\begin{lemma}\label{lem_sta_f}
For independent sampled triples and the bounded representation class specified in Section~\ref{sec_consis},
\[
\mathbb E\sup_{f\in\mathcal F}
|\mathcal R_{cl}(f)-\hat{\mathcal R}_{cl}(f)|
\lesssim d \mathcal K_f
\sqrt{\frac{\mathcal D_f+2+\log(p+1)}{n}}.
\]
\end{lemma}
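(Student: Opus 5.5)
We plan to follow the standard route of symmetrization, Lipschitz contraction, and a norm-based Rademacher bound for the class $\mathcal F$. Write $\Phi(t):=\log(1+e^{t})$. The summands $\ell_{cl}(X_i,X_i^+,X_i^-,f)$ are i.i.d. in $i$, because the triples are sampled independently. Symmetrization therefore gives
\[
\mathbb E\sup_{f\in\mathcal F}|\mathcal R_{cl}(f)-\hat{\mathcal R}_{cl}(f)|\le 2\,\mathbb E\,\mathbb E_\zeta\sup_{f\in\mathcal F}\frac1n\sum_{i=1}^n\zeta_i\,\Phi\big(f(X_i)^\top(f(X_i^-)-f(X_i^+))\big).
\]
Since $\Phi$ is $1$-Lipschitz, Talagrand's contraction removes $\Phi$ at the cost of a factor $1$ (or $2$ with absolute values). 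We are then left with the Rademacher complexity of the scalar class $\{(x,x^+,x^-)\mapsto f(x)^\top(f(x^-)-f(x^+)) : f\in\mathcal F\}$.

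Next we reduce this bilinear class to coordinate-wise network classes. By Assumption~\ref{assum_holder} every $f\in\mathcal F$ takes values in $[0,1]^d$. The map $(u,v,w)\mapsto u^\top(w-v)$ restricted to $[0,1]^{3d}$ is Lipschitz: its partial gradients have entries bounded by $1$. We apply Maurer's vector-contraction inequality \citep{maurer2016vector} to the $3d$-dimensional vector $(f(x),f(x^+),f(x^-))$. This bounds the complexity by a constant times $\sum_{j=1}^d$ of the Rademacher complexities of the coordinate classes $\mathcal F_j=\{f_j\}$, evaluated on the three sample sets $\{X_i\}$, $\{X_i^+\}$, $\{X_i^-\}$.

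If one prefers to avoid Maurer, there is an elementary alternative. Expand $f(x)^\top(f(x^-)-f(x^+))=\sum_j f_j(x)f_j(x^-)-f_j(x)f_j(x^+)$ and use the polarization $ab=\tfrac14[(a+b)^2-(a-b)^2]$ with $|a\pm b|\le2$. The square is $4$-Lipschitz on $[-2,2]$, so contraction again reduces everything to $\mathfrak R_n(\mathcal F_j)$ on each sample set. Either way the output is
\[
\lesssim \sum_{j=1}^d \mathfrak R_n(\mathcal F_j),
\]
and the $d$ in the lemma comes exactly from this sum.

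Finally we need a norm-based bound for each $\mathcal F_j$. Each coordinate $f_j$ is a scalar network in $\mathcal{NN}(\mathcal W_f,\mathcal D_f,\mathcal K_f)$ on inputs in $[0,1]^p$, because the row-sum norm of the last layer restricted to one row is bounded by the full one. We invoke the Rademacher bound for norm-constrained networks of \citet{jiao2023approximation}, in the style of \citet{golowich2020size}: absorb the biases by appending a constant coordinate, so inputs lie in $\mathbb R^{p+1}$ with $\|x\|_\infty\le1$. That bound reads
\[
\mathfrak R_n(\mathcal F_j)\lesssim \mathcal K_f\sqrt{\frac{\mathcal D_f+2+\log(p+1)}{n}},
\]
with no width dependence under the $\|\cdot\|_\infty$ operator norm. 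Summing over $j$ gives the claim.

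The main obstacle is this last step. We must check that the peeling argument is valid under the max-row-sum product constraint of Definition~\ref{def_norm_network}, including the $\max\{\|(W_i,b_i)\|,1\}$ factors. The approach is the exponential-moment peeling trick: peel one layer at a time using $\|W\|_{\infty\to\infty}$ and positive homogeneity of ReLU, then optimize a temperature parameter. This yields the $\log(p+1)$ term from the $\ell_1$-dual over the $p+1$ input coordinates and the $\mathcal D_f+2$ term from the per-layer $\log 2$ costs. A secondary point is keeping the dependent triple structure harmless. It is harmless because we only need independence across $i$, and the contraction acts pointwise on each triple.
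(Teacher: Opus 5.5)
Your proposal is correct and follows essentially the paper's route: symmetrization over the i.i.d.\ triples, Maurer's vector contraction (Lemma~\ref{lem_vector_rademacherc}) applied to $v_f=(f(x),f(x^+),f(x^-))$ with $q=3d$, and then the coordinatewise norm-constrained Rademacher bound of Lemmas~\ref{lem_elemet_radermachec}--\ref{lem_rademachec_nomalconstrain}, summed over the $d$ outputs. Your first step, removing $\Phi$ by scalar contraction, is a harmless reordering. Your account of the bias and $\max\{\|(W_i,b_i)\|,1\}$ bookkeeping matches what the cited bound of \citet{jiao2023approximation} supplies.

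One justification needs fixing. Bounded gradient entries do not give an $O(1)$ Lipschitz constant in the Euclidean norm, which is what Maurer's inequality requires. On $[0,1]^{3d}$ the map $(u,v,w)\mapsto u^\top(w-v)$, and likewise $\ell_{cl}$ itself, has gradient proportional to $(w-v,-u,u)$, whose Euclidean norm reaches order $\sqrt{3d}$. Applied to the full $3d$-vector, the Maurer step therefore gives $d^{3/2}\mathcal K_f\sqrt{(\mathcal D_f+2+\log(p+1))/n}$, unless the implicit constant may depend on $d$. The paper's own proof invokes an unspecified Lipschitz constant of $\ell_{cl}$ at this step and so carries the same hidden $\sqrt d$.

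Two variants avoid this. Your polarization alternative (with $t\mapsto t^2/4$ being $1$-Lipschitz on $[-2,2]$) gives the linear factor $d$ with a universal constant. So does applying Maurer separately to each triple $(f_j(x),f_j(x^+),f_j(x^-))$, with Lipschitz constant $\sqrt3$, after splitting $\sum_j$ by subadditivity of the supremum. You should lead with one of these rather than the full $3d$-dimensional contraction.
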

\begin{lemma}\label{lem_sta_h}
For independent domain-labeled anchors and the norm-constrained classes in Section~\ref{sec_consis},
\[
\mathbb E\sup_{f\in\mathcal F,h\in\mathcal H}
|\mathcal R_d(f,h)-\hat{\mathcal R}_d(f,h)|
\lesssim  \frac{ \mac{K}_f \mac{K}_h d\sqrt{2(\mac{D}_f + 2 + \log(p+1))}}{\sqrt{n}} .
\]
\end{lemma}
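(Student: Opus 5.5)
The plan is symmetrization, then contraction, then a Rademacher bound for norm-constrained networks applied to the composite class $\{h\circ f\}$. The main difficulty is to keep the dependence on depth and norm in product form $\mathcal K_f\mathcal K_h$.

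\textbf{Step 1: symmetrize and strip the loss.} The pairs $(X_k,Z_k)$ are i.i.d. from $P_\mu$, so the standard symmetrization inequality gives
\[
\mathbb E\sup_{f,h}|\mathcal R_d(f,h)-\hat{\mathcal R}_d(f,h)|\leq 2\,\mathbb E\,\mathfrak R_n(\ell_d\circ(\mathcal H\circ\mathcal F)),
\]
where the loss class is $\{(x,z)\mapsto-\log(1+e^{-z\,h(f(x))})\}$. The map $a\mapsto-\log(1+e^{-a})$ is $1$-Lipschitz, as noted after \eqref{def_g}. The Ledoux--Talagrand contraction \citep{ledoux2013probability} therefore reduces this to $\mathfrak R_n(\{(x,z)\mapsto z\,h(f(x))\})$. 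Conditionally on the $Z_k$, each $\zeta_kZ_k$ is again a Rademacher variable. Hence the domain labels drop out, and it suffices to bound $\mathfrak R_n(\mathcal H\circ\mathcal F)$ on inputs in $[0,1]^p$.

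\textbf{Step 2: realize $h\circ f$ as one norm-constrained network.} In Definition~\ref{def_norm_network}, the output layer $W^f_{\mathcal D_f}$ of $f$ has no bias. It can therefore be merged with the first layer $(W_0^h,b_0^h)$ of $h$ into the single affine map $(W_0^hW^f_{\mathcal D_f},b_0^h)$. Since $\|\cdot\|$ is the $\ell_\infty$-operator norm, it is submultiplicative, so
\[
\max\{\|(W_0^hW^f_{\mathcal D_f},b_0^h)\|,1\}\leq \max\{\|(W_0^h,b_0^h)\|,1\}\,\max\{\|W^f_{\mathcal D_f}\|,1\}.
\]
The layer product constraint of the composite is then at most a constant multiple of $\mathcal K_f\mathcal K_h$. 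The clipping to $[0,1]^d$ from Assumption~\ref{assum_holder} and Lemma~\ref{lem_norm_constrain_appro} costs one extra hidden layer. Its identity/clip block acts on $d$ coordinates, and I expect this block, when converting between the row-sum norm and the coordinatewise bound $\|f(x)\|_2\le\sqrt d$, to be where the factor $d$ enters. The composite is thus a network in $\mathcal{NN}(\cdot,\mathcal D_f+\mathcal D_h+1,\,c\,d\,\mathcal K_f\mathcal K_h)$.

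\textbf{Step 3: apply the norm-based bound.} I will use the norm-based Rademacher bound for this class \citep{golowich2020size,jiao2023approximation}, in the form already used for Lemma~\ref{lem_sta_f}, which for $\|x\|_\infty\le1$ reads
\[
\mathfrak R_n\lesssim \mathcal K\sqrt{2(\mathcal D+2+\log(p+1))/n}.
\]
Plugging in the composite norm and depth gives the claim, provided the depth $\mathcal D_h$ is absorbed into $\lesssim$; $\mathcal D_h$ is fixed by choice (b) independently of $n$.

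The main obstacle is Step 2 together with the fact that the supremum runs jointly over $h$ and $f$. I cannot simply apply vector contraction \citep{maurer2016vector} to a fixed Lipschitz $h$, because $h$ varies. If merging the two networks produces a depth term $\mathcal D_h$ that cannot be absorbed, I would fall back to a layer-peeling argument in the style of \citet{golowich2020size}. This uses the exponential-moment trick to peel off the layers of $h$ one at a time, each contributing its norm factor. It ends with the $d$-dimensional output of $f$, handled coordinatewise, which yields $d\,\mathcal K_f\mathcal K_h\sqrt{2(\mathcal D_f+2+\log(p+1))/n}$ up to an absolute constant.
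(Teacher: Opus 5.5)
Your proposal is correct. After the shared first step it takes a genuinely different route from the paper.

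Both arguments symmetrize and contract the $1$-Lipschitz logistic utility, reducing the problem to $\mathfrak R_n(\mathcal H\circ\mathcal F)$.

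The paper then treats the composite class modularly:
\begin{itemize}
\item it covers $\mathcal H\circ\mathcal F$ by combining a cover of $\mathcal F$ with, for each cover element, a cover of $\mathcal H$ on the induced features, using $L(\mathcal H)\lesssim\mathcal K_h$;
\item it inserts this cover into Dudley's entropy integral and converts covering numbers to Gaussian and then Rademacher complexities via Sudakov minoration and the Gaussian--Rademacher comparison;
\item it applies the $d$-output bound of Lemma~\ref{lem_elemet_radermachec} to $\mathcal F$, which is where its factor $d$ comes from, and the scalar bound to $\mathcal H$;
\item this leaves an extra term $\mathcal K_h\sqrt{2(\mathcal D_h+2+\log(d+1))/n}$, which the paper absorbs into $\lesssim$.
\end{itemize}

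You instead concatenate $h\circ f$ into one scalar norm-constrained network of depth $\mathcal D_f+\mathcal D_h$ (one more with clipping). You then apply the width-free bound of Lemma~\ref{lem_rademachec_nomalconstrain} once.

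What your route buys:
\begin{itemize}
\item It is shorter.
\item It avoids the $\ln n$ and $\log(B_{\mathcal H}/\delta)$ factors that the Dudley--Sudakov chain produces as written.
\item It needs no factor $d$, so it gives a slightly stronger bound.
\item The proof of Lemma~\ref{lem_pat3} in fact uses the composite depth $\mathcal D_f+\mathcal D_h$ in the form you obtain.
\end{itemize}
What the paper's route buys: it only needs $\mathcal H$ to be uniformly Lipschitz with controlled complexity on arbitrary feature samples. It therefore does not rely on $f$ and $h$ being concatenable into a single network of the same norm-constrained type.

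There are three details to tighten.

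\textbf{Rescaling before merging.} Submultiplicativity alone does not give a merged budget of order $\mathcal K_f\mathcal K_h$. The constraint on $f$ controls $\|W^f_{\mathcal D_f}\|\prod_{i<\mathcal D_f}\max\{\|(W^f_i,b^f_i)\|,1\}$. The merged product instead contains $\max\{\|W^f_{\mathcal D_f}\|,1\}$ times the same product, which is unbounded when $\|W^f_{\mathcal D_f}\|$ is small. First rescale using ReLU homogeneity:
\begin{itemize}
\item divide any hidden layer with $\|(W_i,b_i)\|=c>1$ by $c$;
\item multiply the next weight matrix (not its bias) by $c$, which does not increase the product.
\end{itemize}
After this, all hidden layers of $f$ and $h$ satisfy $\|(W_i,b_i)\|\le 1$. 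The merged budget is then at most $\mathcal K_h\max\{\mathcal K_f,1\}$, up to a constant factor from the clip block.

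\textbf{The factor $d$.} It does not arise from the clip block. It is simply slack in the target bound, since $d\ge 1$.

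\textbf{Depth of $\mathcal H$.} Choice (b) only lower-bounds $\mathcal D_h$. Absorbing $\sqrt{\mathcal D_h}$ into $\lesssim$ therefore requires keeping $\mathcal D_h$ bounded. This is the same implicit convention under which the paper drops its own $\mathcal D_h$ term. Your layer-peeling fallback is not needed.
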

Lemmas \ref{lem_sta_f} and \ref{lem_sta_h} control the contrastive term over $\mathcal F$ 
and the domain term over the composite class $\mathcal H\circ\mathcal F$, respectively. 
Both use the joint unlabeled sample size $n$. The domain term depends on both network classes, 
which couples feature-extractor and discriminator complexity in the pretraining bound.
For the approximation errors $\mac E_h$ and $\mac E_f$, Appendix~\ref{prof_pat3} shows that
\[
\mac E_h\lesssim
\sup_{h^*\in\mac M_h}\inf_{h\in\mac H}\|h-h^*\|_\infty,
\qquad
\mac E_f\lesssim\inf_{f\in\mac F}\|f-f^*\|_\infty.
\]
Consequently,
Lemma~\ref{lem_norm_constrain_appro}, based on the approximation results of \citet{jiao2023approximation}, bounds these two errors.
Therefore, we can now bound the excess risk $\mac{R}_{\mac{M}_h}(\hat{f}) - \mac{R}_{\mac{M}_h}(f^*) $.
\begin{lemma}\label{lem_pat3}
Under Assumptions~\ref{assum_holder} and independent representation sampling, and the choices (a)--(b),
\[
\mathbb E\{\mathcal R_{\mathcal M_h}(\hat f)-\mathcal R_{\mathcal M_h}(f^*)\}
\lesssim 
\frac{d \mathcal K_f}{\sqrt{n}} + \mathcal K_f\mathcal K_h
\sqrt{\frac{\mathcal D_h+2+\log(d+1)}{n}}+\mathcal K_f^{-\frac{\beta_f}{p+1}}+\mathcal K_h^{-\frac{\beta_h}{d+1}}.
\]
This representation-stage result isolates the statistical and approximation costs of contrastive--adversarial pretraining before the downstream calibration step.
\end{lemma}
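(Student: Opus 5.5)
We prove Lemma~\ref{lem_pat3} by taking expectations in the three-term decomposition of the excess pretraining risk stated above,
\[
\mathcal R_{\mathcal M_h}(\hat f)-\mathcal R_{\mathcal M_h}(f^*)\leq \mathcal E_{sta}+\mathcal E_h+\mathcal E_f ,
\]
and then bounding each term with a result already established. The derivation of the decomposition is standard. Let $\bar f\in\mathcal F$ nearly attain $\inf_{f\in\mathcal F}\mathcal R_{\mathcal M_h}(f)$. We pass through the chain
\[
\mathcal R_{\mathcal M_h}(\hat f)\to\mathcal R_{\mathcal H}(\hat f)\to\hat{\mathcal R}_{\mathcal H}(\hat f)\le\hat{\mathcal R}_{\mathcal H}(\bar f)\to\mathcal R_{\mathcal H}(\bar f)\to\mathcal R_{\mathcal M_h}(\bar f).
\]
The middle inequality uses the minimax optimality of $\hat f$ in \eqref{eq_f}. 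Each arrow costs at most $\sup_{f}|\mathcal R_{\mathcal M_h}-\mathcal R_{\mathcal H}|$ or $\sup_f|\mathcal R_{\mathcal H}-\hat{\mathcal R}_{\mathcal H}|$.

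\textbf{Statistical term.} Since $|\sup_h a_h-\sup_h b_h|\le\sup_h|a_h-b_h|$, and the contrastive part does not depend on $h$, we get the split into $\sup_f|\hat{\mathcal R}_{cl}-\mathcal R_{cl}|$ plus $\sup_{f,h}|\hat{\mathcal R}_d-\mathcal R_d|$. We bound the first piece with Lemma~\ref{lem_sta_f}, which gives order $d\mathcal K_f/\sqrt n$ because $\mathcal D_f$ is of logarithmic order and is absorbed. For the second piece we use Lemma~\ref{lem_sta_h} on the composite class $\mathcal H\circ\mathcal F$. To match the stated form $\mathcal K_f\mathcal K_h\sqrt{(\mathcal D_h+2+\log(d+1))/n}$, we peel the composite class in a particular way. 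First we apply the 1-Lipschitz contraction of the logistic loss in its score. We then treat $h$ as a norm-constrained network on inputs $f(x)\in[0,1]^d$, using the Golowich-type depth-logarithmic bound with input dimension $d$, and scale by the Lipschitz constant $\mathcal K_f$ contributed by the layers of $f$. This is the same argument as in Lemma~\ref{lem_sta_h}, with the roles of depth and dimension taken from $h$'s side. Either form suffices up to constants and logarithms.

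\textbf{Approximation terms.} For $\mathcal E_h$, fix $f$. The map $h\mapsto\mathcal R_d(f,h)$ is $1$-Lipschitz in sup norm, since the logistic utility is 1-Lipschitz in the score. Hence $|\mathcal R_{\mathcal H}(f)-\mathcal R_{\mathcal M_h}(f)|\le\sup_{h^*\in\mathcal M_h}\inf_{h\in\mathcal H}\|h-h^*\|_\infty$, where the $\sup$/$\inf$ exchange is handled by comparing near-maximizers in each direction. This inequality needs care, because $\mathcal H$ need not be contained in $\mathcal M_h$. We handle it by assuming, or arranging through clipping, that the relevant maximizers lie in a ball where both classes approximate each other. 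Lemma~\ref{lem_norm_constrain_appro} with input dimension $d$ and smoothness $\beta_h$ then yields $\mathcal K_h^{-\beta_h/(d+1)}$.

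For $\mathcal E_f$ we must show that $\mathcal R_{\mathcal M_h}$ is Lipschitz in $f$ under the sup norm. For the contrastive loss this follows from the boundedness $\|f\|_2\le\sqrt d$: the loss is smooth in $(f(X),f(X^+),f(X^-))$ with gradient bounded by $O(\sqrt d)$. For the domain part, $h^*\circ f$ changes by at most the Lipschitz constant of $h^*$ times $\|f-f'\|_\infty^{\rho_h}$. The main obstacle is exactly here. When $\beta_h<1$ only a Hölder modulus with exponent $\rho_h$ is available, and the supremum over $\mathcal M_h$ must be taken uniformly. We would first try bounding with exponent $\rho_h$ and note that under $\beta_h\ge1$, the regime of interest, this is Lipschitz. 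Lemma~\ref{lem_norm_constrain_appro} with input dimension $p$ and smoothness $\beta_f$, applied coordinatewise with $d$ outputs and clipping into $[0,1]^d$, then yields $\mathcal K_f^{-\beta_f/(p+1)}$.

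Summing the four bounds and taking expectation over $D_{\bar C},D_J$ gives the claim. The network-size requirements (a)--(b) on width and depth are exactly what Lemma~\ref{lem_norm_constrain_appro} needs in order to realize these approximation rates at the given norm budgets.
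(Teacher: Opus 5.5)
Your outline is the paper's proof: the same decomposition into $\mathcal E_{sta}+\mathcal E_h+\mathcal E_f$, the same split of the statistical term handled by Lemmas~\ref{lem_sta_f} and~\ref{lem_sta_h}, and the same reduction of both approximation terms to Lemma~\ref{lem_norm_constrain_appro}.

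\textbf{The gap in $\mathcal E_h$.} This is the issue you flag yourself, and your patch does not close it. The chain uses two differences:
\begin{itemize}
\item $\mathcal R_{\mathcal M_h}(\hat f)-\mathcal R_{\mathcal H}(\hat f)$ at $\hat f$. This is one-sided and is correctly bounded by $\sup_{h^*\in\mathcal M_h}\inf_{h\in\mathcal H}\|h-h^*\|_\infty$.
\item $\mathcal R_{\mathcal H}(\bar f)-\mathcal R_{\mathcal M_h}(\bar f)$ at the comparator. This reverse difference requires every near-maximizing network discriminator to be matched by some element of $\mathcal M_h$.
\end{itemize}
Clipping does not supply the second condition. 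Even clipped networks can have Lipschitz constants of order $\mathcal K_h$ and are not $C^{\bar\beta_h}$, so they need not be close to a H\"older ball of fixed radius $B_h$. When the feature-space domain log-odds lies outside $\mathcal M_h$, $\mathcal H$ can beat $\mathcal M_h$ by a non-vanishing margin, since the objective is strictly concave in $h$.

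The paper does not resolve this either. It passes from $|\sup_{h_2}\inf_{h_1}\{\mathcal R_d(f,h_2)-\mathcal R_d(f,h_1)\}|$ to $\sup_{h_2}\inf_{h_1}\mathbb E|h_2\circ f-h_1\circ f|$, which is invalid when the inner quantity is negative. An explicit extra hypothesis is needed. One example: the Bayes domain discriminator $u\mapsto\log\{\eta_{\bar f}(u)/(1-\eta_{\bar f}(u))\}$, with $\eta_{\bar f}(u)=P(Z=1\mid\bar f(X)=u)$, belongs to $\mathcal M_h$. Then $\mathcal R_{\mathcal M_h}(\bar f)$ is the unconstrained maximum and dominates $\mathcal R_{\mathcal H}(\bar f)$.

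\textbf{The restriction $\beta_h\ge1$ in $\mathcal E_f$.} Your restriction is necessary, not cosmetic. The paper invokes Lipschitz continuity of $h\in\mathcal M_h$ without it, and for $\beta_h<1$ the argument only yields $\mathcal K_f^{-\rho_h\beta_f/(p+1)}$. With these two hypotheses made explicit, your argument proves the stated bound; without them, neither your proof nor the paper's does.

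\textbf{The peeling step.} This is a smaller point: the peeling you describe for the $\mathcal K_f\mathcal K_h$ term is not valid as stated. The supremum runs jointly over $f$ and $h$, so the inputs $f(X_k)$ fed to $h$ are not fixed. The Rademacher complexity of $\mathcal H\circ\mathcal F$ is therefore not that of $\mathcal H$ on $[0,1]^d$ rescaled by a Lipschitz constant of $f$. Golowich-type peeling of the composite network must continue through the layers of $f$ down to the raw inputs in $[0,1]^p$. That gives $\mathcal K_f\mathcal K_h\sqrt{(\mathcal D_f+\mathcal D_h+2+\log(p+1))/n}$, which is the form the paper's proof actually obtains. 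The stated $\sqrt{\mathcal D_h+2+\log(d+1)}$ is recovered only by fixing the depths at their minimal admissible values and absorbing $p,d,\mathcal D_f,\mathcal D_h$ into $\lesssim$. Your fallback remark is the correct justification, and the peeling claim should be dropped.
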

Lemma \ref{lem_pat3} combines the joint statistical complexity of $\mathcal F$ and $\mathcal H$ with their approximation errors. 
At fixed network budgets and depths, the statistical terms scale as $n^{-1/2}$; balancing these terms with approximation yields 
the coupled pretraining rate. Thus the bound quantifies the role of the joint unlabeled sample size without assuming that the ideal 
functions are exactly realizable by the chosen networks.
By combining the results from \textbf{Bound (I)}--\textbf{Bound (III)}, we obtain the desired result in Theorem \ref{theorem_main}.
\section{Error Decompositions}\label{err_decompos}
Given the learned representation function $\hat{f}$, the excess risk for any $g \in \mac{G}$ can be decomposed as follows:
\begin{equation*}
    \begin{split}
      \mac{R}_{\hat{f}}(\hat{g}) - \mac{R}_{f^*}(g^*) &=  \mac{R}_{\hat{f}}(\hat{g}) - \hat{\mac{R}}_{\hat{f}}(\hat{g})
      +\hat{\mac{R}}_{\hat{f}}(\hat{g}) - \hat{\mac{R}}_{\hat{f}}(g)
      + \hat{\mac{R}}_{\hat{f}}(g) - \mac{R}_{\hat{f}}(g)\\
      &+\mac{R}_{\hat{f}}(g) - \mac{R}_{\hat{f}}(g^*)
      +\mac{R}_{\hat{f}}(g^*)  - \mac{R}_{f^*}(g^*)\\
       & \leq \mac{R}_{\hat{f}}(\hat{g}) - \hat{\mac{R}}_{\hat{f}}(\hat{g})
      + \hat{\mac{R}}_{\hat{f}}(g) - \mac{R}_{\hat{f}}(g)
      +\mac{R}_{\hat{f}}(g) - \mac{R}_{\hat{f}}(g^*)\\
      &+\mac{R}_{\hat{f}}(g^*)  - \mac{R}_{f^*}(g^*)\\
       &\leq \underbrace{2\sup_{g \in \mac{G}}|\hat{\mac{R}}_{\hat{f}}(g) - \mac{R}_{\hat{f}}(g)|}_{\mbox{(I)}} + \underbrace{\inf_{g\in \mac{G}}|\mac{R}_{\hat{f}}(g) - \mac{R}_{\hat{f}}(g^*)|}_{\mbox{(II)}}
      +\underbrace{\mac{R}_{\hat{f}}(g^*) - \mac{R}_{f^*}(g^*)}_{\mbox{(III)}}.
    \end{split}
\end{equation*}
Here, the first inequality follows from the definition of $\hat{g}$ in \eqref{def_g}. 
\section{Proof of Lemma \ref{lem_risk4ST}}\label{pro_risk4ST}
\begin{proof}
Let 
$f$ be the common representation function. Write $U^{\mac{S}}:=f(X^{\mac{S}})$ and $U^{\mac{T}}:=f(X^{\mac{T}})$ for the source and target representations. Let $P_S(U^{\mac{S}},Y^{\mac{S}})$ and $P_T(U^{\mac{T}},Y^{\mac{T}})$ denote their joint distributions, with marginals $P_S(U^{\mac{S}})$ and $P_T(U^{\mac{T}})$. Then, we have 
\begin{equation*}
    \begin{split}
    \mab{E}_{P_{T}(U^{\mac{T}}, Y^{\mac{T}})}[\ell(g(U^{\mac{T}}), Y^{\mac{T}})] &= \int \ell(g(U^{\mac{T}}), Y^{\mac{T}})P_{T}(U^{\mac{T}}, Y^{\mac{T}})dU^{\mac{T}}dY^{\mac{T}}\\
        &  = \int \ell(g(U^{\mac{T}}), Y^{\mac{T}})P_{T}(Y^{\mac{T}}|U^{\mac{T}})P_T(U^{\mac{T}}) dU^{\mac{T}}dY^{\mac{T}}\\
        &= \int \ell(g(U^{\mac{S}}), Y^{\mac{S}})P_{S}(Y^{\mac{S}}\mid U^{\mac{S}})P_S(U^{\mac{S}})\,dU^{\mac{S}}dY^{\mac{S}}\\
        &=\mab{E}_{P_S(U^{\mac{S}}, Y^{\mac{S}})}[\ell(g(U^{\mac{S}}), Y^{\mac{S}})],
    \end{split}
    \end{equation*}
    where the third equality follows 
    from
    $P_{S}(U^{\mac{S}}) = P_{T}(U^{\mac{T}})$ and $P_{S}(Y^{\mac{S}}|U^{\mac{S}}) = P_{T}(Y^{\mac{T}}|U^{\mac{T}})$. Then, 
    we have $g^{**} = g^*$.
\end{proof}
\section{Proof of Theorem \ref{theorem_main}}\label{prof_theorem_main}
\begin{proof}
We set the network architectures of $\mac{F,H, G}$ as follows:
\begin{enumerate}
\item[(a).] Let $\mac{F} := \mac{NN}(\mac{W}_f, \mac{D}_f, \mac{K}_f)$ be the norm-constrained DNNs with width
$\mathcal W_f\gtrsim d\mathcal K_f^{(2p+\beta_f)/(2p+2)}$, depth
$\mathcal D_f\geq2\lceil\log_2(p+\bar\beta_f)\rceil+3$, and norm constraint $\mac{K}_f \asymp n^{\beta_h(p+1)/(2\beta_{f}\beta_h +2\beta_f(d+1) +2\beta_h(p+1))}$,
\item[(b).] Let $\mac{H}:= \mac{NN}(\mac{W}_h, \mac{D}_h, \mac{K}_h)$
    be the norm-constrained DNNs with width
$\mathcal W_h\gtrsim\mathcal K_h^{(2d+\beta_h)/(2d+2)}$, depth
$\mathcal D_h\geq2\lceil\log_2(d+\bar\beta_h)\rceil+2$, and norm constraint $\mac{K}_h \asymp n^{\beta_f(d+1)/(2\beta_{f}\beta_h +2\beta_f(d+1) +2\beta_h(p+1))}$
\item[(c).] Let $\mac{G}:= \mac{NN}(\mac{W}_g, \mac{D}_g, \mac{S}_g)$ be ReLU DNNs with width $\mac{W}_g \geq m^{d/4(2\beta_{g} +d)}\log(m)$, 
depth $\mac{D}_g \geq m^{d/4(2\beta_{g} +d)}\log(m)$, and size  $\mac{S}_g \geq m^{3d/4(2\beta_{g} +d)}\log^3(m)$.
\end{enumerate}
By combining Lemmas  \ref{lem_sta_g}, \ref{lem_pat3}, and \ref{lem_appro}, we have
\begin{align*}
 \mathbb E\mathcal E_{\hat f}(\hat g) &\lesssim B_{\mac{G}}\sqrt{\frac{\mac{S}_g\mac{D}_g \log(\mac{S}_g)\log(2meB_{\mac{G}})}{m}} + 19B_{g}(\lfloor \beta_{g}  \rfloor +1)^2d^{\lfloor \beta_{g}  \rfloor + (\beta_{g} \vee 1)/2}(MN)^{-2\beta_{g}/d}\\
&+ \frac{d \mathcal K_f}{\sqrt{n}} + \mathcal K_f\mathcal K_h
\sqrt{\frac{\mathcal D_h+2+\log(d+1)}{n}}+\mathcal K_f^{-\frac{\beta_f}{p+1}}+\mathcal K_h^{-\frac{\beta_h}{d+1}},
\end{align*}
we then obtain the results in Theorem \ref{theorem_main} under the (a)--(c) network architectures.
\end{proof}
\section{Proof of Lemma \ref{lem_sta_g}.}\label{prof_sta_g}
\begin{proof}
Let $\bar{D}_{S}:= \{(X_i^{\mac{S}}, Y_i^{\mac{S}})\}_{i=1}^m$  and $\bar{D}_{T}:= \{(X_j^{\mac{T}}, Y_j^{\mac{T}})\}_{j=1}^m$ be the  
labeled datasets from the source and target domains, respectively.
Let $\bar{D}_{S^{\prime}}:= \{({X_i^{\mac{S}}}^{\prime}, {Y_i^{\mac{S}}}^{\prime})\}_{i=1}^m$ and $\bar{D}_{T^{\prime}}:= \{({X_j^{\mac{T}}}^{\prime}, {Y_j^{\mac{T}}}^{\prime})\}_{j=1}^m$ be independent ghost samples of $\bar{D}_{S}$ and $\bar{D}_{T}$. Let $\epsilon:=\{\epsilon_j\}_{j=1}^m$ be independent Rademacher variables, each taking values in $\{-1,1\}$ with equal probability. Taking the expectation of $\sup_{g \in \mac{G}}\{\mac{R}_{\hat{f}}(g)-\hat{\mac{R}}_{\hat{f}}(g)\}$ with respect to $\bar{D}_T$ yields
we have
\begin{equation*}
    \begin{split}
&\mab{E}_{\bar{D}_{T}}\bigg[\sup_{g \in \mac{G}} \mac{R}_{\hat{f}}(g)-\hat{\mac{R}}_{\hat{f}}(g) \bigg]\\
&= \mab{E}_{\bar{D}_{T}}\bigg[\sup_{g \in \mac{G}} \big(\mab{E}_{P_{T}}[\ell(g\circ \hat{f}(X^{\mac{T}}), Y^{\mac{T}})]- \frac{1}{m}\sum_{j=1}^m\ell(g\circ \hat{f}(X_j^{\mac{T}}), Y_j^{\mac{T}}) \big)\bigg]\\
        &= \mab{E}_{\bar{D}_{T}}\bigg[\sup_{g \in \mac{G}} \big(\mab{E}_{\bar{D}_{T^\prime}}[\frac{1}{m}\sum_{j =1}^m\ell(g\circ \hat{f}({X_j^{\mac{T}}}^\prime), {Y_j^{\mac{T}}}^\prime)]- \frac{1}{m}\sum_{j =1}^m\ell(g\circ \hat{f}(X_j^\mac{T}), Y_j^\mac{T}) \big)\bigg]\\
        &= \mab{E}_{\bar{D}_{T}}\bigg[\sup_{g \in \mac{G}} \mab{E}_{\bar{D}_{T^\prime}}\big[\frac{1}{m}\sum_{j =1}^m\ell(g\circ \hat{f}({X_j^\mac{T}}^\prime), {Y_j^\mac{T}}^\prime)- \frac{1}{m}\sum_{j=1}^m\ell(g\circ \hat{f}(X_j^\mac{T}), Y_j^\mac{T})\big] \bigg|\bar{D}_{T} \bigg]\\
        &\leq \mab{E}_{\bar{D}_{T},\bar{D}_{T^\prime}}\bigg[\sup_{g \in \mac{G}} \frac{1}{m}\sum_{j =1}^m \big(\ell(g\circ \hat{f}({X_j^\mac{T}}^\prime), {Y_j^\mac{T}}^\prime)- \ell(g\circ \hat{f}(X_j^\mac{T}), Y_j^\mac{T})\big) \bigg|\bar{D}_{T} \bigg] \quad \text{(Jensen's inequality)}\\
        &= 2\mab{E}_{\epsilon, \bar{D}_{T}}\bigg[\sup_{g \in \mac{G}} \frac{1}{m}\sum_{j =1}^m \epsilon_{j} \ell(g\circ \hat{f}(X_j^\mac{T}), Y_j^\mac{T})  \bigg] \quad \text{(symmetrization)}\\
        &\leq 2c_{\ell}\mab{E}_{\epsilon, \bar{D}_{T}}\bigg[\sup_{g \in \mac{G}} \frac{1}{m}\sum_{j =1}^m \epsilon_{j} g\circ \hat{f}(X_j^\mac{T})   \bigg]\\
        &:=2c_{\ell}\mathfrak{R}_{\bar{D}_{T}}(\mac{G} \circ \hat{f}).
    \end{split}
\end{equation*}
The last inequality follows from  Talagrand inequality \citet[Theorem 4.12]{ledoux2013probability}, since $\ell$ is $1$-Lipschitz.  Let $(U^{\mac{T}},Y^{\mac{T}}) = (\hat{f}(X^{\mac{T}}), Y^{\mac{T}})$ 
and $\bar{D}_{U^{\mac{T}}} = \{(U^{\mac{T}}_j, Y^{\mac{T}}_j)\}_{j=1}^m$ with distribution $P_{\bar{D}_{U^{\mac{T}}}}(U^\mac{T}, Y^\mac{T})$. Since $\hat{f}$ is fixed,
\begin{equation*}
 \mathfrak{R}_{\bar{D}_{T}}(\mac{G} \circ \hat{f}) =  \mab{E}_{\epsilon, \bar{D}_{T}}\bigg[\sup_{g \in \mac{G}} \frac{1}{m}\sum_{j =1}^m \epsilon_{j} g\circ \hat{f}(X_j) \bigg] \leq   \mab{E}_{\epsilon, \bar{D}_{U^{\mac{T}}}}\bigg[\sup_{g \in \mac{G}} \frac{1}{m}\sum_{j =1}^m \epsilon_{j} g(U_j) \bigg]:=\mathfrak{R}_{\bar{D}_{U^{\mac{T}}}}(\mac{G}).
\end{equation*}
Because the dataset $\bar{D}_{T}$ is unavailable, we cannot calculate the risk $\mathfrak{R}_{\bar{D}_{U^{\mac{T}}}}(\mac{G})$ in practice. However, by Lemma \ref{lem_risk4ST} and its proof, $P_{\bar{D}_T}(U^\mac{T}, Y^\mac{T}) = P_{\bar{D}_S}(U^\mac{S}, Y^\mac{S})$, and consequently $\mathfrak{R}_{\bar{D}_{U^{\mac{T}}}}(\mac{G}) = \mathfrak{R}_{\bar{D}_{U^{\mac{S}}}}(\mac{G})$.
Let $\mathcal{C}_{m}(\delta, \mathcal{G}, \|\cdot\|_{\infty})$ denote the uniform covering number of $\mac{G}$, where $\|g\|_{\infty}:=\max\{|g(U_j)|:j=1,\ldots,m\}$. Since $\|g\|_{\infty}\leq B_{\mac{G}}$ for $B_{\mac{G}}>0$, Dudley's entropy integral \citep{anthony2009neural} gives
\begin{equation}\label{eq_dudley}
    \mathfrak{R}_{\bar{D}_{U^{\mac{S}}}}(\mac{G}) \leq \mab{E}\inf_{0<\delta <B_{\mac{G}}} \bigg(4\delta + \frac{12}{\sqrt{m}}\int_{\delta}^{B_{\mac{G}}} \sqrt{\log(2\mathcal{C}_m(t, \mathcal{G}, \|\cdot\|_{\infty}))} \, dt\bigg).
\end{equation}
Let $Pdim(\mac{G})$ denote the pseudo dimension of $\mac{G}$. For $1\leq Pdim(\mac{G})<m$ and $0<\delta\leq B_{\mac{G}}$, applying \citet[Theorem 12.2]{anthony2009neural} to the shifted class $\{g+B_{\mac{G}}:g\in\mac{G}\}$, whose range lies in $[0,2B_{\mac{G}}]$, gives
\begin{equation}\label{eq_psu}
    \mathcal{C}_m(\delta,  \mathcal{G}, \|\cdot\|_{\infty}) \leq \left(\dfrac{2eB_{\mac{G}}m}{\delta Pdim(\mac{G})} \right)^{Pdim(\mac{G})}.
\end{equation}
Moreover, the ReLU network pseudodimension bound of \citet{bartlett2019nearly} gives, for $\mac{S}_g\geq2$ and $\mac{D}_g\geq1$, a universal constant $C>0$ such that
\begin{equation*}
    Pdim(\mac{G}) \leq C\cdot\mac{S}_g \mac{D}_g\log(\mac{S}_g).
\end{equation*}
Combining these bounds proves Lemma~\ref{lem_sta_g}:
\begin{equation*}
   \mathfrak{R}_{\bar{D}_{U^{\mac{S}}}}(\mac{G}) \leq B_{\mac{G}}\sqrt{\frac{\mac{S}_g\mac{D}_g\log(\mac{S}_g)\log(2meB_{\mac{G}})}{m}}.
\end{equation*} 
\end{proof}
\section{Error transform and decomposition}\label{err_trans_deom_III}
\begin{proof}
First, we bound $\mac{R}_{\hat{f}}(g^*) - \mac{R}_{f^*}(g^*)$ by $\mac{R}_{\mac{M}_h}(\hat{f}) - \mac{R}_{\mac{M}_h}(f^*)$:
\begin{equation*}
    \begin{split}
     \mac{R}_{\hat{f}}(g^*) - \mac{R}_{f^*}(g^*)& = \mab{E}_{P_S}[\ell(g^*\circ \hat{f}(X), Y) - \ell(g^*\circ f^*(X), Y)]\\
     &\leq c_{\ell}\mab{E}_{P_S}[|g^*\circ \hat{f}(X) - g^*\circ f^*(X)|]\\
     &\leq c_{\ell} C_{g}\mab{E}_{P_S}[\|\hat{f}(X) - f^*(X)\|_2^{\tilde{r}_g}]\\
     &\leq
     \frac{C_g }{\alpha}c_{\ell}\mab{E}_{P_{C}(X)}[\|\hat{f}(X) - f^*(X)\|_2^{\tilde{r}_g}]
     \\
     &\leq \frac{C_g C_{\rm cal}}{\alpha}c_{\ell} \big(\mac{R}_{\mac{M}_h}(\hat{f}) - \mac{R}_{\mac{M}_h}(f^*)\big),
    \end{split}
\end{equation*}
The first inequality uses the $c_{\ell}$-Lipschitz continuity of $\ell$, the second uses the H\"older modulus of $g^*$ with constant $C_g$, the third follows from $P_C(X)=\alpha P_S(X)+(1-\alpha)P_T(X)\geq\alpha P_S(X)$, and the last follows from Assumption~\ref{assum_rep}. We therefore obtain the following decomposition:
\begin{equation*}
\begin{split}
&\mac{R}_{\mac{M}_h}(\hat{f}) - \mac{R}_{\mac{M}_h}(f^*) \\ & =\mac{R}_{\mac{M}_h}(\hat{f}) - \mac{R}_{\mac{H}}(\hat{f})
   + \mac{R}_{\mac{H}}(\hat{f}) - \hat{\mac{R}}_{\mac{H}}(\hat{f})
   + \hat{\mac{R}}_{\mac{H}}(\hat{f}) - \hat{\mac{R}}_{\mac{H}}(f)\\
   &+ \hat{\mac{R}}_{\mac{H}}(f) - \mac{R}_{\mac{H}}(f)
   + \mac{R}_{\mac{H}}(f) - \mac{R}_{\mac{M}_h}(f) 
   + \mac{R}_{\mac{M}_h}(f) - \mac{R}_{\mac{M}_h}(f^*)\\
   & \leq \mac{R}_{\mac{M}_h}(\hat{f}) - \mac{R}_{\mac{H}}(\hat{f})
   + \mac{R}_{\mac{H}}(\hat{f}) - \hat{\mac{R}}_{\mac{H}}(\hat{f})\\
   &+ \hat{\mac{R}}_{\mac{H}}(f) - \mac{R}_{\mac{H}}(f)
   + \mac{R}_{\mac{H}}(f) - \mac{R}_{\mac{M}_h}(f) 
   + \mac{R}_{\mac{M}_h}(f) - \mac{R}_{\mac{M}_h}(f^*)\\
   &\leq \underbrace{2\sup_{f\in \mac{F}}|\mac{R}_{\mac{H}}(f) - \hat{\mac{R}}_{\mac{H}}(f)|}_{\text{statistical error:} \mac{E}_{sta}}
   + \underbrace{2\sup_{f\in \mac{F}}|\mac{R}_{\mac{H}}(f) - \mac{R}_{\mac{M}_h}(f)|}_{\text{approximation error:} \mac{E}_{h}}\\
   &\quad + \underbrace{\inf_{f \in \mac{F}}\mac{R}_{\mac{M}_h}(f) - \mac{R}_{\mac{M}_h}(f^*)}_{\text{approximation error:} \mac{E}_{f}}.
\end{split}
\end{equation*}
\end{proof}
\section{Proof of Lemma \ref{lem_sta_f}}\label{prof_sta_f}
\begin{proof}
Considering the expectation of $\sup_{f \in \mac{F}}(\mac{R}_{cl}(f) - \hat{\mac{R}}_{cl}(f))$, we obtain
  \begin{equation*}
      \begin{split}
       &\mab{E}_{D_{\bar{C}}}[\sup_{f \in \mac{F}} (\mac{R}_{cl}(f) - \hat{\mac{R}}_{cl}(f))] =  \mab{E}_{D_{\bar{C}}}\bigg[\sup_{f \in \mac{F}} \big(\mab{E}_{P_{C}}\big[\ell_{cl}(X,f)\big] - \frac{1}{n}\sum_{i =1}^n\ell_{cl}(X_i, f)\big)\bigg] \\
       &= \mab{E}_{D_{\bar{C}}}\bigg[\sup_{f \in \mac{F}} \big(\mab{E}_{D_{\bar{C}^\prime}}\big[\frac{1}{n}\sum_{i=1}^n\ell_{cl}(X_i^\prime,f)\big] - \frac{1}{n}\sum_{i =1}^n\ell_{cl}(X_i, f)\big)\bigg]\\
       &= \mab{E}_{D_{\bar{C}}}\bigg[\sup_{f \in \mac{F}} \big(\mab{E}_{D_{\bar{C}^\prime}}\big[\frac{1}{n}\sum_{i=1}^n\ell_{cl}(X_i^\prime,f) - \frac{1}{n}\sum_{i =1}^n\ell_{cl}(X_i, f)\big)\big]\bigg|D_{\bar{C}}\bigg]\\
       &= \mab{E}_{D_{\bar{C}}}\bigg[\sup_{f \in \mac{F}} \mab{E}_{D_{\bar{C}^\prime}}\big[\frac{1}{n}\sum_{i=1}^n \big(\ell_{cl}(X_i^\prime,f) - \ell_{cl}(X_i, f)\big)\big]\bigg|D_{\bar{C}}\bigg]\\
       &\leq \mab{E}_{D_{\bar{C}},D_{\bar{C}^\prime}}\bigg[\sup_{f \in \mac{F}} \frac{1}{n}\sum_{i=1}^n \big(\ell_{cl}(X_i^\prime,f) - \ell_{cl}(X_i, f)\big)\bigg|D_{\bar{C}}\bigg] \quad \text{(Jensen's inequality)}\\
       &= \mab{E}_{\epsilon, D_{\bar{C}},D_{\bar{C}^\prime}}\bigg[\sup_{f \in \mac{F}} \frac{1}{n}\sum_{i=1}^n \epsilon_i \big(\ell_{cl}(X_i^\prime,f) - \ell_{cl}(X_i, f)\big)\bigg|D_{\bar{C}}\bigg] \quad \text{(Symmetrization)}\\
       &= 2\mab{E}_{\epsilon, D_{\bar{C}}}\bigg[\sup_{f \in \mac{F}} \frac{1}{n}\sum_{i=1}^n \epsilon_i  \ell_{cl}(X_i, f)\bigg]\\
       & = 2\mathfrak{R}_{\bar{C}}(\ell_{cl}\circ\mac{F}),
      \end{split}
  \end{equation*}   
where $D_{\bar{C}}=\{(X_i, X_i^+, X_i^-)\}_{i =1}^{n}$, 
 $D_{\bar{C}^\prime}:=\{(X^\prime_i, {X_i^\prime}^+, {X_i^\prime}^-)\}_{i =1}^{n}$ is the ghost sample of $D_{\bar{C}}$, 
and $\epsilon_i$ are independent Rademacher variables, each taking values in $\{-1,1\}$ with equal probability.
Since $f:\mab{R}^p\to\mab{R}^d$ and $\ell_{cl}$ is Lipschitz continuous,
\begin{equation*}
      \mathfrak{R}_{\bar{C}}(\ell_{cl}\circ\mac{F})\leq \ell_{cl}\mathfrak{R}_{\bar{C}}(\mac{F}) \leq  d \mac{K}_{f}\sqrt{\frac{\mathcal D_f+2+\log(p+1)}{n}}. 
\end{equation*}
The first inequality uses the Lipschitz continuity of $\ell_{cl}$, and the second follows from Lemma~\ref{lem_elemet_radermachec}.
\end{proof}
\section{Proof of Lemma \ref{lem_sta_h}}\label{prof_sta_h}
\begin{proof}
  We consider the expectation of  $\sup_{f\in \mac{F},h \in \mac{H}}\big(\mac{R}_{d}(f,h) - \hat{\mac{R}}_{d}(f,h)\big)$, i.e.,
  \begin{equation*}
\begin{split}
&\mab{E}_{D_{J}}[\sup_{f\in \mac{F},h \in \mac{H}}(\mac{R}_{d}(f,h) - \hat{\mac{R}}_{d}(f, h))]\\
&= \mab{E}_{D_{J}}\bigg[\sup_{f\in \mac{F},h \in \mac{H}}\big(\mab{E}_{P_{\mu}}[\ell_{d}(X, Z,f,h)] - \frac{1}{n}\sum_{k =1}^n\ell_{d}(X_k, Z_k,f, h)\big) \bigg]\\
       & = \mab{E}_{D_{J}}\bigg[\sup_{f\in \mac{F},h \in \mac{H}}\big(\mab{E}_{D_{J^\prime}}[\frac{1}{n}\sum_{k =1}^n \ell_{d}(X_k^\prime, Z_k^{\prime},f, h)] - \frac{1}{n}\sum_{k =1}^n\ell_{d}(X_k, Z_k,f, h)\big) \bigg| D_{J}\bigg]\\
        & \leq \mab{E}_{D_{J},D_{J^\prime}}\bigg[\sup_{f\in \mac{F},h \in \mac{H}}\frac{1}{n}\sum_{k =1}^n \big(\ell_{d}(X_k^\prime,Z_k^{\prime},f, h) - \ell_{d}(X_k, Z_k,f, h)\big) \bigg| D_{J}\bigg] \quad \text{(Jensen's inequality)}\\
        &= \mab{E}_{\epsilon, D_{J},D_{J^\prime}}\bigg[\sup_{f\in \mac{F},h \in \mac{H}}\frac{1}{n}\sum_{k =1}^n \epsilon_{k} \big(\ell_{d}(X_k^\prime, Z_k^{\prime},f, h) - \ell_{d}(X_k, Z_k,f, h)\big) \bigg| D_{J}\bigg] \quad \text{(Symmetrization)}\\
        & =2\mab{E}_{\epsilon, D_{J}}\bigg[\sup_{f\in \mac{F},h \in \mac{H}}\frac{1}{n}\sum_{k =1}^n \epsilon_{k} \ell_{d}(X_k, Z_k,f, h) \bigg]\\
        &=2\mathfrak{R}_{C}(\ell_{d}\circ(\mac{H} \circ \mac{F}))\\
        &\leq 2\mathfrak{R}_{C}(\mac{H} \circ \mac{F}).
      \end{split}
  \end{equation*}
  The last inequality follows from Talagrand inequality \citep[Theorem 4.12]{ledoux2013probability}, since $\ell_{d}$ is $1$-Lipschitz. 
Let $\mac{C}(\delta, \mac{H}\circ \mac{F}, \|\cdot\|_{2})$ denote the covering number of $\mac{H}\circ\mac{F}$ with respect to the empirical norm $\|h\circ f\|_2:=\sqrt{n^{-1}\sum_{k=1}^n(h\circ f(X_k))^2}$. Since $\|h\|_2\leq B_{\mac{H}}$, the chaining argument of \citet[Lemma 27.4]{shalev2014understanding} and Dudley's entropy integral \citep{dudley1967sizes} give
\begin{equation}\label{eq_rade_covering}
   \mathfrak{R}_{C}(\mac{H}\circ \mac{F}) \leq   
\mathbb{E}   
   \inf_{0 \leq \delta \leq B_{\mac{H}}}\bigg(4\delta + \frac{12}{\sqrt{n}} \int_{\delta}^{B_{\mac{H}}} \sqrt{\log(2\mac{C}(t, \mac{H}\circ\mac{F}, \|\cdot\|_{2}))}\, dt \bigg). 
  \end{equation}
We bound the entropy integral in \eqref{eq_rade_covering} by constructing a cover of $\mac H\circ\mac F$ from covers of $\mac H$ and $\mac F$. Recall that $D_C=\{X_i\}_{i=1}^n$.
Let $\mac{C}_{\mac{F}}(\delta_1, \mac{F}, \|\cdot\|_{2})$ be a $\delta_1$-cover of $\mac F$ on $D_C$. For each $f\in\mac{C}_{\mac F}$, let $\mac{C}_{\mac{H}_f}(\delta_2,\mac H,\|\cdot\|_2)$ be a $\delta_2$-cover of $\mac H$ on the feature inputs generated by $f$. Then $\mac{C}_{\mac H\circ\mac F}:=\bigcup_{f\in\mac{C}_{\mac F}}\mac{C}_{\mac{H}_f}$ is a $(L(\mac H)\delta_1+\delta_2)$-cover of $\mac H\circ\mac F$ on $D_C$. Indeed, for arbitrary $f\in\mac F$ and $h\in\mac H$, choose $f^\prime\in\mac{C}_{\mac F}$ within $\delta_1$ of $f$ and $h^\prime\in\mac{C}_{\mac{H}_{f^\prime}}$ within $\delta_2$ of $h$ on the inputs $\{f^\prime(X_i):X_i\in D_C\}$. Writing $\mathbf X=(X_1,\ldots,X_n)$, the triangle inequality gives
\begin{equation*}
\begin{split}
   \|h\circ f(\textbf{X}) - h^\prime \circ f^\prime(\textbf{X})\|_{2} 
   &\leq \|h\circ f(\textbf{X}) - h \circ f^\prime(\textbf{X})\|_{2}  + \|h \circ f^\prime(\textbf{X}) - h^\prime \circ f^\prime(\textbf{X})\|_{2}  \\
   & \leq L(\mac{H})\cdot\delta_1 + \delta_2, 
\end{split}
\end{equation*}
The second inequality uses the uniform Lipschitz property of $\mac H$. Let $\mathbf U:=\{U_i=f(X_i):i=1,\ldots,n\}$. Then $|\mac{C}_{\mac H\circ\mac F}|\leq |\mac{C}_{\mac F}|\max_{f\in\mac{C}_{\mac F}}|\mac{C}_{\mac H_f}|\leq |\mac{C}_{\mac F}|\mac{C}_n(\delta_2,\mac H,\|\cdot\|_2)$.
Using this covering-number bound with $\delta_1=\delta/(2L(\mac H))$ and $\delta_2=\delta/2$, together with the subadditivity of the square root, gives
\begin{equation*}
\begin{split}
     \int_{\delta}^{B_{\mac{H}}} \sqrt{\log \mac{C}(t, \mac{H} \circ \mac{F}, \|\cdot\|_{2})}\, dt &\leq \int_{\delta}^{B_{\mac{H}}} \sqrt{\log \mac{C}(t/{2L(\mac{H})}, \mac{F}, \|\cdot\|_{2})}\, dt \\
     &+\int_{\delta}^{B_{\mac{H}}} \sqrt{\log \mac{C}_n(t/{2}, \mac{H}, \|\cdot\|_{2})}\, dt.
\end{split}
\end{equation*}
The Sudakov minoration theorem for Gaussian processes \citep[Theorem 5.30]{wainwright2019high}, together with the relation between packing and covering numbers at the same scale, gives
\begin{equation*}
    \log \mac{C}_n(\delta, \mac{H}, \|\cdot\|_{2}) \leq 4\bigg(\frac{\sqrt{n} \mathfrak{G}_{\mathbf{U}}(\mac{H})}{\delta}\bigg)^2, \quad \forall \delta >0,
\end{equation*}
where $\mathfrak{G}_{\mathbf{U}}(\mac{H})$ is the Gaussian process indexed by $\mac{H}$ on $\mathbf{U}$, and $h(U) \in [-B_{\mac{H}}, B_{\mac{H}}]$ for $U \in \mathbf{U}$.
Consequently,
\begin{align*}
\mac{R}_{C}(\mac{H} \circ \mac{F}) 
&\leq \frac{12}{\sqrt{n}}\mathbb{E}
    \inf_{0 \leq \delta \leq B_{\mac{H}}}
    \Bigg(4\delta+\int_{\delta}^{B_{\mac{H}}}
      \sqrt{\log \mac{C}(t/{2L(\mac{H})},\mac{F},\|\cdot\|_{2})}\,dt\\
    &\quad+\int_{\delta}^{B_{\mac{H}}}
      \sqrt{\log \mac{C}_n(t/2,\mac{H},\|\cdot\|_{2})}\,dt
    \Bigg).
\end{align*}
By the relation between the covering number and Gaussian process, we have 
\begin{equation}\label{eq_covering}
   \log \mac{C}_n(t/2, \mac{H}, \|\cdot\|_{2}) \leq 4\left(\frac{2\sqrt{n}\mathfrak{G}_{\mathbf{U}}(\mac{H})}{t} \right)^2 \leq 4\left(\frac{4\ln(n)\sqrt{n}\mathfrak{R}_{\mathbf{U}}(\mac{H})}{t} \right)^2,
\end{equation}
where the second inequality holds by \citet[Lemma 4]{bartlett2002rademacher}, and $\mathfrak{R}_{X}(\mac{H})$ is the empirical Rademacher complexity. Similarly, we have 
\begin{equation}\label{eq_coverF}
   \log \mac{C}(t/({2L(\mac{H})}), \mac{F}, \|\cdot\|_{2}) \leq 4\left(\frac{2L(\mac{H})\sqrt{n}\mathfrak{G}_{\mathbf{X}}(\mac{F})}{t} \right)^2 \leq 4\left(\frac{4L(\mac{H})\ln(n)\sqrt{n}\mathfrak{R}_{\mathbf{X}}(\mac{F})}{t} \right)^2.
\end{equation}
Combining Lemmas \ref{lem_elemet_radermachec}-\ref{lem_rademachec_nomalconstrain} with \eqref{eq_rade_covering}, \eqref{eq_covering}, and \eqref{eq_coverF}, we obtain  
\begin{equation*}
    \mac{R}_{C}(\mac{H} \circ \mac{F}) \leq \frac{ \mac{K}_f \mac{K}_h d\sqrt{2(\mac{D}_f + 2 + \log(p+1))}}{\sqrt{n}} + \frac{\mac{K}_h\sqrt{2(\mac{D}_h + 2 + \log(d+1))}}{\sqrt{n}}.
\end{equation*}
\end{proof}
\section{Proof of Lemma \ref{lem_pat3}}\label{prof_pat3}
\begin{proof}
We bound the three terms in the decomposition of $\mac{R}_{\mac{M}_h}(\hat f)-\mac{R}_{\mac{M}_h}(f^*)$ from Appendix~\ref{err_trans_deom_III}.
~~\\
\textbf{For the statistical error $\mac{E}_{sta}$}, we have
\begin{equation*}
\begin{split}
 \sup_{f\in \mac{F}}|\mac{R}_{\mac{H}}(f) - \hat{\mac{R}}_{\mac{H}}(f)|  & = \sup_{f\in \mac{F}}|\sup_{h \in \mac{H}}\hat{\mac{R}}_{ad}(f, h) - \sup_{h \in \mac{H}}\mac{R}_{ad}(f,h)|  \\
 & \leq \sup_{f\in \mac{F}, h \in \mac{H}}|\hat{\mac{R}}_{ad}(f, h) - \mac{R}_{ad}(f,h)|\\
 & = \sup_{f\in \mac{F}, h \in \mac{H}}|\hat{\mac{R}}_{cl}(f) - \mac{R}_{cl}(f) +  \hat{\mac{R}}_{d}(f, h) - \mac{R}_{d}(f,h)|\\
 &\leq \sup_{f\in \mac{F}}|\hat{\mac{R}}_{cl}(f) - \mac{R}_{cl}(f) | +  \sup_{f\in \mac{F}, h \in \mac{H}}|  \hat{\mac{R}}_{d}(f, h) - \mac{R}_{d}(f,h)|.
\end{split}
\end{equation*}
Using Lemmas \ref{lem_sta_f} and \ref{lem_sta_h}, we bound the statistical error as follows: 
\begin{equation*}
\begin{split}
\mathbb{E}\sup_{f\in \mac{F}}|\mac{R}_{\mac{H}}(f) - \hat{\mac{R}}_{\mac{H}}(f)| &\lesssim  d \mathcal K_f
\sqrt{\frac{\mathcal D_f+2+\log(p+1)}{n}} + \mathcal K_f\mathcal K_h
\sqrt{\frac{\mathcal D_f+\mathcal D_h+2+\log(p+1)}{n}}.\\
\end{split}
\end{equation*}
~~\\
\textbf{For the approximation error $\mac{E}_{h}$}, we have
\begin{equation*}
\begin{split}
&\sup_{f\in \mac{F}}|\mac{R}_{\mac{H}}(f) - \mac{R}_{\mac{M}_h}(f)|\\
&=  \sup_{f\in \mac{F}}|\sup_{h_2 \in \mac{M}_h} \mac{R}_{ad}(f, h_2) - \sup_{h_1 \in \mac{H}}\mac{R}_{ad}(f, h_1) |  \\
&=\sup_{f\in \mac{F}}|\sup_{h_2 \in \mac{M}_h}\inf_{h_1 \in \mac{H}} \left\{\mac{R}_{ad}(f, h_2) -\mac{R}_{ad}(f, h_1) \right\}|\\
&=\sup_{f\in \mac{F}}|\sup_{h_2 \in \mac{M}_h}\inf_{h_1 \in \mac{H}} \left\{\mac{R}_{cl}(f) +\mac{R}_{d}(f, h_2) - (\mac{R}_{cl}(f) +\mac{R}_{d}(f, h_1))\right\}|\\
&\leq \sup_{f\in \mac{F}}|\sup_{h_2 \in \mac{M}_h}\inf_{h_1 \in \mac{H}} \left\{\mac{R}_{d}(f, h_2) - \mac{R}_{d}(f, h_1)\right\}|\\
&=  \sup_{f\in \mac{F}}|\sup_{h_2 \in \mac{M}_h}\inf_{h_1 \in \mac{H}}\mab{E}_{P_{\mu}}[\mac{\ell}_{d}(f, h_2) - \mac{\ell}_{d}(f, h_1)]|\\
&\lesssim   \sup_{f\in \mac{F}}\sup_{h_2 \in \mac{M}_h}\inf_{h_1 \in \mac{H}} \mab{E}_{P_C}[|h_2\circ f(X) - h_1\circ f(X)|] \quad \text{(Lipschitz  continuity)}\\
&\leq \sup_{h_2 \in \mac{M}_h}\inf_{h_1 \in \mac{H}} \mab{E}_{P_C(U)}[|h_2(U) - h_1(U)|]\\
&\leq \sup_{h_2 \in \mac{M}_h}\inf_{h_1 \in \mac{H}} \|h_2 - h_1\|_{\infty}.
\end{split}
\end{equation*}
Lemma~\ref{lem_norm_constrain_appro} bounds this approximation error:
\begin{equation*}
 \sup_{h_2 \in \mac{M}_h}\inf_{h_1 \in \mac{H}} \|h_2 - h_1\|_{\infty} \leq  \mac{K}_h^{-\beta_h /(d+1)}.  
\end{equation*}
~~\\
\textbf{For the approximation error of $\mac{E}_f$}, we have
\begin{equation*}
\begin{split}
& \inf_{f \in \mac{F}}\mac{R}_{\mac{M}_h}(f) - \mac{R}_{\mac{M}_h}(f^*)\\
&\leq \inf_{f \in \mac{F}} \sup_{h \in \mac{M}_h} \big(\mac{R}_{ad}(f, h) - \mac{R}_{ad}(f^*, h) \big)\\
 & \leq \inf_{f \in \mac{F}} \sup_{h \in \mac{M}_h} |\mac{R}_{cl}(f) +  \mac{R}_{d}(f, h)  - (\mac{R}_{cl}(f^*) +  \mac{R}_{d}(f^*, h))| \\
 &\leq  \inf_{f \in \mac{F}} \sup_{h \in \mac{M}_h} \bigg[ |\mac{R}_{cl}(f)  - \mac{R}_{cl}(f^*) | +   |\mac{R}_{d}(f,h)  - \mac{R}_{d}(f^*,h)| \bigg]\\
 &= \inf_{f \in \mac{F}} \sup_{h \in \mac{M}_h}  \bigg[|\mab{E}_{P_C\times P_A}[\ell_{cl}(f) - \ell_{cl}(f^*)]| +   |\mab{E}_{P_{\mu}}[\ell_{d}(f,h)  - \ell_{d}(f^*,h)]| \bigg]\\
&\leq c \inf_{f \in \mac{F}} \mab{E}_{P_C}[\|f(X) - f^*(X)\|]  \quad \text{(Lipschitz continuity of $\ell_{cl}, \ell_{d}$ and $h$)} \\
 &\lesssim  \inf_{f \in \mac{F}}  \|f-f^*\|_{\infty}.
\end{split}
\end{equation*}
Applying Lemma~\ref{lem_norm_constrain_appro} gives
\begin{equation*}
   \inf_{f \in \mac{F}} \sup_{h \in \mac{M}_h} \|f-f^*\|_{\infty} \lesssim  \mac{K}_f^{-\beta_f /(p+1)}.
\end{equation*}    
\end{proof}
\section{Auxiliary Results}\label{Au_results}
\begin{lemma}[Corollary 3.1 of \citet{jiao2021deep}]
\label{lem_appro}
For any $N, M \in \mathbb{N}^+$, there exists a function $g \in \mathcal{G}$ with width $\mathcal{W}=38(\lfloor \beta_{g} \rfloor +1)^23^dd^{\lfloor \beta_{g} \rfloor +1}N \lceil \log_{2}(8N) \rceil$ and depth $\mathcal{D} = 21(\lfloor \beta_{g}  \rfloor +1)^2M \lceil \log_{2}(8M) \rceil+2d$, such that
\begin{equation*}
    |g(x)  - g^*(x)| \leq 19B_{g}(\lfloor \beta_{g}  \rfloor +1)^2d^{\lfloor \beta_{g}  \rfloor + (\beta_{g} \vee 1)/2}(MN)^{-2\beta_{g}/d},
\end{equation*}
for all $x \in [0,1]^d$.    
\end{lemma}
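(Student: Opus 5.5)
The statement just above is Lemma~\ref{lem_appro}, the approximation result for ReLU networks on $[0,1]^d$ quoted from \citet{jiao2021deep}. My plan is to reproduce the standard localized Taylor-approximation construction rather than cite it blindly, and to track the constants only to the precision stated.

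\textbf{Step 1: partition and localization.} I partition $[0,1]^d$ into $K^d$ cubes with $K\asymp (MN)^{2/d}$. I then build a ReLU network $\psi$ that maps each $x$, outside a thin trifling region, to the index $\beta$-vertex of its cube. This uses the step-function construction of width $\mathcal O(N\log N)$ and depth $\mathcal O(M\log M)$ that realizes $\lfloor Kx_j\rfloor$ coordinatewise. Bit extraction is what gives the squared rate $(MN)^{2/d}$ instead of $(MN)^{1/d}$.

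\textbf{Step 2: local Taylor polynomials.} On each cube, $g^*$ is replaced by its Taylor polynomial of degree $\lfloor\beta_g\rfloor$ at the vertex. The H\"older condition gives a pointwise error of at most $B_g d^{(\beta_g\vee1)/2}K^{-\beta_g}$, up to the factor $(\lfloor\beta_g\rfloor+1)^2 d^{\lfloor\beta_g\rfloor}$ that counts monomials. The coefficients $\partial^s g^*(\text{vertex})/s!$ are piecewise-constant functions of the cube index. I fit them by a point-fitting (bit-extraction) network that stores the coefficient values to accuracy $(MN)^{-2\beta_g/d}$. The monomials $(x-\text{vertex})^s$ are approximated by the product network $\phi_\times$, built from sawtooth approximations of $x^2$, with error exponentially small in depth. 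Summing over the $\binom{d+\lfloor\beta_g\rfloor}{d}\le d^{\lfloor\beta_g\rfloor}$-type count of multi-indices gives the width factor $3^d d^{\lfloor\beta_g\rfloor+1}$ and the $(\lfloor\beta_g\rfloor+1)^2$ prefactors.

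\textbf{Step 3: trifling region.} On the thin bands near cube boundaries, $\psi$ misbehaves. I remove this with the middle-value trick: take the median of $3^d$ shifted copies of the approximant, computed by a ReLU network of depth $\mathcal O(d)$. This accounts for the additive $2d$ in depth and the $3^d$ in width. It yields a uniform bound on all of $[0,1]^d$, with no measure-zero exception, which is what Bound (II) needs through $\|g-g^*\|_\infty$.

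\textbf{Main obstacle.} The delicate part is Step 2's coefficient storage. One must show that $\mathcal O(N\log N)$ width and $\mathcal O(M\log M)$ depth suffice to memorize $K^d\asymp (MN)^2$ real values to precision $(MN)^{-2\beta_g/d}$. I would try first the bit-extraction lemma, which interpolates $N^2M^2$ bits with exactly these sizes, applied to the binary expansions of the quantized coefficients. The remaining work is collecting the error contributions into the constant $19B_g(\lfloor\beta_g\rfloor+1)^2d^{\lfloor\beta_g\rfloor+(\beta_g\vee1)/2}$, which is bookkeeping.
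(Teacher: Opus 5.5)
The paper gives no argument of its own here. It simply imports Corollary~3.1 of \citet{jiao2021deep}, and your outline is exactly the construction behind that corollary, so your approach is the same: step-function localization onto $K^d\asymp(MN)^2$ cubes, local Taylor polynomials with coefficients stored by bit-extraction point fitting and monomials built from product networks, then the middle-value lemma of Lu, Shen, Yang and Zhang to remove the trifling region at the cost of a factor $3^d$ in width and $2d$ extra layers.

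Two details need tightening. First, the middle-value step must be applied one coordinate at a time, $\phi_i(x)=\mathrm{mid}\bigl(\phi_{i-1}(x-\delta e_i),\phi_{i-1}(x),\phi_{i-1}(x+\delta e_i)\bigr)$ for $i=1,\ldots,d$. A single median of all $3^d$ shifts does not work, because only $2^d$ of them are guaranteed to miss the bad bands, which is not a majority once $d\geq2$. Second, the Taylor degree should be $\bar\beta_g=\lceil\beta_g\rceil-1$ from Definition~\ref{def_holderclass}, not the ordinary floor $\lfloor\beta_g\rfloor$, since for integer $\beta_g$ derivatives of order $\beta_g$ need not exist.
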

\begin{lemma}[Corollary 3.2 of \citet{jiao2023approximation}]\label{lem_norm_constrain_appro}
Let $d\in \mab{N}$ and $\beta_f = r + \alpha >0$, where $r \in \mab{N}_0, \alpha \in (0, 1]$. There exists $c >0$ such that for any $\mac{K} \geq 1$, any $\mac{W} \geq c\mac{K}^{(2d+\beta_f)/(2d +2)}$ and $\mac{D} \geq 2\lceil \log_{2}(d+r)\rceil +2$,
\begin{equation*}
    \mac{E}(\mac{H}^{\alpha}, \mac{NN}(\mac{W}, \mac{D}, \mac{K})) \lesssim \mac{K}^{-\beta_f/(d+1)}.
\end{equation*}
\end{lemma}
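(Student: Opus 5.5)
The statement is Corollary 3.2 of \citet{jiao2023approximation}, so in the paper it can simply be cited. If I were to prove it directly, I would build an explicit approximant and keep track of the weight norm. The idea is to localize a H\"older function $h^*\in\mho^{\beta_f}([0,1]^d,B)$ on a uniform grid of mesh $1/N$ and then express the whole construction as a norm-constrained ReLU network. The goal is an $L_\infty$ error $\lesssim N^{-\beta_f}$ together with the norm bound $\mac K\lesssim N^{d+1}$. Substituting $N\asymp \mac K^{1/(d+1)}$ then gives the rate $\mac K^{-\beta_f/(d+1)}$.

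\textbf{Step 1: localization.} For each grid node $v\in\{0,1/N,\dots,1\}^d$ I would use the tensor-product hat function $\phi_v(x)=\prod_{j=1}^d \psi(N x_j-Nv_j)$, where $\psi(t)=\varrho(t+1)-2\varrho(t)+\varrho(t-1)$. These functions form a partition of unity, and each $\phi_v$ is supported on a cube of side $2/N$. I would then set
\[
\tilde h(x)=\sum_v \phi_v(x)\,P_v(x),
\]
where $P_v$ is the degree-$r$ Taylor polynomial of $h^*$ at $v$. The H\"older condition gives $|h^*(x)-P_v(x)|\lesssim B N^{-\beta_f}$ on the support of $\phi_v$. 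Since the $\phi_v$ sum to one, this yields $\|\tilde h-h^*\|_\infty\lesssim N^{-\beta_f}$.

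\textbf{Step 2: realization by a network.} Each summand $\phi_v P_v$ is a linear combination of products of at most $d+r$ one-dimensional factors: the $d$ hat functions and up to $r$ affine monomial factors $(x_j-v_j)$. I would compute each product with a binary tree of approximate multiplication gadgets of depth $\lceil\log_2(d+r)\rceil$. Each multiplication layer costs two ReLU layers, which gives the depth requirement $\mac D\ge 2\lceil\log_2(d+r)\rceil+2$. The multiplication gadget uses the identity $ab=\tfrac12\bigl((a+b)^2-a^2-b^2\bigr)$, with squares approximated on a bounded interval by piecewise-linear interpolation. Interpolation with $M$ knots has error $M^{-2}$, so I would choose $M\asymp N^{\beta_f/2}$ to keep the multiplication error at $N^{-\beta_f}$. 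Since the gadget sits in parallel rather than being composed many times, this gives width of order $N^d\cdot N^{\beta_f/2}\asymp \mac K^{(2d+\beta_f)/(2d+2)}$, matching the stated width requirement.

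\textbf{Step 3: the norm bound (the main obstacle).} The product $\|W_{\mac D}\|\prod_i\max\{\|(W_i,b_i)\|,1\}$ must be kept at $O(N^{d+1})$ rather than letting it blow up multiplicatively through the layers. My plan has three parts.
\begin{itemize}
\item[(i)] Put the slope $N$ of the hat functions only in the first layer. Because $\|\cdot\|$ is the maximum row $\ell_1$ norm, this contributes a factor of order $N$ rather than $N^d$.
\item[(ii)] Rescale the multiplication gadgets so that every intermediate layer has row sums $O(1)$. Their inputs are bounded by $1$, and the interpolation slopes can be normalized by moving constants forward to later layers.
\item[(iii)] Absorb the $N^d$ summands into the output weights $W_{\mac D}$. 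The Taylor coefficients are bounded by $B$, so this contributes a factor of order $N^d$.
\end{itemize}
Altogether the norm is $\lesssim N\cdot N^d$.

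The delicate point is (ii). Interpolating $t^2$ with $M$ knots naively produces row sums of order $M$, which would destroy the rate. I would try to fix this by writing the interpolant as a sum of ReLUs with coefficients of size $1/M$ in the following layer, so that the large and small factors cancel in the product norm. If that fails, I would fall back on the sharper bookkeeping of \citet{jiao2023approximation}, which is exactly what Corollary 3.2 provides.
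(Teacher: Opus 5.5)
Your proposal is correct and matches the paper, which gives no independent argument and simply imports Corollary 3.2 of \citet{jiao2023approximation}; your sketch reproduces that source's scalings ($\mathcal{K}\asymp N^{d+1}$, $M\asymp N^{\beta_f/2}$, width $\asymp N^{d+\beta_f/2}$). If you write the direct construction out, make two points explicit. First, step (ii) does not work by cancellation across layers: hidden layers enter the norm as $\max\{\|(W_i,b_i)\|,1\}$, so a small layer cannot offset a large one. It works because unit-slope units $\varrho(t-t_k)$ followed by slope increments of size $O(1/M)$ keep each layer's row sums $O(1)$ on their own. Second, $M\asymp N^{\beta_f/2}$ suffices only because your gadget returns exactly $0$ whenever a factor vanishes (true for $ab=\tfrac12((a+b)^2-a^2-b^2)$ with an interpolant exact at $0$). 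Then only $2^d$ summands contribute at each $x$, so the $M^{-2}$ errors do not add up over all $N^d$ nodes.
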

\begin{lemma}[Corollary 4 of \citet{maurer2016vector}]\label{lem_vector_rademacherc}
Let $\mathcal V$ be a class of maps from a set $\mathcal Z$ to $\mathbb R^q$, and let $\ell:\mathbb R^q\to\mathbb R$ be $L$-Lipschitz in the Euclidean norm. For fixed $z_1,\ldots,z_n$,
\[
\mathbb E_\epsilon\sup_{v\in\mathcal V}\sum_{i=1}^n\epsilon_i\ell(v(z_i))
\leq\sqrt2 L\,\mathbb E_{\epsilon_{it}}\sup_{v\in\mathcal V}
\sum_{i=1}^n\sum_{t=1}^q\epsilon_{it}v_t(z_i).
\]
\end{lemma}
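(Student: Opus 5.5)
We prove this by the standard one-coordinate-at-a-time replacement argument underlying contraction inequalities such as \citet[Theorem 4.12]{ledoux2013probability}. We replace the scalar Rademacher terms $\epsilon_i\ell(v(z_i))$ by vector terms $\sqrt2L\sum_t\epsilon_{it}v_t(z_i)$ one index $i$ at a time. For that, it suffices to prove a single-step inequality. Let $\mathcal V$ be any class, $R:\mathcal V\to\mathbb R$ any functional (it will collect the terms not yet replaced together with the terms already replaced, after conditioning on their signs), and $z$ a fixed point. Then
\[
\mathbb E_{\epsilon}\sup_{v\in\mathcal V}\{R(v)+\epsilon\,\ell(v(z))\}
\leq
\mathbb E_{(\epsilon_t)_{t\le q}}\sup_{v\in\mathcal V}\Big\{R(v)+\sqrt2L\sum_{t=1}^q\epsilon_t v_t(z)\Big\}.
\]
Applying this successively for $i=1,\ldots,n$, conditionally on all other Rademacher variables and using Fubini, yields the lemma. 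Finiteness of the suprema (or a reduction to finite subclasses $\mathcal V$ via monotone convergence) handles measurability. We assume this reduction throughout.

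For the single step, average over the two values of $\epsilon$. This gives
\[
\mathbb E_\epsilon\sup_v\{R(v)+\epsilon\ell(v(z))\}=\tfrac12\sup_{v,v'}\{R(v)+R(v')+\ell(v(z))-\ell(v'(z))\}.
\]
By the Lipschitz property, $\ell(v(z))-\ell(v'(z))\le L\|v(z)-v'(z)\|_2$. Next comes the key ingredient: the Khintchine inequality with the sharp constant of Szarek (see also Lata{\l}a--Oleszkiewicz). It states that for every $x\in\mathbb R^q$,
\[
\|x\|_2\le\sqrt2\,\mathbb E_{(\epsilon_t)}\Big|\sum_{t=1}^q\epsilon_t x_t\Big|.
\]
Applying it with $x=v(z)-v'(z)$ and pulling the expectation outside the supremum bounds the right-hand side by
\[
\tfrac12\,\mathbb E_{(\epsilon_t)}\sup_{v,v'}\Big\{R(v)+R(v')+\sqrt2L\Big|\sum_t\epsilon_t(v_t(z)-v'_t(z))\Big|\Big\}.
\]
The expression $R(v)+R(v')$ is symmetric in $(v,v')$, while swapping $v$ and $v'$ flips the sign of the inner sum. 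Hence the absolute value can be dropped without changing the supremum. The supremum then splits as $\sup_v\{R(v)+\sqrt2L\sum_t\epsilon_tv_t(z)\}+\sup_{v'}\{R(v')-\sqrt2L\sum_t\epsilon_tv'_t(z)\}$. Since $(\epsilon_t)$ and $(-\epsilon_t)$ have the same law, both pieces have equal expectation. The factor $\tfrac12$ then gives exactly the claimed single-step bound.

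The main obstacle is the sharp Khintchine constant $\sqrt2$, that is, the bound $\mathbb E|\sum_t\epsilon_tx_t|\ge\|x\|_2/\sqrt2$. It is extremal at $x=(1,1,0,\ldots)/\sqrt2$ and is a nontrivial theorem. We will cite it rather than reprove it. If only a non-sharp version were available, the same argument goes through with $\sqrt2$ replaced by a larger absolute constant. That would be harmless for the paper's $\lesssim$-level uses in Lemmas~\ref{lem_sta_f} and~\ref{lem_sta_h}.
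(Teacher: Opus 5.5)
Your proposal is correct and reproduces the argument of \citet{maurer2016vector}: average over the sign, bound $\ell(v(z))-\ell(v'(z))$ by $L\|v(z)-v'(z)\|_2$, apply the sharp Khintchine constant $\sqrt2$ of Szarek and Lata{\l}a--Oleszkiewicz, remove the absolute value using the $(v,v')$-symmetry, and then iterate over $i=1,\ldots,n$. The paper gives no proof of its own and only cites this result, so your write-up follows exactly the route that citation relies on.
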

For the contrastive application, use $q=3d$ and
$v_f(z)=(f(x),f(x^+),f(x^-))$; 
\begin{lemma}[Coordinatewise extension of the scalar row-sum-norm bound]\label{lem_elemet_radermachec}
Let $z_1,\ldots,z_n\in[-B,B]^p$, $B_0=\max\{B,1\}$, and let $\mathcal F$ be a $d$-output class of networks with the norm constraint in Definition~\ref{def_norm_network}. Then
\[
\mathbb E_\epsilon \sup_{f\in\mathcal F} \frac{1}{n}
\sum_{i=1}^n\sum_{t=1}^d\epsilon_{it}f_t(z_i)
\leq dB_0\mathcal K_f
\sqrt{\frac{2(\mathcal D_f+2+\log(p+1))}{n}}.
\]
\end{lemma}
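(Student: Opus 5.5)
The plan is to reduce the $d$-output statement to $d$ copies of the scalar-output Rademacher bound for norm-constrained networks, and then to invoke (or re-derive) that scalar bound. First, by subadditivity of the supremum, for fixed $z_1,\ldots,z_n$,
\[
\mathbb E_\epsilon \sup_{f\in\mathcal F}\frac1n\sum_{i=1}^n\sum_{t=1}^d\epsilon_{it}f_t(z_i)
\leq \sum_{t=1}^d \mathbb E_{\epsilon_{\cdot t}}\sup_{f\in\mathcal F}\frac1n\sum_{i=1}^n\epsilon_{it}f_t(z_i).
\]
Each summand involves only the independent Rademacher column $(\epsilon_{it})_{i\le n}$, so it is an ordinary scalar empirical Rademacher complexity of the coordinate class $\mathcal F_t:=\{f_t: f\in\mathcal F\}$. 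It therefore suffices to show that each $\mathcal F_t$ has complexity at most $B_0\mathcal K_f\sqrt{2(\mathcal D_f+2+\log(p+1))/n}$. Summing the $d$ identical bounds then gives the factor $d$.

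Second, I will show that $\mathcal F_t$ is contained in the scalar norm-constrained class $\mathcal{NN}(\mathcal W_f,\mathcal D_f,\mathcal K_f)$ with one output. Write $f_t(x)=e_t^\top W_{\mathcal D}f_{\mathcal D}(x)$. The output weight of $f_t$ is then the $t$-th row of $W_{\mathcal D}$. Since $\|\cdot\|$ is the $\ell_\infty\to\ell_\infty$ operator norm, which is the maximum absolute row sum, we have $\|e_t^\top W_{\mathcal D}\|\le\|W_{\mathcal D}\|$. The hidden layers are unchanged, so the product constraint of Definition~\ref{def_norm_network} still holds with the same $\mathcal K_f$.

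Third, I will apply the scalar bound for this class, namely the row-sum-norm Rademacher estimate of \citet{jiao2023approximation}, which is a variant of \citet{golowich2020size}. If it has to be reproved, I would proceed as follows.
\begin{enumerate}
\item Absorb each bias by appending a constant coordinate, so the input becomes $(z,1)\in[-B_0,B_0]^{p+1}$ and the weight matrices become $(W_i,b_i)$.
\item Normalize each layer by $\max\{\|(W_i,b_i)\|,1\}$ so that every hidden map is $1$-Lipschitz in $\ell_\infty$ and hidden activations stay in $[-B_0,B_0]$. By positive homogeneity of ReLU, the rescaling can be pushed into the output layer, whose norm is then bounded by $\mathcal K_f$.
\item Use the exponential-moment peeling argument: for $\lambda>0$, bound $\mathbb E\exp(\lambda\sup\sum_i\epsilon_i f(z_i))$ by peeling one layer at a time. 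The $\ell_\infty$ row-sum norm reduces each layer to a supremum over single neurons. The ReLU contraction costs a factor $2$ per layer, which gives $2^{\mathcal D_f+2}$ including the input and output layers.
\item At the input, take the maximum over the $p+1$ coordinates, which contributes a factor $(p+1)$ inside the exponential moment.
\item Apply the sub-Gaussian bound $\mathbb E\exp(\lambda\sum_i\epsilon_i a_i)\le\exp(\lambda^2\sum_i a_i^2/2)$ with $|a_i|\le B_0$. Then take the logarithm, divide by $\lambda$, and optimize over $\lambda$. This yields $B_0\mathcal K_f\sqrt{2((\mathcal D_f+2)\log 2+\log(p+1))/n}$, which is at most the claimed expression.
\end{enumerate}

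The main obstacle is the peeling step under the $\ell_\infty$ row-sum norm with biases. Specifically, I need to check that replacing $\|(W_i,b_i)\|$ by $\max\{\|(W_i,b_i)\|,1\}$ keeps the constant bias coordinate bounded by $B_0$ at every layer, so that the constant $B_0=\max\{B,1\}$ really propagates through all layers. I would first try to cite this directly from \citet{jiao2023approximation}. Failing that, I would prove it by induction on depth, using the invariant that every layer's output lies in $[-B_0,B_0]$ after normalization.
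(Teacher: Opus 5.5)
Your proposal is correct and follows essentially the same route as the paper. The paper splits the supremum coordinatewise, notes that selecting the $t$-th output row does not increase the max-row-sum norm (so each $\mathcal F_t$ lies in the scalar class $\mathcal{NN}(\mathcal W_f,\mathcal D_f,\mathcal K_f)$), and then cites the scalar bound of Lemma~\ref{lem_rademachec_nomalconstrain} from \citet{golowich2020size} and \citet{jiao2023approximation} with $B_0$ in place of $B$. You do not need your optional re-derivation. If you do write it out, the bias issue you flagged is handled by working with the bias-free augmented matrices $\bigl(\begin{smallmatrix} W_i & b_i\\ 0 & 1\end{smallmatrix}\bigr)$: their $\ell_\infty$ operator norm is exactly $\max\{\|(W_i,b_i)\|,1\}$, so the positive-homogeneity rescaling applies layer by layer.
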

\begin{proof}
Let $\mathcal F_t=\{f_t:f\in\mathcal F\}$. Selecting one output row does not increase the maximum row-sum norm, so each $\mathcal F_t$ is contained in the corresponding scalar network class. Thus
\[
\mathbb E_\epsilon  \sup_f \frac{1}{n}\sum_t\sum_i\epsilon_{it}f_t(z_i)
\leq\sum_t\mathbb E_\epsilon\sup_{g\in\mathcal F_t}
\frac{1}{n}\sum_i\epsilon_{it}g(z_i),
\]
by Lemma~\ref{lem_rademachec_nomalconstrain}, \citet{golowich2020size} and \citet{jiao2023approximation}, we obtain the stated bound.
\end{proof}
\begin{lemma}[Theorem 3.2 of \citet{golowich2020size}]\label{lem_rademachec_nomalconstrain}
 For any $x_1, \dots, x_n \in [-B, B]^p$ with $1 \leq B$, let $\mac{F}$ be the class of norm-constrained DNNs $\mac{NN}(\mac{W,D,K})$, then 
 \begin{equation*}
     \hat{\mathfrak{R}}_{n}(\mac{F}) \leq \frac{1}{n}\mac{K}\sqrt{2(\mac{D} + 2 + \log(p +1))}\max_{1 \leq j \leq p+1}\sqrt{\sum_{i = 1}^n x_{i,j}^2} \leq \frac{B\mac{K}\sqrt{2(\mac{D} + 2 + \log(p +1)})}{\sqrt{n}}.
 \end{equation*}
\end{lemma}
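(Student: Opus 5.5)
The plan is to reproduce the layer-peeling argument of \citet{golowich2020size}, adapted to the maximum-row-sum norm of Definition~\ref{def_norm_network}. There are three ingredients: an exponential-moment (log-sum-exp) bound on the Rademacher process, a contraction step that removes one layer at a time, and a sub-Gaussian bound at the input layer. The rows of each weight matrix are $\ell_1$-bounded, which is exactly what makes the bound independent of the width $\mac W$.

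\textbf{Step 1: reduce to a bias-free network with normalized layers.} Append the constant coordinate $1$ to each input, so $\tilde x_i=(x_i,1)\in[-B,B]^{p+1}$ (using $B\geq1$). Propagate the constant through every hidden layer as an extra neuron, since $\varrho(1)=1$. The bias $b_i$ is then absorbed into an augmented matrix $\tilde W_i$, and $\|\tilde W_i\|=\max\{\|(W_i,b_i)\|,1\}$ because $\|\cdot\|$ is the maximum row sum. Next use positive homogeneity of ReLU, $\varrho(c\,u)=c\,\varrho(u)$ for $c>0$, to rescale each $\tilde W_i$ to have norm at most $1$ and move all the scale to the output row. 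Every $f\in\mac{NN}(\mac W,\mac D,\mac K)$ is then of the form $f(x)=w^\top\varrho(\tilde W_{\mac D-1}\cdots\varrho(\tilde W_0\tilde x))$, with $\|w\|_1\leq\mac K$ and every row of every $\tilde W_i$ having $\ell_1$-norm at most $1$.

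\textbf{Step 2: peel one layer at a time.} For any $\lambda>0$, Jensen's inequality gives
\[
n\hat{\mathfrak R}_n(\mac F)\leq\frac1\lambda\log\mathbb E_\epsilon\sup_f\exp\Big(\lambda\Big|\sum_i\epsilon_if(x_i)\Big|\Big).
\]
For the output layer, $\sup_{\|w\|_1\leq\mac K}|\sum_i\epsilon_i w^\top v_i|=\mac K\max_j|\sum_i\epsilon_i v_{i,j}|$. The hidden unit $j$ is itself $\varrho(r^\top u)$ for a row $r$ with $\|r\|_1\leq1$, so the supremum over the whole layer collapses to a supremum over a single $\ell_1$-bounded row. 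I then apply the contraction inequality for the convex increasing map $t\mapsto e^{\lambda\mac K t}$ to the $1$-Lipschitz $\varrho$ with $\varrho(0)=0$ (Ledoux--Talagrand, as in Golowich et al.). Together with the symmetry $|a|\leq\max(a,-a)$, this gives
\[
\mathbb E\sup\exp\Big(\lambda\mac K\Big|\sum_i\epsilon_i\varrho(r^\top u_i)\Big|\Big)\leq2\,\mathbb E\sup\exp\Big(\lambda\mac K\Big|\sum_i\epsilon_i r^\top u_i\Big|\Big).
\]
The row $r$ again reduces to a single coordinate of the previous layer. Iterating over the $\mac D$ hidden layers costs a factor $2^{\mac D}$ in total.

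\textbf{Step 3: input layer and choice of $\lambda$.} After peeling, what remains is $\max_{1\leq j\leq p+1}\mathbb E\exp(\lambda\mac K|\sum_i\epsilon_i\tilde x_{i,j}|)$. Bound the maximum by the sum over $j$, split the absolute value into two signs, and use Hoeffding's sub-Gaussian bound on each term. This gives at most $2(p+1)\exp(\lambda^2\mac K^2 s^2/2)$, where $s^2=\max_j\sum_i\tilde x_{i,j}^2$. Hence
\[
n\hat{\mathfrak R}_n\leq\frac{(\mac D+1)\log2+\log(p+1)}{\lambda}+\frac{\lambda\mac K^2s^2}{2}.
\]
Optimizing over $\lambda$ yields $\mac K s\sqrt{2((\mac D+1)\log 2+\log(p+1))}\leq\mac K s\sqrt{2(\mac D+2+\log(p+1))}$, which is the first claimed inequality. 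The second follows from $s\leq B\sqrt n$.

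\textbf{Main obstacle.} The delicate step is the contraction in Step 2 under the exponential with an absolute value inside a supremum over both a row and the deeper layers. It has to be checked that the factor really is $2$ per layer, not $2$ times the width; this holds because $\ell_1$-bounded rows put the supremum of a linear functional at a vertex, i.e.\ at a single coordinate. A second point to verify is that the constant neuron added for the biases stays inside the norm budget through all layers.
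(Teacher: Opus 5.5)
Your proposal is correct and is essentially the argument the paper relies on. The paper gives no proof of its own and simply cites Theorem 3.2 of \citet{golowich2020size}, in the bias-augmented form used by \citet{jiao2023approximation}. That proof proceeds exactly as yours does: absorb the biases via the constant coordinate (which is where $\max\{\|(W_i,b_i)\|,1\}$ and $B\geq 1$ enter), normalize rows by positive homogeneity, peel layers under the exponential moment at a cost of a factor $2$ per ReLU layer (Ledoux--Talagrand contraction plus the $\ell_1$-vertex reduction), and finish with a Hoeffding bound over the $p+1$ input coordinates. Your bookkeeping gives the slightly sharper constant $\sqrt{2((\mathcal{D}+1)\log 2+\log(p+1))}$, which implies the stated one.
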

\section{Experiment Details}\label{sup_expe}
\textbf{Datasets and Processes.}
We use Office-31 \citep{saenko2010adapting}, Office-Home \citep{venkateswara2017deep}, and DomainNet \citep{peng2019moment}. 
For the reported protocol, each domain is randomly partitioned into an $80\%$ training split and a $20\%$ held-out test split. 
Representation pretraining uses images from the source and target training splits without target class labels, and downstream predictor 
fitting uses labels only from the source training split. The target test split is reserved for final evaluation. 
We apply the same split-based protocol to every compared method, enabling controlled comparisons across learning objectives and architectures.
\begin{table}[H]
    \centering
    \caption{The dataset information for domain adaptation experiments.}
    \label{data_set}
    \begin{tabular}{lccc}
        \toprule
        Data & Domain & Class  & Total sample   \\ \midrule
        Office-31  &3  & 31  & 4,625    \\
        Office-Home  &4  &65  & 15,588    \\
        DomainNet  &6  &345  & 586,753    \\
         \bottomrule
    \end{tabular}
\end{table}
We then combine the training data from both the source and target domains and apply a series of augmentations for contrastive learning. The augmentation procedures follow the same setup as \citep{chen2020simple}, including:
\begin{itemize}
\item Randomly cropping the image to a size of $224 \times 224$ pixels.
\item Randomly applying a horizontal flip to the image with a probability of $0.5$.
\item Randomly applying color jitter to the image with a probability of $0.8$.
\item Converting the image to grayscale with a probability of $0.2$.
\item Applying a custom Gaussian blur with a specified kernel size.
All of the above augmentations are applied to two randomized copies of the original image $x$ to generate a positive pair consisting of two augmented views, denoted by $x_{(1)}^+$  and $x_{(2)}^+$, respectively. Other images within the same training mini-batch serve as negative samples.
\end{itemize}
\textbf{Model Architectures.}
We use ResNet \citep{he2016deep} and ViT \citep{dosovitskiy2020image} backbones. 
The domain discriminator and downstream predictor are fully connected networks with ReLU activations, 
and a shallow perceptron connects the backbone to the discriminator. During Stage 1, 
a mini-batch contains unlabeled source and target training examples. Their features are used by both the contrastive 
loss and the domain discriminator. The discriminator minimizes its domain-classification loss, while a gradient-reversal 
layer sends the reversed domain gradient to the feature extractor; the contrastive gradient is propagated normally. 
During Stage 2, the feature extractor is frozen and only the downstream source-supervised predictor is fitted. 
DANN also uses labeled source and unlabeled target examples, but couples source classification with adversarial alignment; 
our Stage 1 replaces source classification with joint-domain self-supervised contrastive learning.
\begin{figure}[ht]
    \centering
    \includegraphics[width=0.7\linewidth]{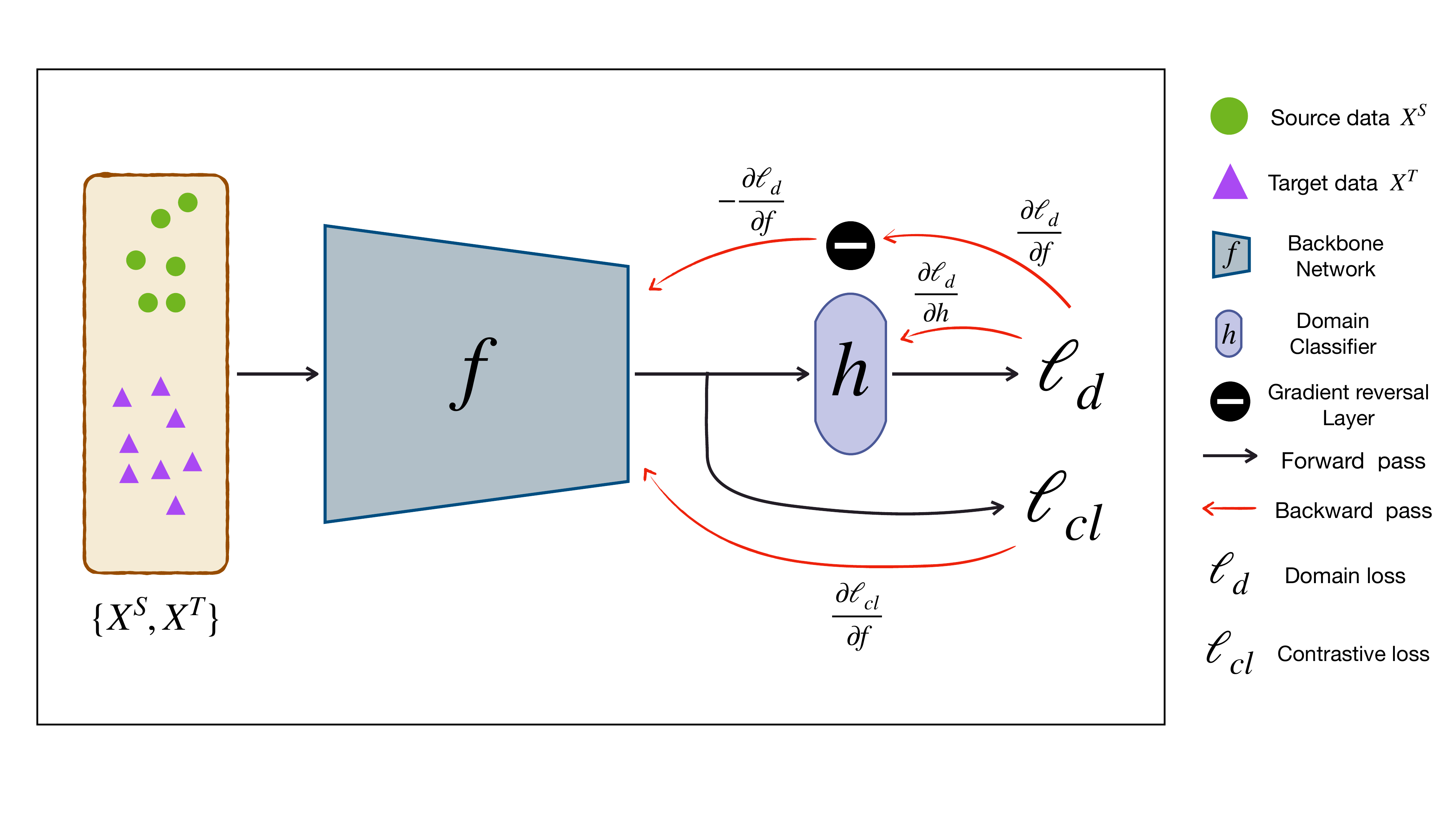}
    \caption{The framework for representation learning.}
    \label{frame_work}
\end{figure}
\textbf{Hyperparameter Settings.}
Table \ref{hyper} summarizes the hyperparameter settings used in our experiments. For each dataset, we train the backbone network for $800$ epochs using stochastic gradient descent (SGD) as the optimizer. The initial learning rates are set to $0.01$, $0.1$, and $0.1 (1.0)$ for the backbone, bottleneck, and discriminator, respectively. The value in parentheses indicates that the learning rate for the discriminator is set to $1.0$ when training without pretrained weights. The batch size is set to $16$ to accommodate hardware constraints and is reduced to $8$ when training without pretrained weights. We set the feature dimension to $512$ for Office-Home and DomainNet, which is larger than that used for Office-31.
\begin{table}[h]
    \caption{Hyperparameter settings}
    \label{hyper}
    \centering
    \begin{tabular}{lccccc}
    \toprule
        Data & Epoch & Batch Size  & Learning rate & Feature dimension \\ \midrule
         Office-31&     800&    16(8)&   0.01, 0.1, 0.1(1.0)&  256 &   \\
         Office-Home&     800&   16(8)&   0.01, 0.1, 0.1(1.0)&   512& \\
         DomainNet&     800&   16(8)&   0.01, 0.1, 0.1(1.0)&   512& \\
         \bottomrule
    \end{tabular}
\end{table}
\textbf{Baselines.}
We use ERM, DANN \citep{ganin2015unsupervised}, and CDAN \citep{long2018conditional} as reference baselines, using their implementations in the Transfer Learning Library \citep{tllib}. 
Together, these methods distinguish the contribution of source-only empirical risk minimization from adversarial and conditional 
domain adaptation. ViT-B \citep{dosovitskiy2020image} provides an architecture-matched reference for the ViT experiments.
\section{Additional experiment results}\label{add results}
Tables \ref{tab_compare_off31} and \ref{tab_compare_off31M} report pretrained results on Office-31; Tables \ref{tab_compare_offHom} 
and \ref{tab_compare_offHomM} report Office-Home; and Table \ref{tab_compare_domain} reports DomainNet. 
These experiments initialize the backbone from supervised pretraining on a large labeled dataset such as 
ImageNet \citep{krizhevsky2012imagenet}, testing JCAR in a complementary regime to the from-scratch evaluation. On Office-31, 
our ViT variant remains within three percentage points of ViT-B on five of six single-source transfers. 
ViT-B is nevertheless higher across the pretrained single-source tables, indicating that JCAR's strongest 
empirical advantage currently arises when the representation is learned from scratch and that integrating 
joint contrastive pretraining with supervised initialization is an important direction for further improvement.
\begin{table*}[ht]
\caption{The Top-1 accuracy (\%) comparison of proposed method with others on the Office-31 dataset using pretrained model weights.}
\label{tab_compare_off31}
\vskip 0.15in
\begin{center}
\begin{small}
\begin{sc}
\begin{tabular}{lcccccc}
    \toprule
        Methods              &A $\to$ W      &D $\to$ W   &W $\to$ D  & A $\to$ D  &D $\to$ A  &W $\to$ A \\ \midrule
        ERM       & 74.8      &95.0      &93.9   &69.7   &43.2   &55.4  \\ 
         CDAN       & 61.6      &95.0      &97.0   &66.7   &53.3   &52.8 \\
        DANN      & 71.7      &95.0     &97.0   & 70.7  &52.8   &53.5 \\ 
        ViT-B      &95.0       &99.2      &99.9   &93.9   &81.8   &81.4 \\
        Ours (ViT)      &92.0       &98.9      & 99.5    & 82.0    &79.1     & 79.8 \\
    \bottomrule    
    \end{tabular}
\end{sc}
\end{small}
\end{center}
\vskip -0.1in
\end{table*}
\begin{table}[H]
\caption{The Top-1 accuracy (\%) comparison of proposed method with others on the Office-31 dataset using pretrained model weights.}
\label{tab_compare_off31M}
\begin{center}
\begin{small}
\begin{tabular}{lccc}
    \toprule
Methods     & \{D,W\} $\to$ A         &\{A, D \} $\to$ W   &\{A, W\}$\to$ D   \\ \midrule
        ERM     & 56.5         & 91.8     &93.9     \\
         CDAN   &  56.7        & 92.5     & 89.9    \\
        DANN    &56.5          &81.1      & 90.9    \\ 
         ViT-B   & 81.4       & 99.2    & 100.0    \\
        Ours (ViT)   &78.5        & 98.4    & 98.9    \\
    \bottomrule    
    \end{tabular}
\end{small}
\end{center}
\vskip -0.1in
\end{table}
\begin{table}[H]
\caption{The Top-1 accuracy (\%) comparison of proposed method with others on the Office-Home dataset using pretrained model weights.}
\label{tab_compare_offHom}
\begin{center}
\begin{small}
\begin{tabular}{lcccccc}
    \toprule
        Methods              &Ar $\to$ Rw      &Pr $\to$ Rw   &Cl $\to$ Pr  & Ar $\to$ Cl  &Pr $\to$ Ar  &Cl $\to$ Rw \\ \midrule
        ERM       & 66.5      & 63.5     &56.6   &39.5   & 41.2  &55.3 \\ 
         CDAN      & 64.3      & 65.4     &54.3   &36.4   &41.6   &50.2 \\
        DANN       & 65.0      &67.3      &57.4   &39.1   & 41.2  &51.7 \\ 
        ViT-B      & 86.8      & 87.8     &82.8   &62.3   &78.2   &85.6 \\
        Ours (ViT)      &84.5      & 82.9     & 76.2  & 58.5  & 67.0  &78.9  \\
    \bottomrule    
    \end{tabular}
\end{small}
\end{center}
\vskip -0.1in
\end{table}
\begin{table}[H]
\caption{The Top-1 accuracy (\%) comparison of proposed method with others on the Office-Home dataset using pretrained model weights.}
\label{tab_compare_offHomM}
\begin{center}
\begin{small}
\begin{tabular}{lcccc}
    \toprule
 \multirow{3}{*}{Methods}     & \{Cl,Pr, Rw\}   &\{Ar, Rw, Pr \}    &\{Ar, Cl, Pr \}  &\{Ar, Cl, Rw\}   \\ 
~ &  $\downarrow$  & $\downarrow$      & $\downarrow$   & $\downarrow$  \\ 
~ &  Ar  & Cl      & Rw   & Pr   \\ \midrule
        ERM     & 63.3    &46.8     &77.7      &76.1     \\
        CDAN  &67.4     &51.5     &77.2      &73.8     \\
        DANN   &62.7    &48.3      & 75.5     &73.3     \\ 
        ViT-B   & 83.7   &64.4    & 90.0    & 89.6    \\
        Ours (ViT)   &78.6    &71.6    & 88.5    &  88.0   \\
    \bottomrule    
    \end{tabular}
\end{small}
\end{center}
\vskip -0.1in
\end{table}
\begin{table}[H]
\caption{The Top-1 accuracy (\%) comparison of proposed method with others on the DomainNet dataset using pretrained model weights.}
\label{tab_compare_domain}
\begin{center}
\begin{small}
\begin{tabular}{lcccccc}
    \toprule
        Methods              &r $\to$ c      &r $\to$ p   &p $\to$ c  &c $\to$ s  &s $\to$ p  &r $\to$ s \\ \midrule
        ERM       &36.5       &40.7      &29.4   &32.1   & 27.5  &26.2 \\
         CDAN       &51.4       &50.1      &41.1   &40.1   &41.0   &40.7 \\
        DANN     & 48.9      & 48.8     &35.9   & 41.1  &40.2   & 39.8\\ 
        ViT-B      & 47.4      & 53.5     &37.8   &44.7   &47.1   &34.8 \\
        Ours (ViT)      & 44.2      &47.8      &  30.9 & 31.2  &33.7   &34.7 \\
    \bottomrule    
    \end{tabular}
\end{small}
\end{center}
\vskip -0.1in
\end{table}
\begin{figure}[H]
    \centering
    \begin{minipage}[b]{0.22\textwidth}
        \centering
        \includegraphics[width=\textwidth]{figures/tsne/res_office31/W--D.pdf}
    \end{minipage}
    \hfill
    \begin{minipage}[b]{0.22\textwidth}
        \centering
        \includegraphics[width=\textwidth]{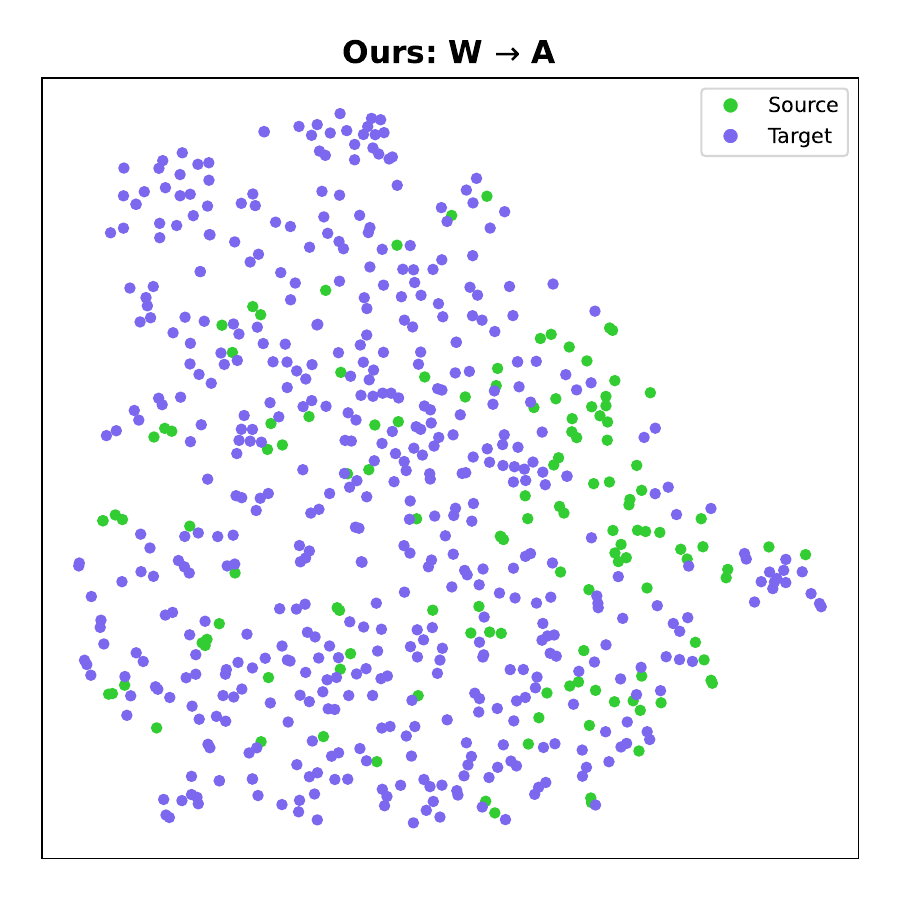}
    \end{minipage}
    \hfill
    \begin{minipage}[b]{0.22\textwidth}
        \centering
        \includegraphics[width=\textwidth]{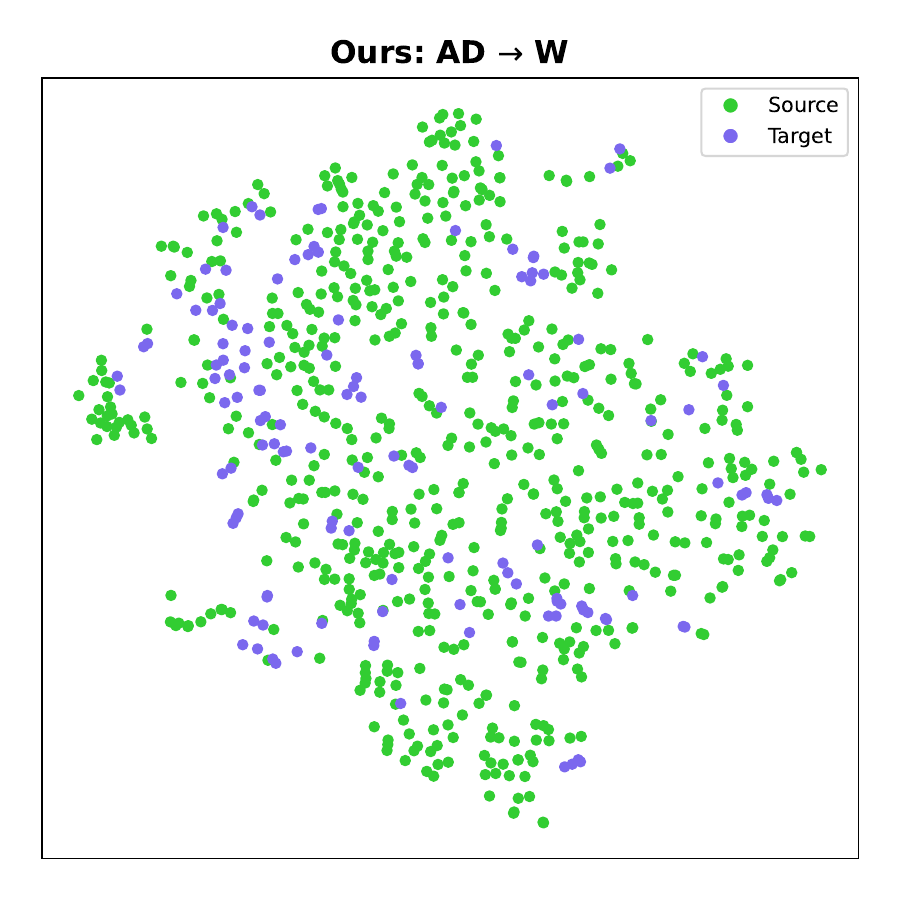}
    \end{minipage}
    \hfill
    \begin{minipage}[b]{0.22\textwidth}
        \centering
        \includegraphics[width=\textwidth]{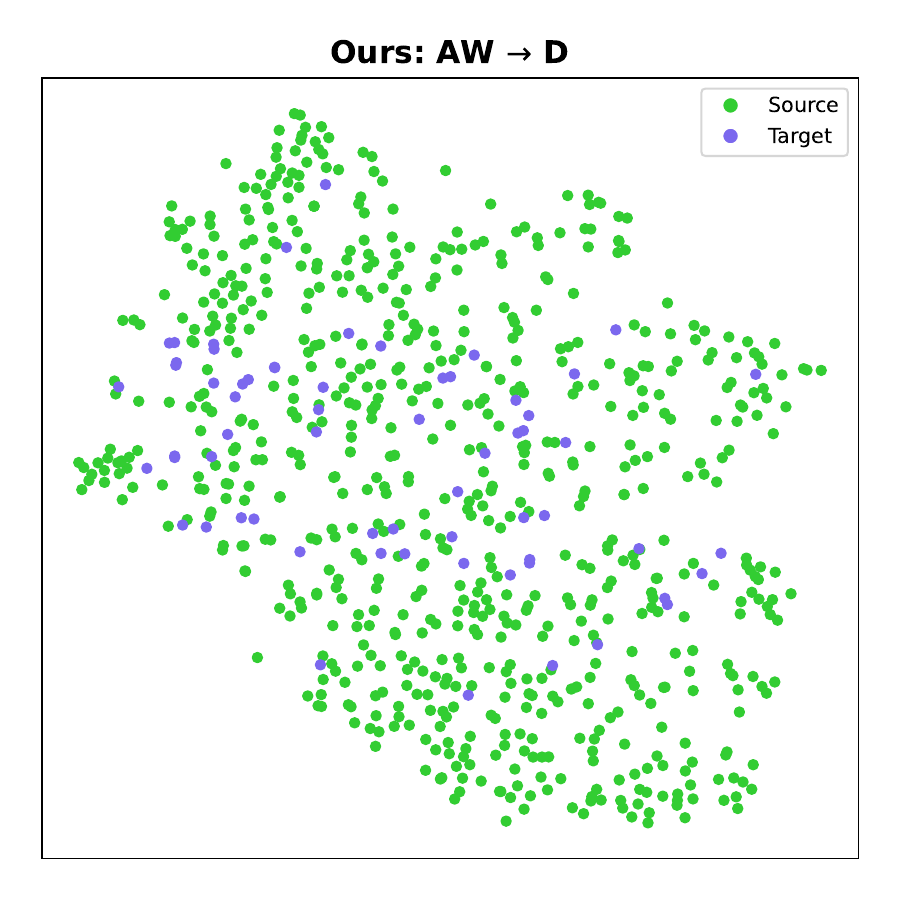}
    \end{minipage}
    \begin{minipage}[b]{0.22\textwidth}
        \centering
        \includegraphics[width=\textwidth]{figures/tsne/res_office31/DAN_W--D.pdf}
    \end{minipage}
    \hfill
    \begin{minipage}[b]{0.22\textwidth}
        \centering
        \includegraphics[width=\textwidth]{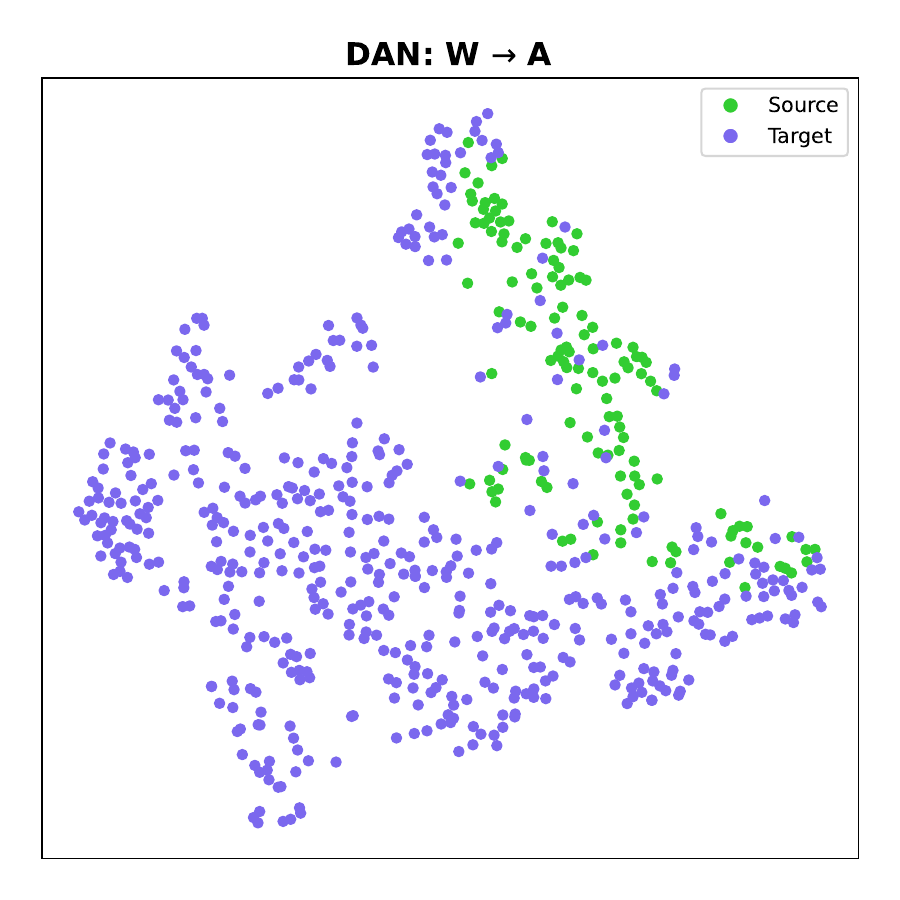}
    \end{minipage}
    \hfill
    \begin{minipage}[b]{0.22\textwidth}
        \centering
        \includegraphics[width=\textwidth]{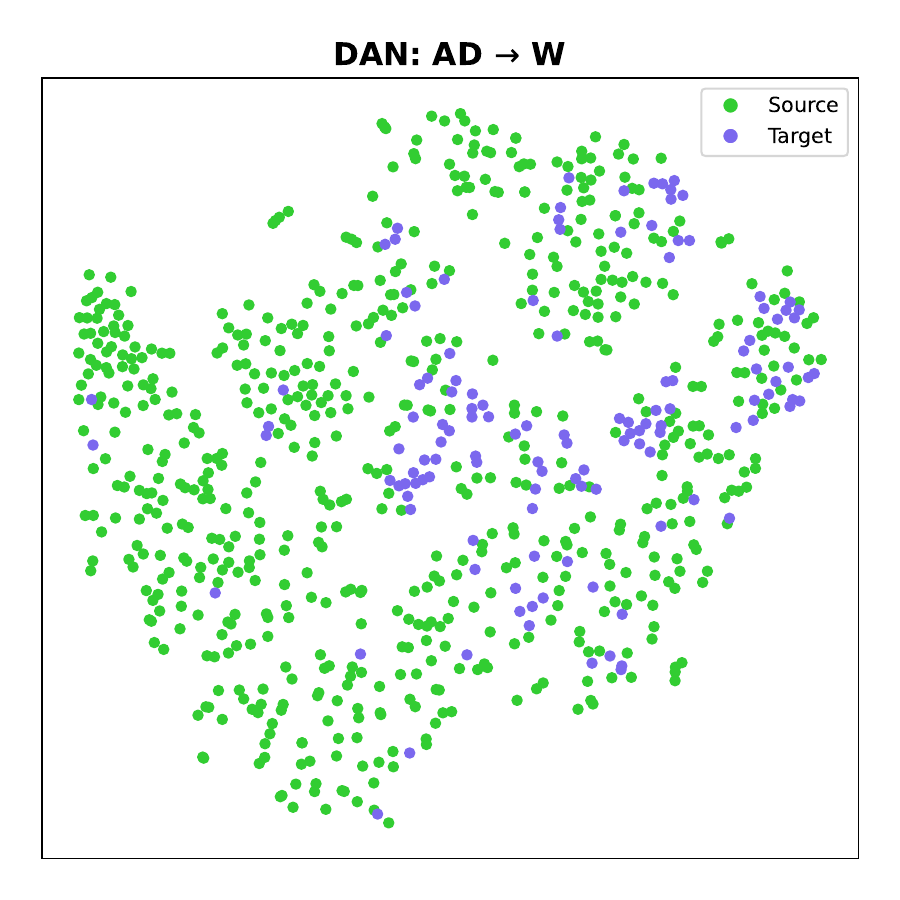}
    \end{minipage}
    \hfill
    \begin{minipage}[b]{0.22\textwidth}
        \centering
        \includegraphics[width=\textwidth]{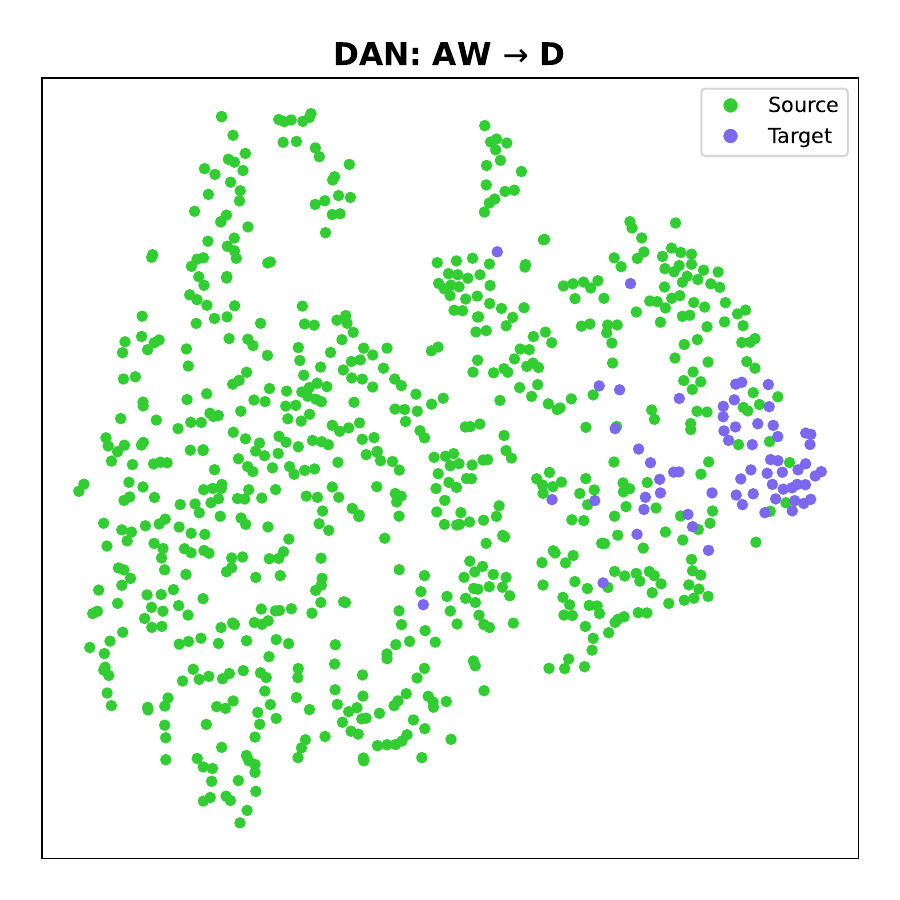}
    \end{minipage}
    \begin{minipage}[b]{0.22\textwidth}
        \centering
        \includegraphics[width=\textwidth]{figures/tsne/res_office31/ERM_W--D.pdf}
    \end{minipage}
    \hfill
    \begin{minipage}[b]{0.22\textwidth}
        \centering
        \includegraphics[width=\textwidth]{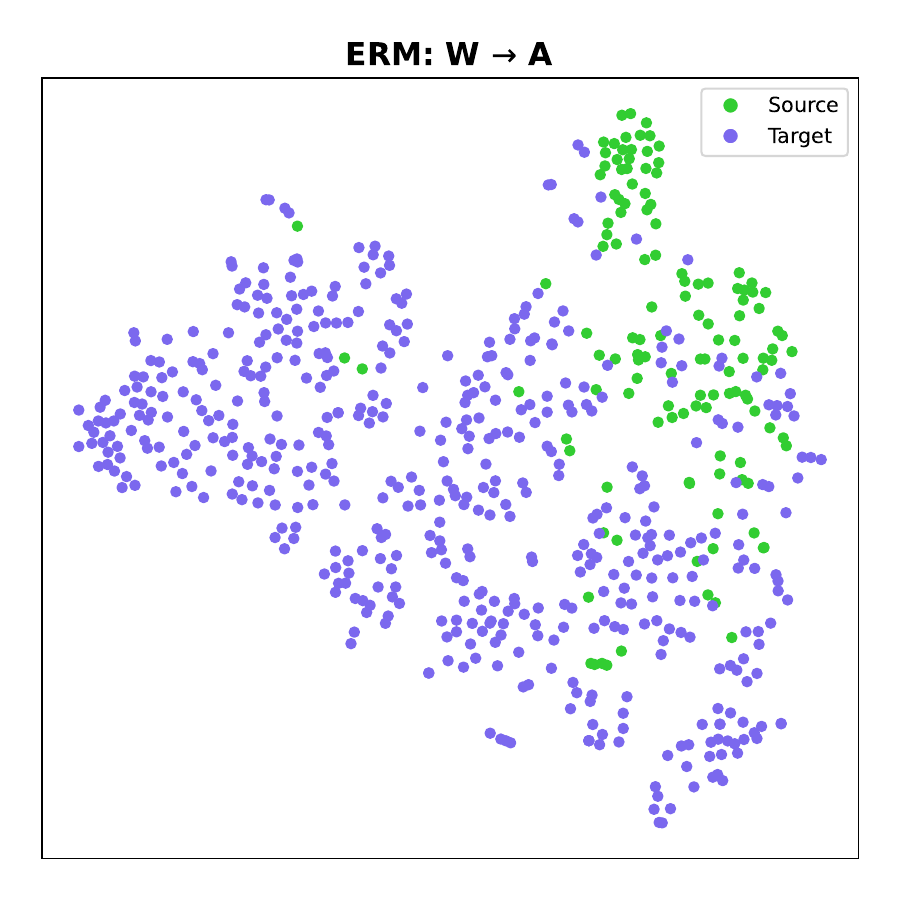}
    \end{minipage}
    \hfill
    \begin{minipage}[b]{0.22\textwidth}
        \centering
        \includegraphics[width=\textwidth]{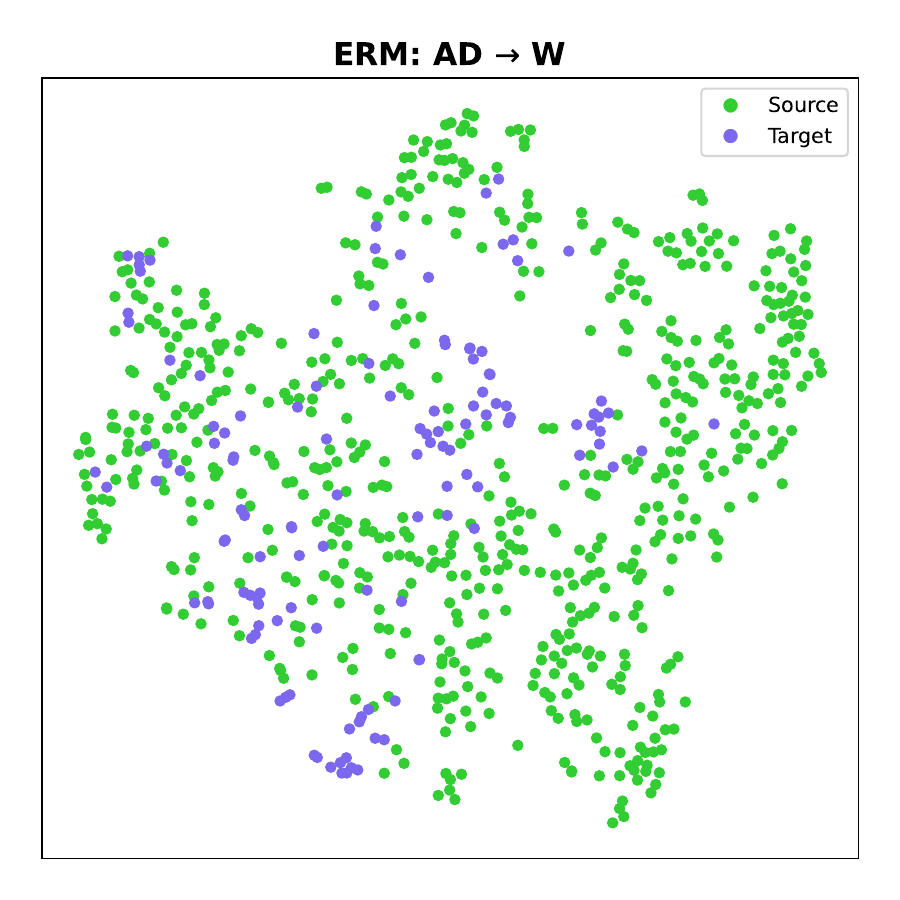}
    \end{minipage}
    \hfill
    \begin{minipage}[b]{0.22\textwidth}
        \centering
        \includegraphics[width=\textwidth]{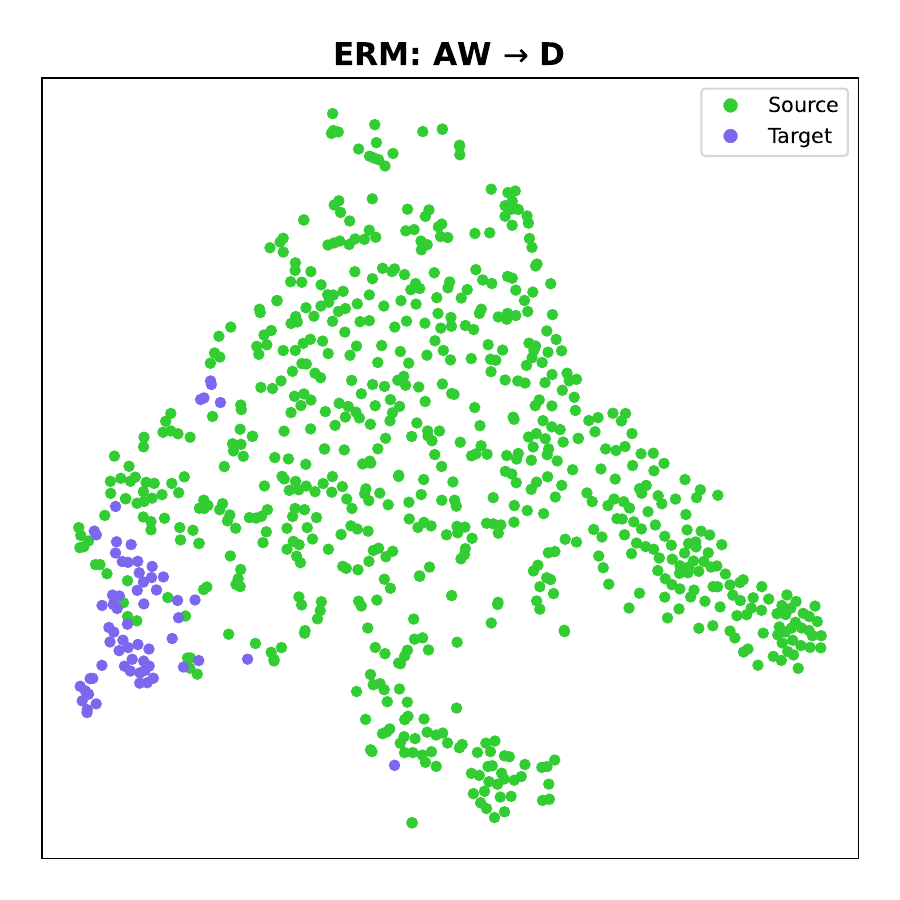}
    \end{minipage}
    \begin{minipage}[b]{0.22\textwidth}
        \centering
        \includegraphics[width=\textwidth]{figures/tsne/ViTgrl_Office31/ViTgrl_W--D.pdf}
    \end{minipage}
    \hfill
    \begin{minipage}[b]{0.22\textwidth}
        \centering
        \includegraphics[width=\textwidth]{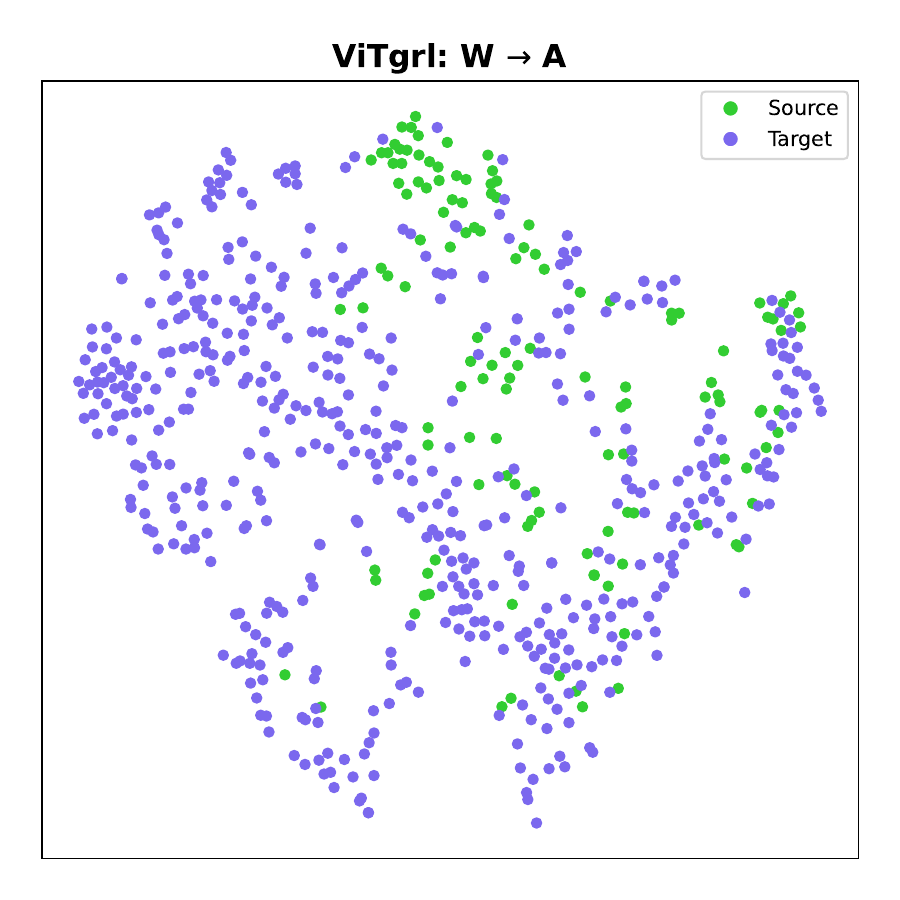}
    \end{minipage}
    \hfill
    \begin{minipage}[b]{0.22\textwidth}
        \centering
        \includegraphics[width=\textwidth]{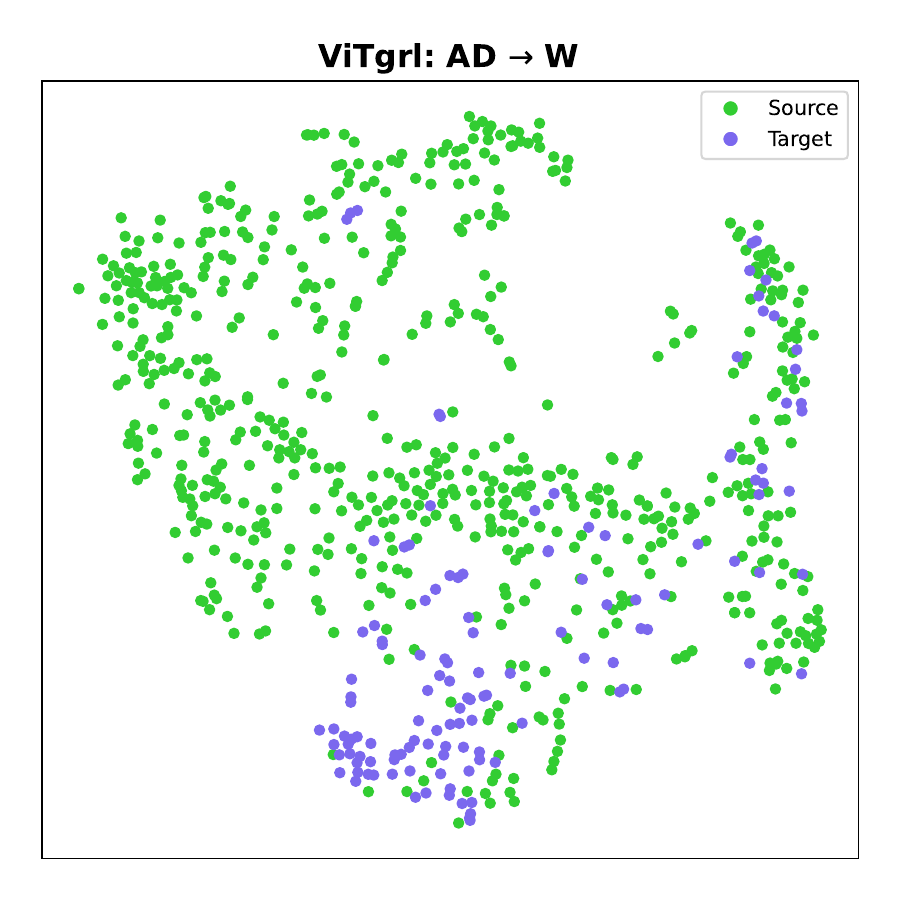}
    \end{minipage}
    \hfill
    \begin{minipage}[b]{0.22\textwidth}
        \centering
        \includegraphics[width=\textwidth]{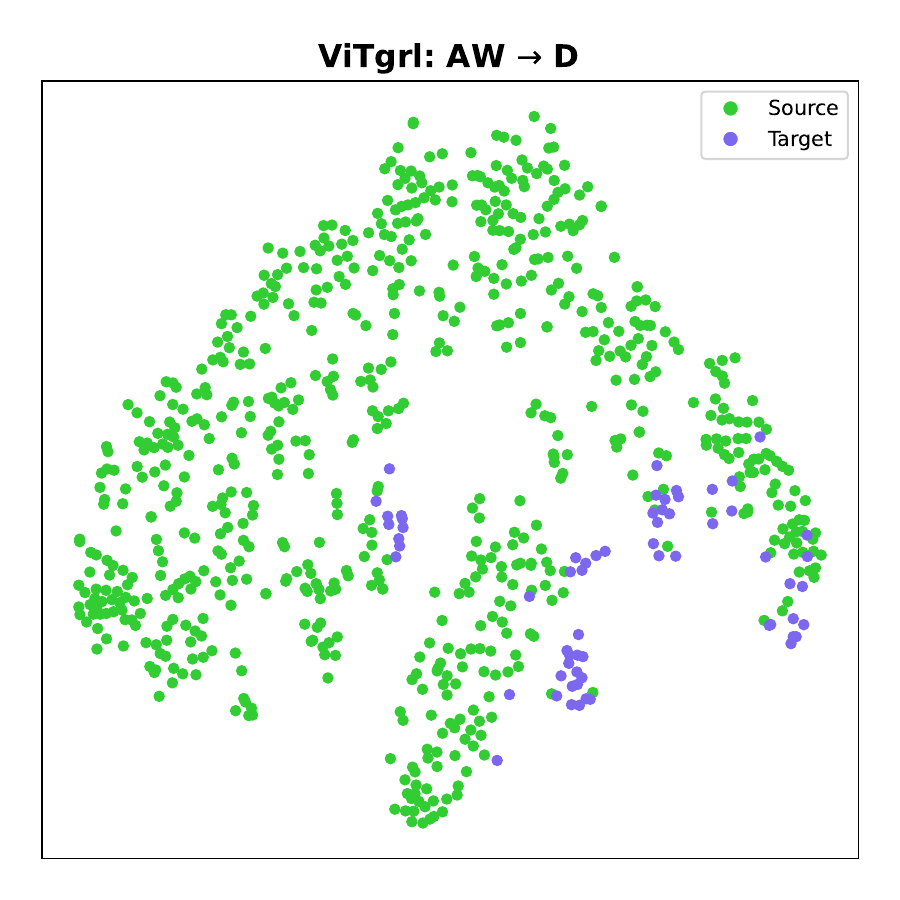}
    \end{minipage}
    \caption{The t-SNE plots for the representation of the source and target domains in Office-31, the top row is our method, and other rows are the baselines. The first two columns correspond to single-source domain scenarios, and the remaining two correspond to multi-source domains.}
    \label{tsen-office31}
\end{figure}
\begin{figure}[H]
    \centering
    \begin{minipage}[b]{0.22\textwidth}
        \centering
        \includegraphics[width=\textwidth]{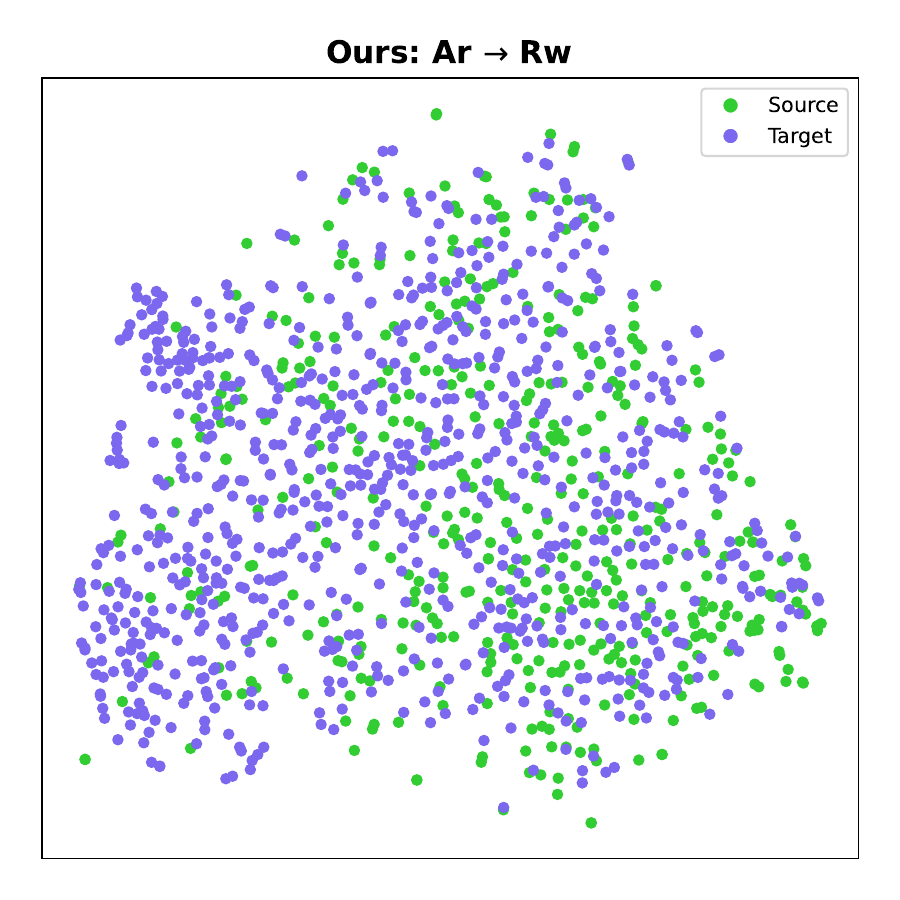}
    \end{minipage}
    \hfill
    \begin{minipage}[b]{0.22\textwidth}
        \centering
        \includegraphics[width=\textwidth]{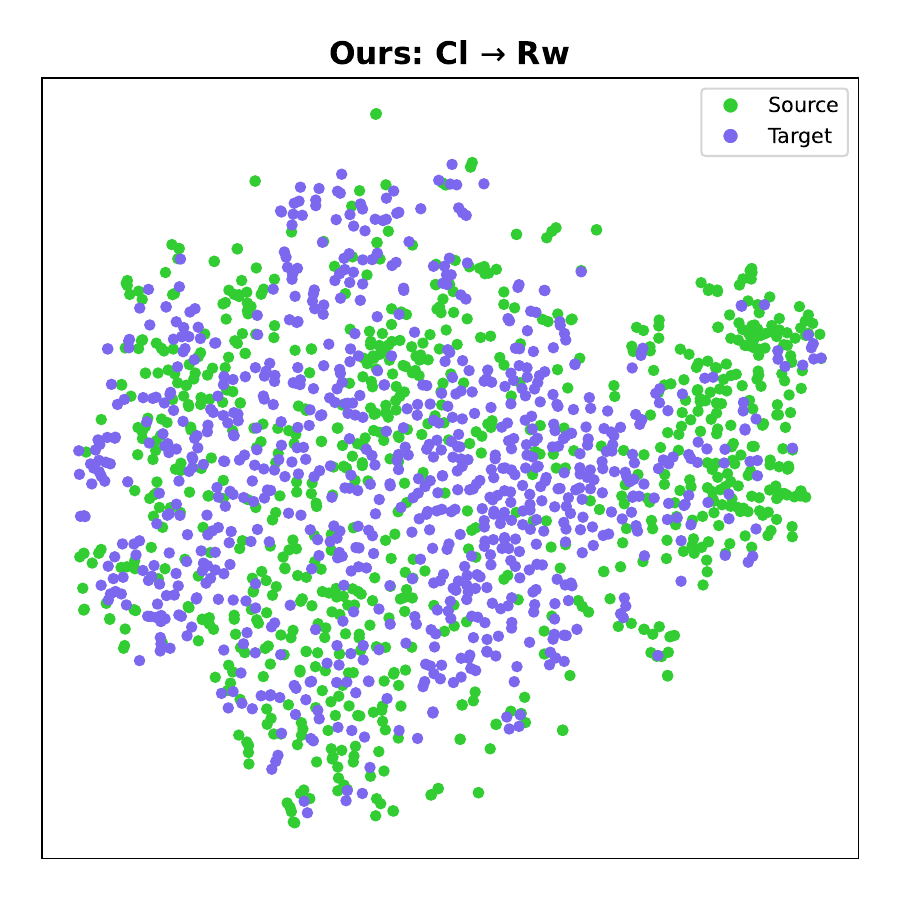}
    \end{minipage}
    \hfill
    \begin{minipage}[b]{0.22\textwidth}
        \centering
        \includegraphics[width=\textwidth]{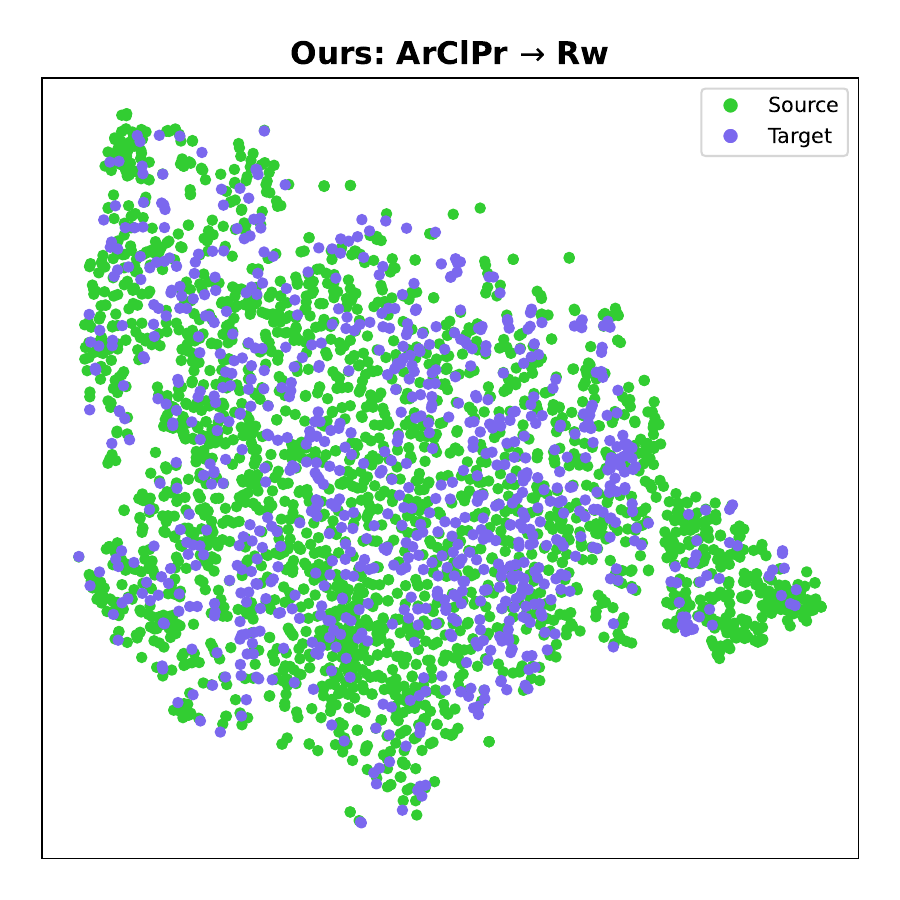}
    \end{minipage}
    \hfill
    \begin{minipage}[b]{0.22\textwidth}
        \centering
        \includegraphics[width=\textwidth]{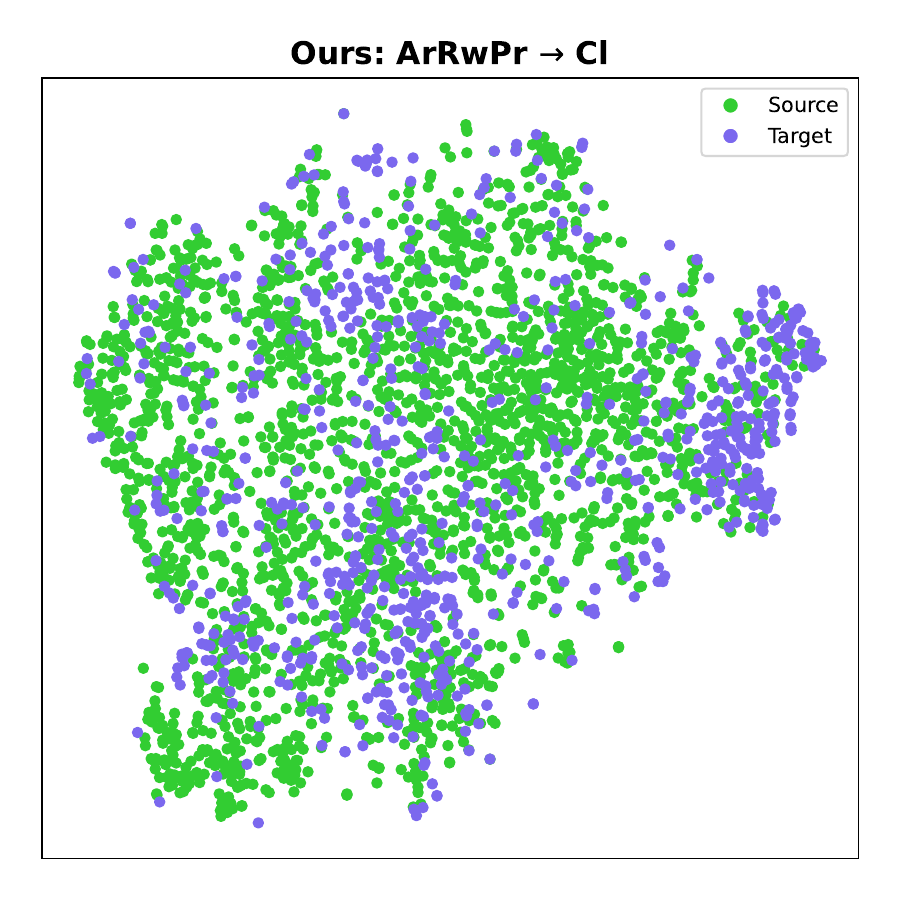}
    \end{minipage}
    \begin{minipage}[b]{0.22\textwidth}
        \centering
        \includegraphics[width=\textwidth]{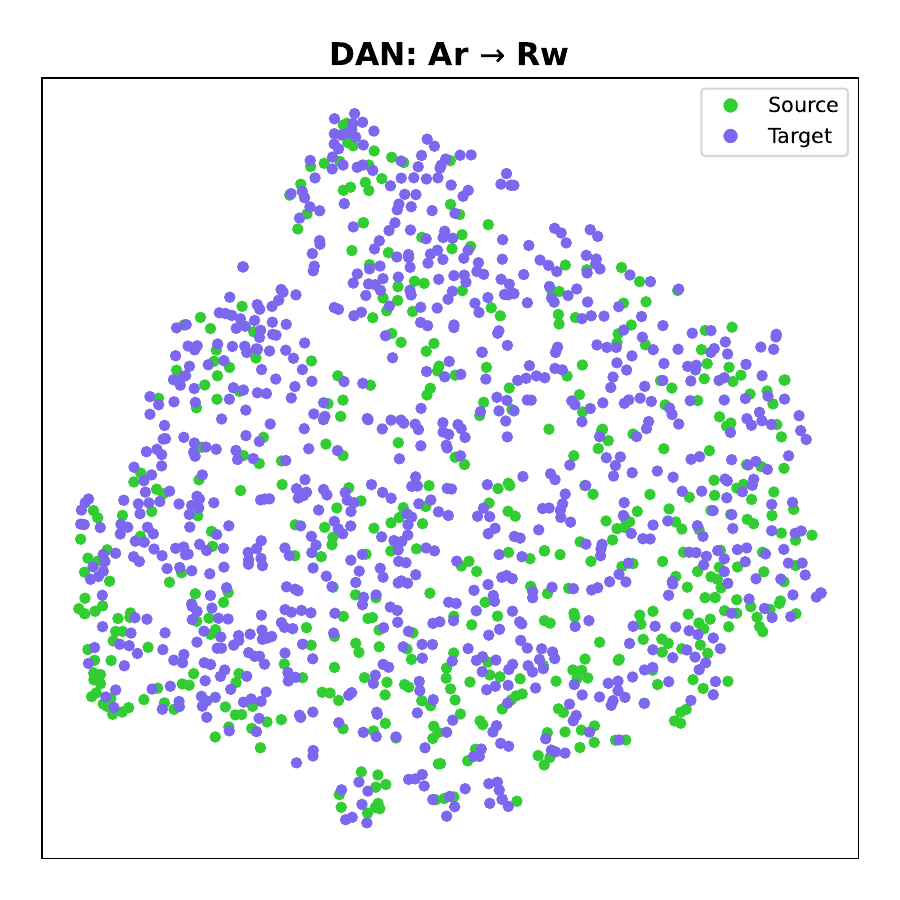}
    \end{minipage}
    \hfill
    \begin{minipage}[b]{0.22\textwidth}
        \centering
        \includegraphics[width=\textwidth]{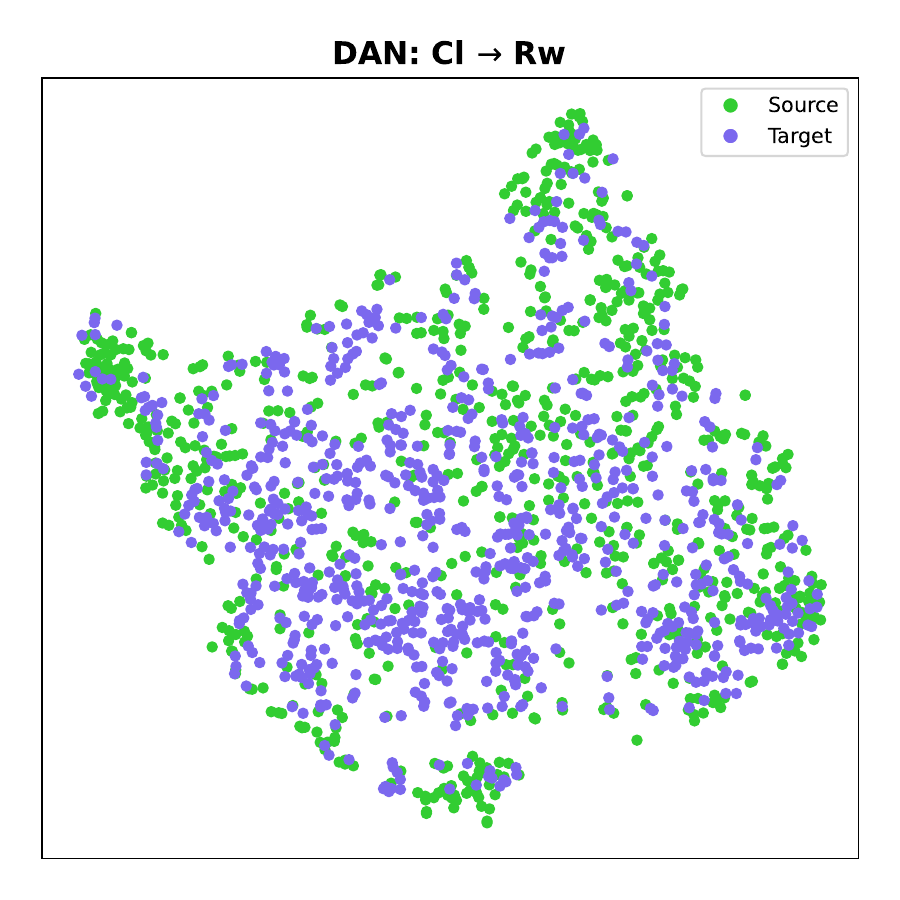}
    \end{minipage}
    \hfill
    \begin{minipage}[b]{0.22\textwidth}
        \centering
        \includegraphics[width=\textwidth]{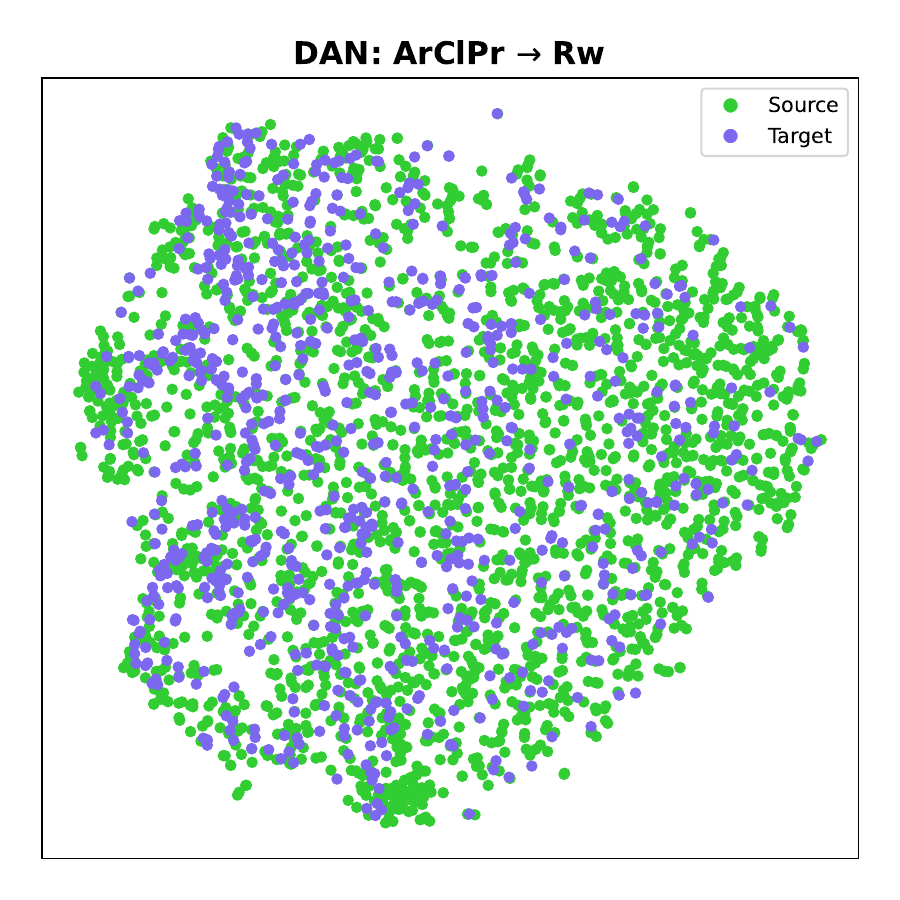}
    \end{minipage}
    \hfill
    \begin{minipage}[b]{0.22\textwidth}
        \centering
        \includegraphics[width=\textwidth]{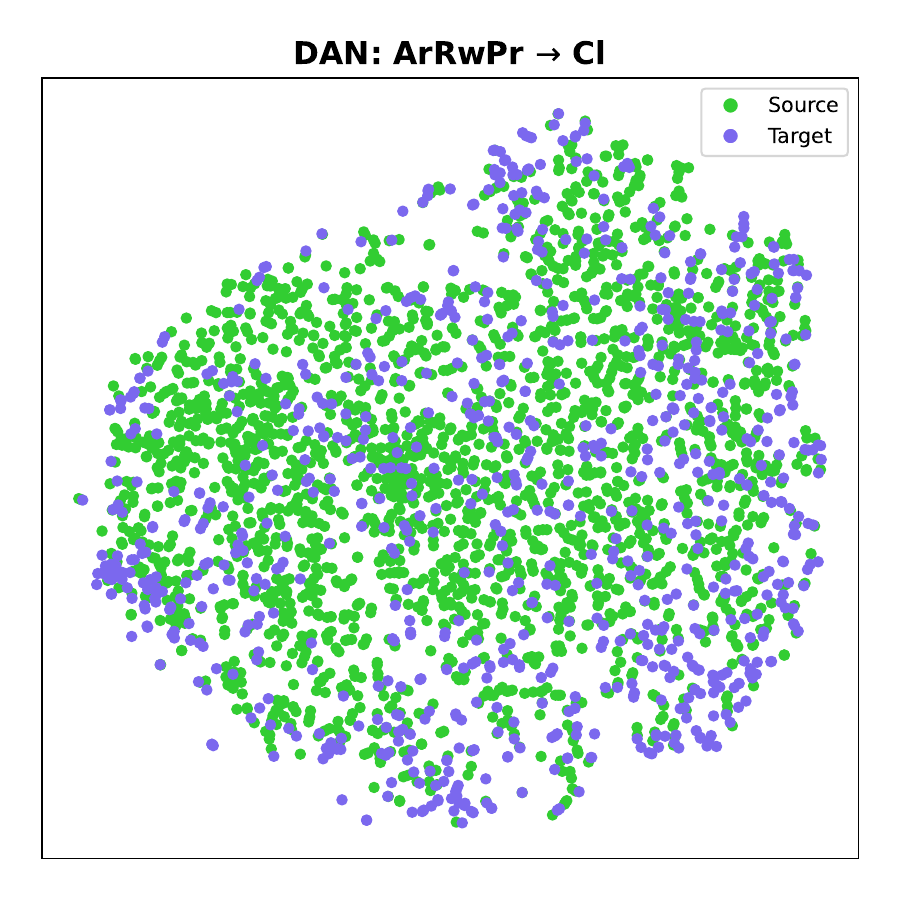}
    \end{minipage}
    \begin{minipage}[b]{0.22\textwidth}
        \centering
        \includegraphics[width=\textwidth]{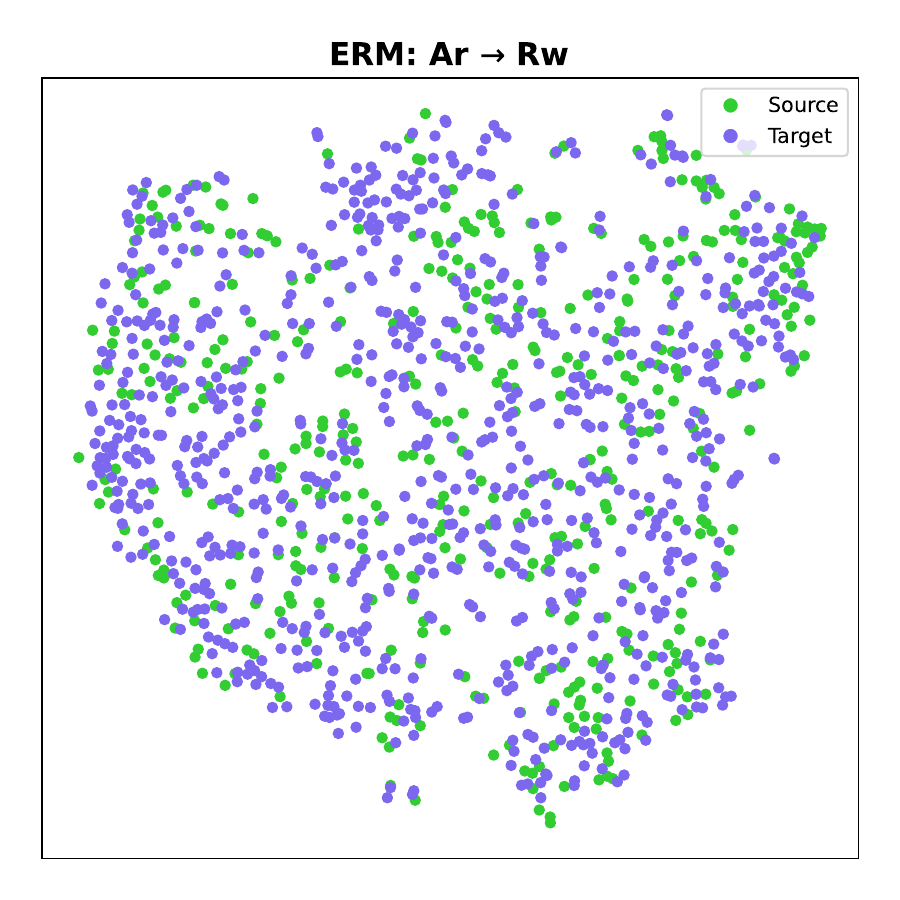}
    \end{minipage}
    \hfill
    \begin{minipage}[b]{0.22\textwidth}
        \centering
        \includegraphics[width=\textwidth]{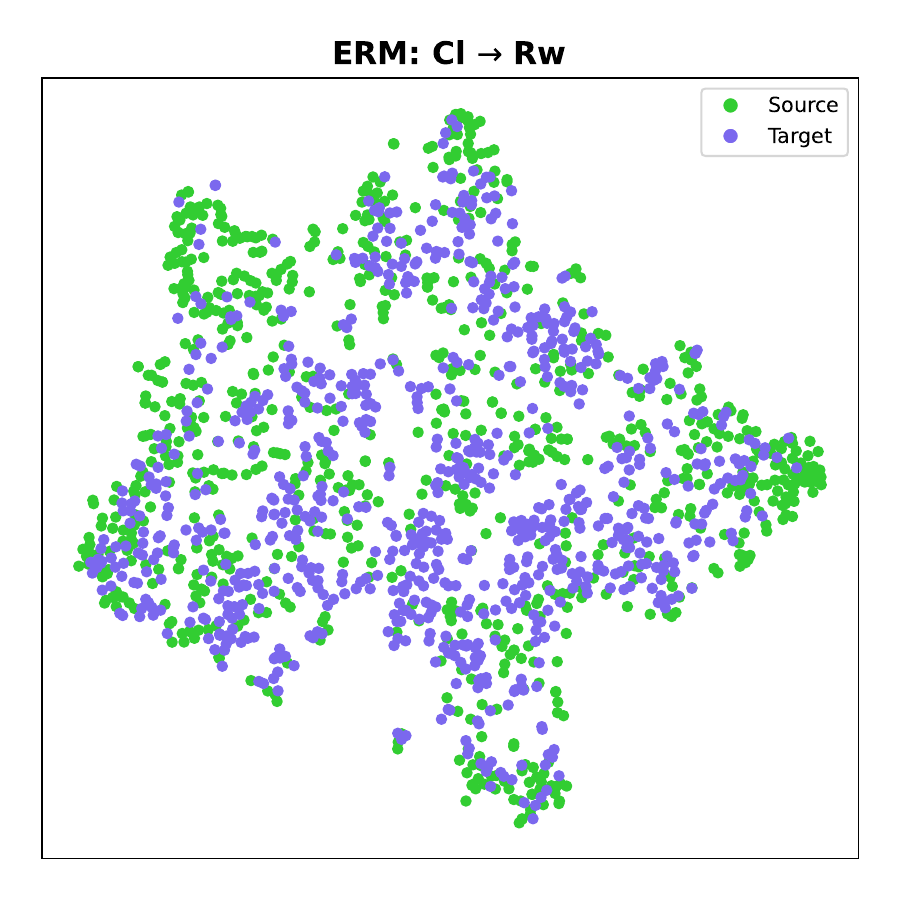}
    \end{minipage}
    \hfill
    \begin{minipage}[b]{0.22\textwidth}
        \centering
        \includegraphics[width=\textwidth]{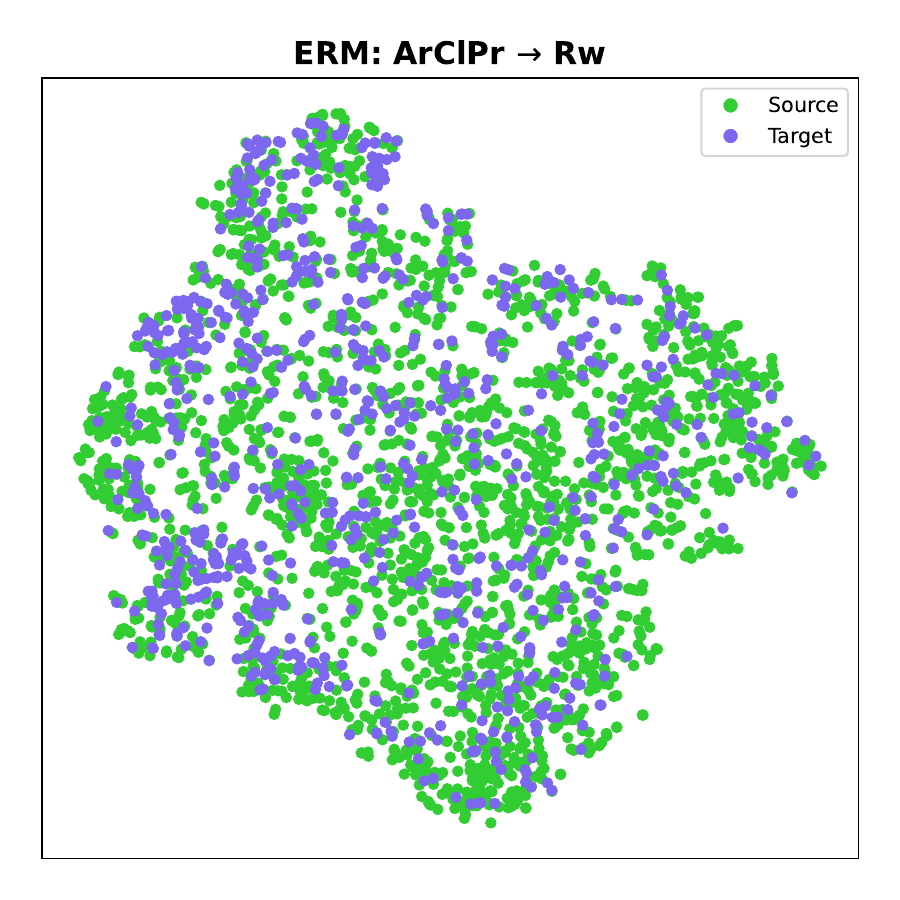}
    \end{minipage}
    \hfill
    \begin{minipage}[b]{0.22\textwidth}
        \centering
        \includegraphics[width=\textwidth]{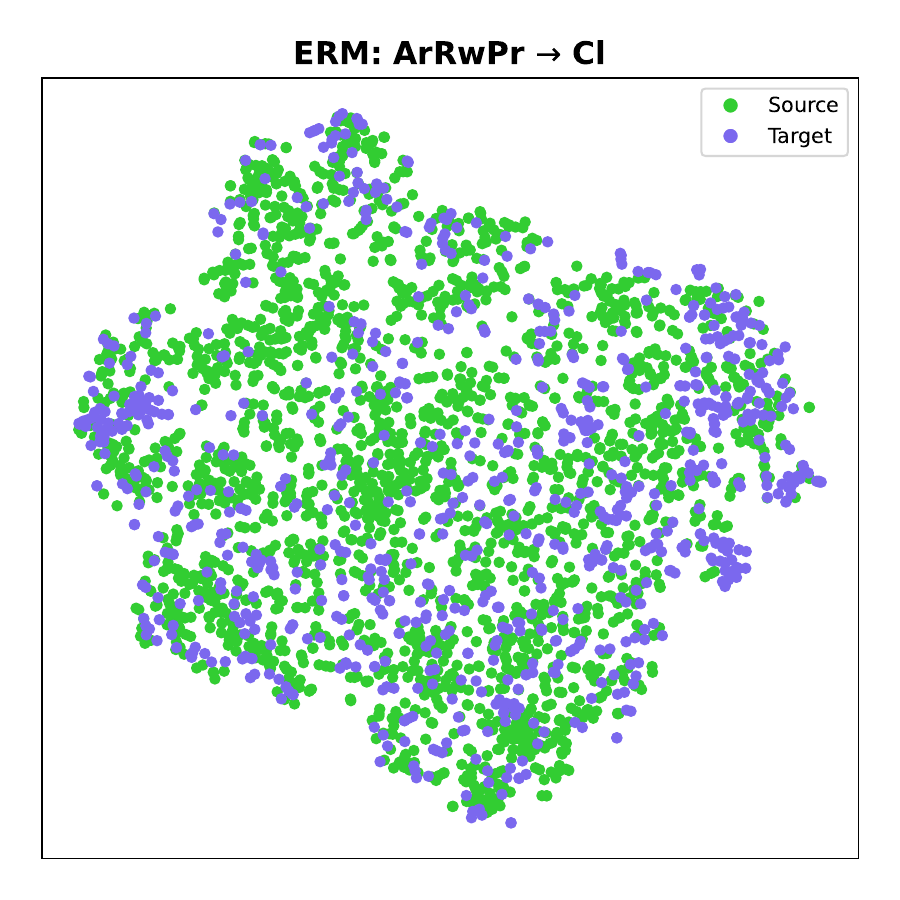}
    \end{minipage}
    \begin{minipage}[b]{0.22\textwidth}
        \centering
        \includegraphics[width=\textwidth]{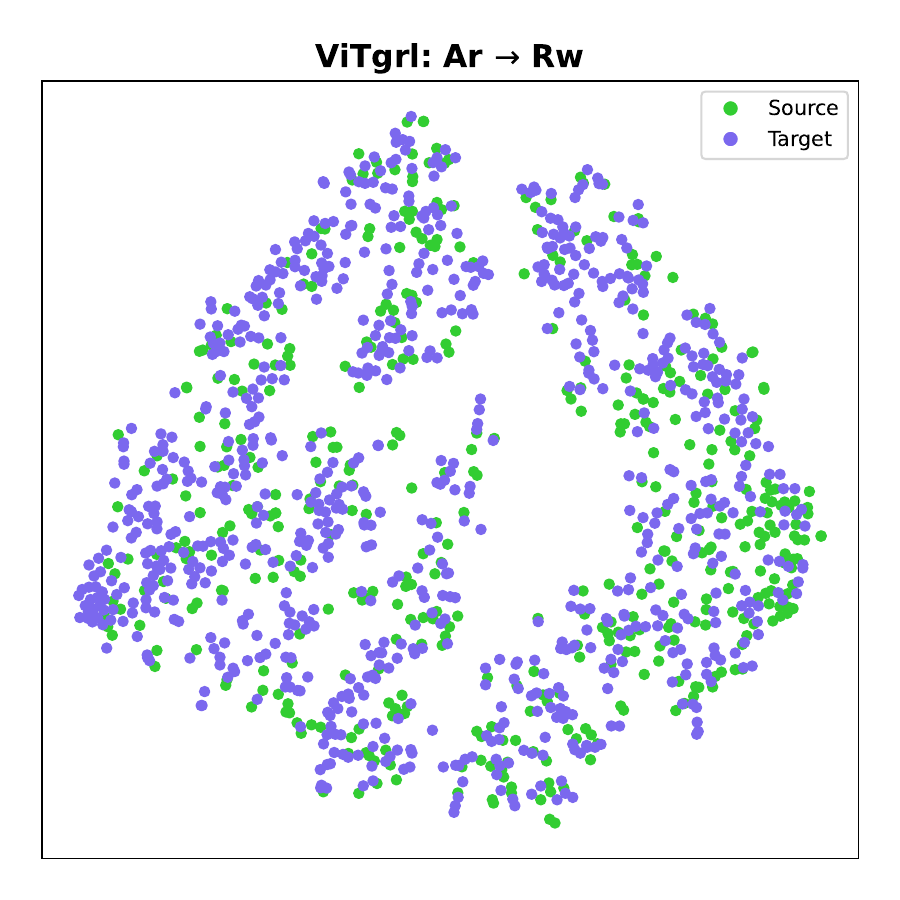}
    \end{minipage}
    \hfill
    \begin{minipage}[b]{0.22\textwidth}
        \centering
        \includegraphics[width=\textwidth]{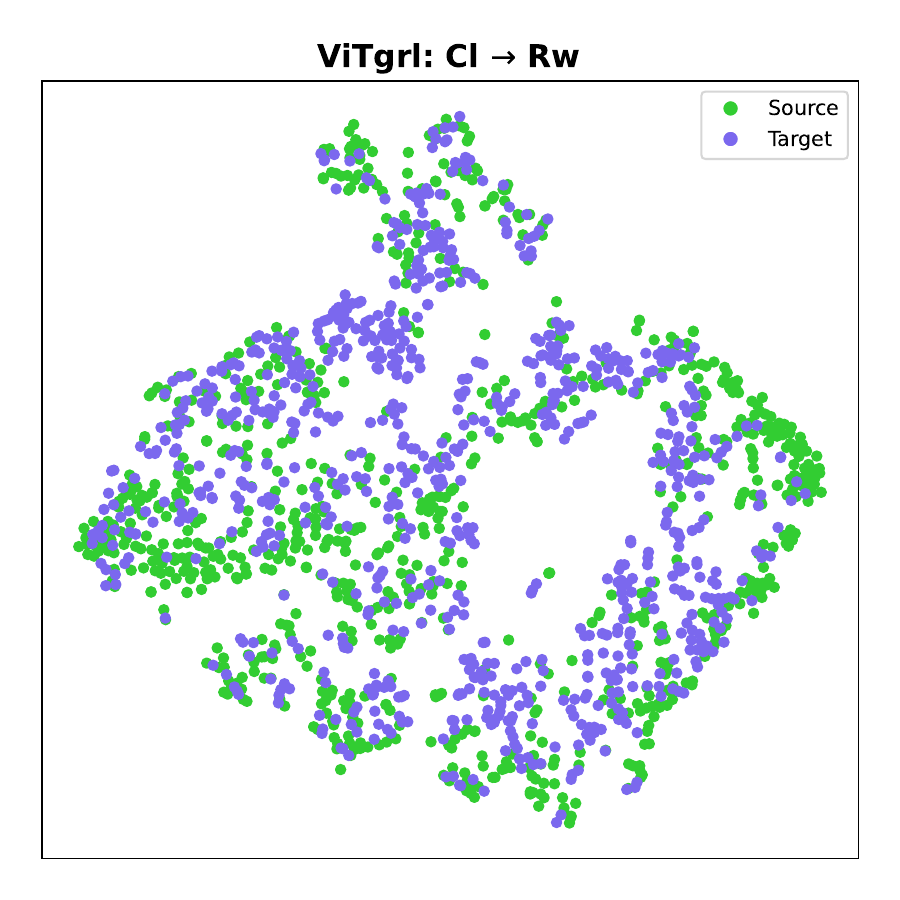}
    \end{minipage}
    \hfill
    \begin{minipage}[b]{0.22\textwidth}
        \centering
        \includegraphics[width=\textwidth]{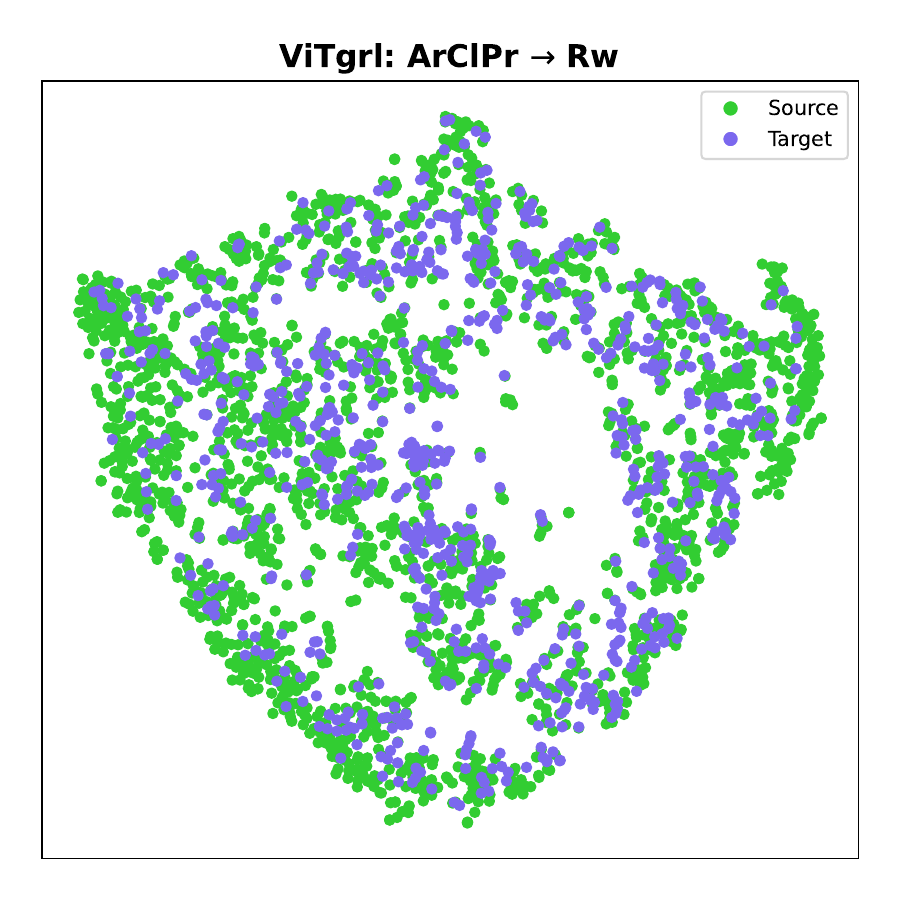}
    \end{minipage}
    \hfill
    \begin{minipage}[b]{0.22\textwidth}
        \centering
        \includegraphics[width=\textwidth]{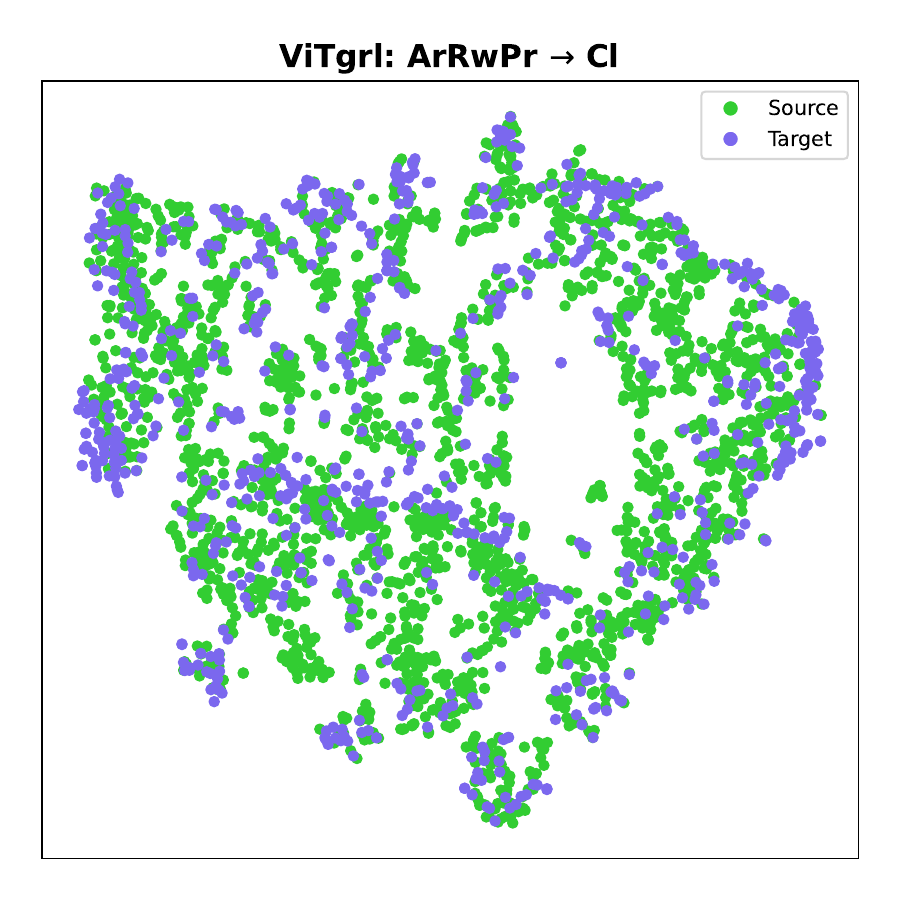}
    \end{minipage}
    \caption{The t-SNE plots for the representation of the source and target domains in Office-Home, the top row is our method, and other rows are the baselines. The first two columns correspond to single-source domain scenarios, and the remaining two correspond to multi-source domains.}
    \label{tsen-officeHome}
\end{figure}
\end{document}